\documentclass{article}

\usepackage{iclr2027_conference,times}

\usepackage[T1]{fontenc}
\usepackage[utf8]{inputenc}
\usepackage[final]{microtype}
\usepackage[dvipsnames]{xcolor}
\usepackage{amsmath,amssymb,amsfonts,amsthm,mathtools}
\usepackage{graphicx}
\usepackage{subcaption}
\usepackage{booktabs}
\usepackage{multirow}
\usepackage{makecell}
\usepackage{array}
\usepackage{enumitem}
\usepackage{float}
\usepackage{placeins}
\usepackage[normalem]{ulem}
\usepackage{algorithm}
\usepackage{tabularx}
\usepackage{algpseudocode}
\usepackage{tikz}
\usetikzlibrary{
  arrows.meta,
  calc,
  decorations.pathreplacing,
  backgrounds,
  intersections
}
\usepackage{capt-of}
\usepackage[colorlinks=true,linkcolor=blue,citecolor=blue,urlcolor=blue]{hyperref}
\usepackage[nameinlink,noabbrev]{cleveref}
\usepackage{wrapfig}
\usepackage{stmaryrd}
\newcolumntype{Y}{>{\raggedright\arraybackslash}X}
\usepackage{eso-pic}

\definecolor{surfblue}{RGB}{20,86,196}
\definecolor{spherefill}{RGB}{120,168,240}
\definecolor{hitgreen}{RGB}{22,148,58}
\definecolor{measred}{RGB}{206,32,32}
\definecolor{sensordk}{RGB}{20,20,20}
\definecolor{raygray}{RGB}{120,120,120}
\definecolor{acqraygray}{RGB}{80,80,80}
\definecolor{epsviolet}{RGB}{126,47,142}
\definecolor{certspace}{RGB}{248,226,188}
\theoremstyle{plain}
\newtheorem{proposition}{Proposition}

\theoremstyle{definition}

\newtheorem{remark}{Remark}

\newcommand{\norm}[1]{\left\lVert #1\right\rVert}

\newcommand{\Lip}{\operatorname{Lip}}

\newif\ifdraft
\draftfalse

\title{Surface Reconstruction from Range Scans\\
Under Visibility Constraints}

\author{\normalfont
\makebox[\textwidth][c]{%
  \begin{tabular}[t]{c}
    \textbf{Gabriel Lenain}\\
    Inria Paris \\
    École normale supérieure - PSL\\
    \texttt{gabriel.lenain@inria.fr}
  \end{tabular}\hfill
  \begin{tabular}[t]{c}
    \textbf{Edmond Boyer}\\
    Univ.\ Grenoble Alpes, Inria\\
    CNRS, Grenoble INP, LJK\\
    \texttt{edmond.boyer@inria.fr}
  \end{tabular}}\\[1.5em]
\makebox[\textwidth][c]{%
  \begin{tabular}[t]{c}
    \textbf{Jean Ponce}\\
    École normale supérieure - PSL\\
    New York University\\
    \texttt{jean.ponce@ens.fr}
  \end{tabular}}
}

\date{}
\iclrfinalcopy

\begin{document}
\maketitle
\AddToShipoutPictureFG*{
  \AtPageLowerLeft{
    \put(\LenToUnit{3.78cm},\LenToUnit{0.8cm}){%
      \footnotesize Project page:
      \url{https://gabriellenain.github.io/visibility-recon/}
    }
  }
}
\lhead{}
\fancyhead[L]{Preprint}

\begin{abstract}
Surface reconstruction from range scans is challenging in the presence of noise and outliers, as fidelity to observations must be balanced against sensitivity to measurement errors. Yet, most neural approaches discard the sensor positions and corresponding visibility information, compensating for this loss with geometric priors or regularization terms. We instead formulate surface reconstruction as robust regression of the predicted first surface intersection along each acquisition ray against its measured depth, which penalizes predicted surface intersections within the segment between the sensor and the observed point. We represent the surface as the zero set of a 1-Lipschitz multi-layer perceptron, allowing the use of linearly convergent sphere tracing to compute the first hit, and use a robust Huber loss for regression. Our formulation requires neither oriented normals, sampled off-surface points, nor an eikonal loss. Comparisons with the state of the art on synthetic and real data with varying levels of noise and outlier contamination demonstrate competitive performance on clean data and much improved fidelity and robustness under even moderate corruption. Our analysis further confirms a trade-off between fidelity and robustness: greater fitting capacity improves data fidelity in the clean case but increases sensitivity to the measurement errors inherent in real acquisitions, with a sweet spot typically emerging at intermediate capacity. 
\end{abstract}
\vspace{-5mm}
\begin{figure}[H]
    \centering
    \includegraphics[width=0.88\linewidth]{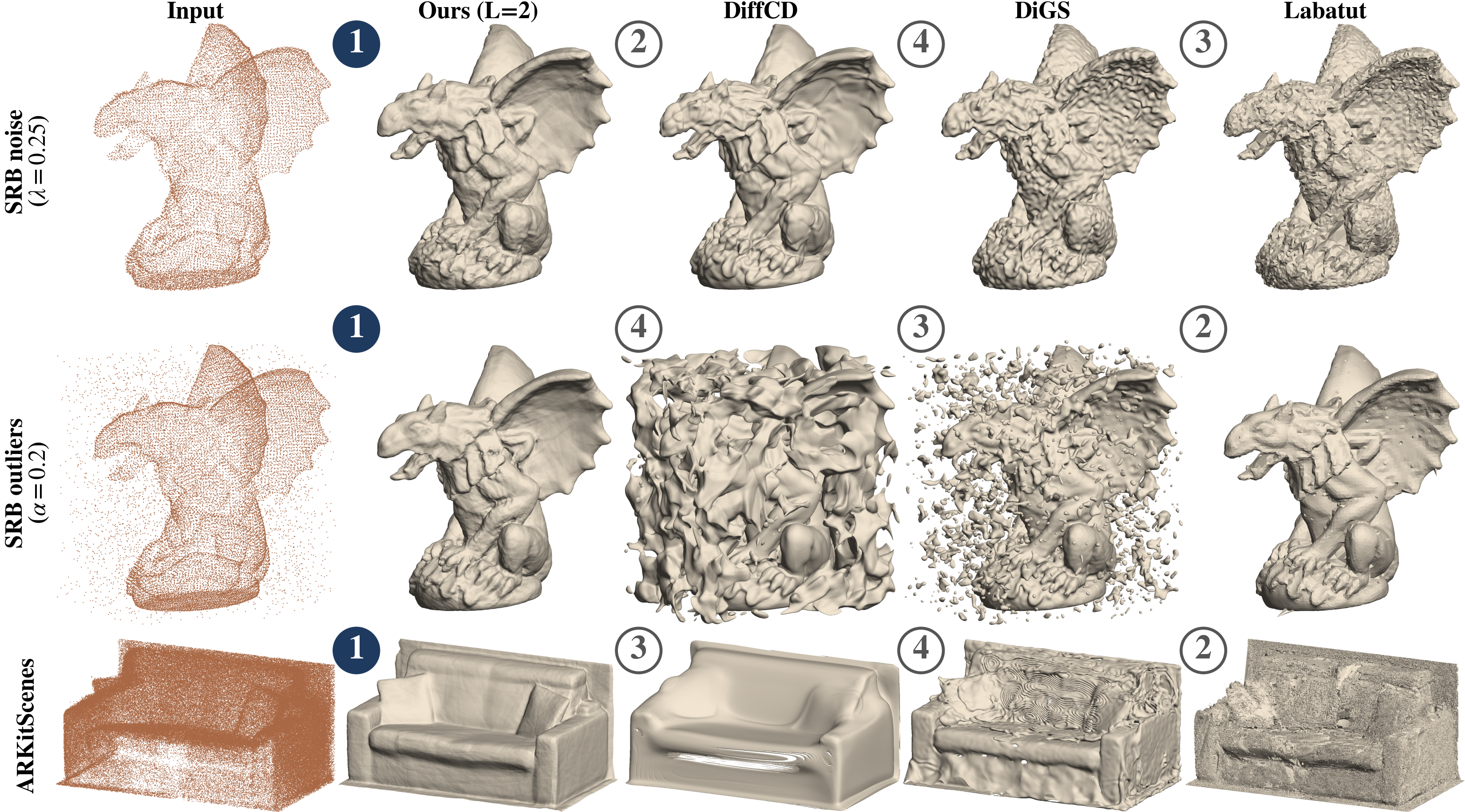}
    \caption{Surface reconstruction for synthetic data corrupted by (top) noise and
(center) outliers, and for (bottom) real LiDAR data. A single integer, the positional encoding level $L$, controls the
fidelity-robustness trade-off. We use $L=2$, a good compromise in our experiments. Circled
numbers rank methods by Chamfer distance for synthetic data and mean absolute depth error for real data.
}
    \label{fig:teaser}
\end{figure}
\vspace{-8mm}

\section{Introduction}

Reconstructing a continuous surface from range scans is a classical
inverse problem in geometry processing and 3D vision. Surface samples constrain
the unknown geometry only locally, leaving its topology, inside--outside
orientation, and behavior between or beyond the observations underdetermined.
Classical methods reduce these ambiguities through implicit surface reconstruction~\citep{taubin1994parameterized,hoppe1992surface}, geometric priors or acquisition-aware occupancy reasoning
\citep{faugeras1990representing,kazhdan2006poisson,kazhdan2013screened,labatut2009robust}, while neural
methods either optimize an implicit field independently for each point cloud
\citep{atzmon2020sal,gropp2020igr,ma2021neuralpull,benshabat2022digs} or learn
reconstruction priors across collections of shapes
\citep{park2019deepsdf,boulch2022poco}.
\\
However, most neural implicit methods discard the visibility information provided by the acquisition process. Under a calibrated first-return model (e.g., LiDAR), each
surface measurement is associated with a sensor, and the open segment between them is known
to contain no surface intersection. 
\\
Existing visibility-aware methods account for visibility through discrete volumetric reasoning~\citep{faugeras1990representing,labatut2009robust,jancosek2014exploiting} or auxiliary ray-based supervision~\citep{sulzer2022visibility,dai2022neuralpoisson}. In contrast, we explicitly enforce visibility by directly constraining the first surface intersection along each acquisition ray to match the measured depth.
 To find this intersection, we represent the reconstructed surface as the zero level set of a \(1\)-Lipschitz MLP \(f_\theta\), which allows us to use the sphere tracing algorithm \citep{hart1996sphere}, known to converge
to the first zero of \(f_\theta\) along the sensor--point ray at a linear rate. We then minimize the
discrepancy between these predicted depths and the measured ones. This approach implicitly ensures that the line segment between the sensor and the measured point is free of surface fragments.
\\
We evaluate the method on the Surface Reconstruction Benchmark (SRB) and ShapeNet under clean and corrupted acquisitions, as well as on real LiDAR measurements from ARKitScenes \citep{dehghan2021arkitscenes} and KITTI~\citep{geiger2013vision}, and on MVS reconstructions~\citep{aanaes2016large}. We achieve competitive results on clean data  and improved performance under noise and outliers.

Our contributions are:
\begin{enumerate}[leftmargin=*,itemsep=0pt,topsep=2pt]
    \item To the best of our knowledge, the first method to fit an implicit surface to range scans from scratch by directly regressing the first surface intersection along each acquisition ray against the measured depth.
    \item A principled single-loss optimization formulation requiring neither auxiliary geometric supervision nor regularization losses, while retaining competitive results on clean data
and improving robustness under noise and outliers.
    \item An empirical characterization of the fidelity--robustness trade-off, with the positional encoding
level $L$ providing a simple control over sensitivity to corruption.
\end{enumerate}

\section{Related Work}
\label{sec:related-work}

\paragraph{Point-cloud surface reconstruction.}
Early methods fit implicit functions or signed-distance fields to point samples \citep{taubin1994parameterized,hoppe1992surface}. Poisson reconstruction estimates an indicator function from oriented point samples, while Screened Poisson (SPSR) adds interpolation constraints to improve geometric fidelity; both rely on oriented normals, which can be unreliable under noisy or non-uniform sampling~\citep{kazhdan2006poisson,kazhdan2013screened}.
More recently, neural implicit methods have become widely used for surface reconstruction as they provide continuous, resolution-independent representations that handle complex geometry and arbitrary topology. These methods differ mainly in how the field is constrained. A first family learns reconstruction priors across collections of shapes, using a latent-conditioned SDF in DeepSDF, point-conditioned local representations in POCO, or a differentiable Poisson formulation in Shape As Points \citep{park2019deepsdf,boulch2022poco,peng2021sap}. In contrast, direct fitting methods optimize a neural field for each input point cloud. IGR shows that eikonal regularization provides an implicit bias toward plausible surfaces \citep{gropp2020igr}, while sign agnostic learning (SAL) extends this setting to unoriented inputs \citep{atzmon2020sal}. Subsequent methods introduce more explicit geometric mechanisms for constraining the fitted field. Neural-Pull samples queries around the input and pulls them toward nearby observations \citep{ma2021neuralpull}, DiGS regularizes the divergence of the gradient field to favor smooth functions~\citep{benshabat2022digs} and DiffCD uses a symmetric Chamfer-distance loss between the input point cloud and the surface \citep{diffcd}.

\paragraph{Visibility-aware reconstruction.}
Known sensor poses reveal the free-space segment preceding each measured surface
point. Classical methods exploit this evidence through volumetric reasoning:
\citet{faugeras1990representing} use visibility to identify free space in a constrained Delaunay
triangulation, \citet{labatut2009robust} infer
inside--outside labels on a Delaunay tetrahedralization using graph
cuts, while \citet{jancosek2014exploiting} refine to preserve weakly supported surfaces. More recent methods incorporate visibility into
learned reconstructions, through graph neural networks on Delaunay tetrahedralizations~\citep{sulzer2021dgnn}, or by augmenting point-based
models with ray-derived features and samples~\citep{sulzer2022visibility}. These methods have demonstrated strong robustness to
outliers and other acquisition defects
\citep{labatut2009robust,sulzer2021dgnn}.
Neural Poisson~\citep{dai2022neuralpoisson} and neural RGB-D mapping~\citep{ortiz2022isdf,zhu2022niceslam,wang2023coslam} also exploit sensor rays,
through losses on sampled points or volume rendering, whereas we regress the first zero of the implicit field directly.

\paragraph{Lipschitz neural networks.}
Lipschitz-constrained architectures were developed to control the worst-case
sensitivity of neural networks, for adversarial robustness, certification, and
Wasserstein objectives \citep{cisse2017parseval,arjovsky2017wgan}. Early
approaches constrain linear layers via spectral or orthogonality-based
regularization, but pointwise activations limit expressiveness. GroupSort
\citep{anil2019sorting} addresses this with norm-preserving sorting activations.
Scalable 1-Lipschitz architectures include convex potential layers
(CPL)~\citep{meunier2022cpl}, Sandwich layers~\citep{wang2023direct}, and
SDP-based Lipschitz layers (SLL)~\citep{araujo2023sll}. Closest to our work,
\citet{coiffier2024lipschitz} fit 1-Lipschitz neural distance fields to point
clouds with normals or to triangle soups.

\section{Method}
\label{sec:method}

\subsection{Problem statement}
\label{sec:problem}
Given a set $\mathcal D=\{(c_i,p_i)\}_{i=1}^{N}$ of \(N\) sensor--point pairs,
where \(c_i \in \mathbb{R}^3\) is the acquisition origin and \(p_i \in \mathbb{R}^3\) a possibly corrupted
measurement of an unknown true surface \(S^\star\), we seek parameters \(\theta\in\mathbb R^q\) of an
MLP \(f_\theta:\mathbb R^3\to\mathbb R\) whose zero
level set
$S_\theta=\{x\in\mathbb R^3:f_\theta(x)=0\}$
estimates \(S^\star\). We take \(f_\theta\) to be strictly positive outside the surface. 
\\
Let us first consider the case where noise is along the acquisition ray:
\begin{equation}
p_i=c_i+(t_i^\star+\eta_i)d_i,
\qquad 1\leq i\leq N,
\end{equation}
where $d_i=(p_i-c_i)/\norm{p_i-c_i}$ is the unit acquisition direction, \(p_i^\star=c_i+t_i^\star d_i\in S^\star\) the corresponding true surface point, \(\eta_i\in\mathbb R\) a scalar depth perturbation and \(t_i=\|p_i-c_i\|_2\) the measured depth. Surface fitting can then be viewed as robust regression of noisy scalar
first-hit depths, in the spirit of classical smoothing from noisy observations \citep{reinsch1967smoothing,craven1979smoothing}. We therefore consider the following regression problem
\begin{equation}
\label{eq:depth-objective}
\min_{\theta}\ \sum_{i=1}^N \rho\!\left(t_i(\theta)-t_i\right)
\quad\text{s.t.}\quad
t_i(\theta)=\inf\{t\geq0:f_\theta(c_i+t d_i)=0\},\quad i=1,\ldots,N,
\end{equation}
where $\rho$ is a robust penalty.
\\
\subsection{Incidence-Weighted First-Hit Regression}
More generally, under arbitrary 3D measurement noise, we have
$p_i=p_i^\star+\varepsilon_i$, with $p_i^\star\in S^\star$ and
$\varepsilon_i\in\mathbb R^3$. The depth residual $|t_i-t_i(\theta)|$ then depends on the incidence: near grazing angles, small surface-normal displacements produce large depth errors. Following incidence-dependent confidence weighting in range-image fusion and visibility-based reconstruction
\citep{curless1996volumetric,song2021vis2mesh}, we define
\begin{equation}
w_{\theta,i}
=
\frac{
\left|\nabla_x f_\theta(x_{\theta,i})^\top d_i\right|
}{
\|\nabla_x f_\theta(x_{\theta,i})\|_2
},
\label{eq:incidence-factor}
\end{equation}
where \(x_{\theta,i}=c_i+t_i(\theta)d_i\).
Since $p_i-x_{\theta,i}=[t_i-t_i(\theta)]d_i,$
\(w_{\theta,i}|t_i-t_i(\theta)|\) is the distance from \(p_i\) to the
tangent plane of \(S_\theta\) at \(x_{\theta,i}\). If \(f_\theta\) is
affine in a neighborhood of \(x_{\theta,i}\), with
\(\nabla_x f_\theta(x_{\theta,i})\neq0\), then for \(t_i\) sufficiently close to
\(t_i(\theta)\), we have
\begin{equation}
\operatorname{dist}(p_i,S_\theta)
=
w_{\theta,i}|t_i-t_i(\theta)|.
\label{eq:incidence-correction}
\end{equation}
A proof of this result is given in  Appendix~\ref{app:incidence-correction}. The objective we optimize is
\begin{equation}
\label{eq:depth-loss}
\mathcal L(\theta)
=
\mathbb E_{i\sim \mathcal{U}[\![1,N]\!]}
\left[
\rho_\delta\!\left(
w_{\theta,i}[t_i(\theta)-t_i]
\right)
\right],
\end{equation}
where $\rho_\delta$ is the Huber penalty with
$\delta>0$~\citep{huber1964robust}, known to limit the influence of gross outliers. When differentiating the loss, we keep $w_{\theta,i}$ fixed as it is a reweighting term, giving $\delta$ a consistent geometric meaning across incidence angles.

\subsection{Differentiable first-hit tracing}
Computing $t_i(\theta)$ requires locating the first zero of $f_\theta$
along the acquisition ray. For a $1$-Lipschitz MLP $f_\theta$ (Appendix~\ref{app:lipschitz-def}), a well-known algorithm to do so is sphere tracing, which is guaranteed to converge
at a linear rate at a non-grazing root,
$\nabla_x f_\theta(x_{\theta,i})^\top d_i \neq 0$~\citep{hart1996sphere}. This motivates our architectural choice in Section~\ref{sec:lip-model}. The sphere tracing algorithm is described below. 
Differentiable sphere tracing has been used to fit pretrained shape priors
to depth or images \citep{liu2020dist} and for multi-view reconstruction
\citep{yariv2020multiview}. We optionally refine converged sphere tracing intersections with two safeguarded Newton steps.
\begin{algorithm}[H]
\caption{Sphere tracing for $f_\theta$.}
\label{alg:sphere tracing}
\begin{algorithmic}[1]
\Require $i,\varepsilon,t_{\max},K$
\State $t\gets0,\; x\gets c_i,\; s\gets f_\theta(x)$
\For{$k=1,\ldots,K$}
    \State \textbf{if} $|s|\le\varepsilon$ \textbf{ then return} $x$
    \State $\tilde t\gets t+|s|$;\quad
           \textbf{if} $\tilde t>t_{\max}$ \textbf{ then return} $\emptyset$
    \State $t\gets\tilde t,\;x\gets c_i+t d_i,\;s\gets f_\theta(x)$
\EndFor
\State \Return $x$ if $|s|\le\varepsilon$, otherwise $\emptyset$
\end{algorithmic}
\end{algorithm}

\vspace{-5mm}

\begin{figure}[ht]
\centering

\resizebox{\linewidth}{!}{%
\begin{tikzpicture}[
  x=1cm,y=1cm,
  sensor/.style={circle,fill=sensordk,inner sep=0pt,minimum size=4.4pt},
  meas/.style={circle,fill=measred,inner sep=0pt,minimum size=4.4pt},
  hit/.style={circle,fill=hitgreen,draw=white,line width=.3pt,inner sep=0pt,minimum size=5.4pt},
  samp/.style={circle,fill=surfblue,inner sep=0pt,minimum size=3.0pt},
  lbl/.style={font=\small},
  slbl/.style={font=\footnotesize},
  ray/.style={sensordk,line width=.9pt},
  cray/.style={raygray,line width=.7pt},
]

\def\surfpts{
  (8.40,4.90)(9.75,4.98)(11.0,5.7)(10.72,8.02)(10.3,9.0)
  (8.98,10.18)(7.10,10.52)(5.52,10.08)(4.8,9.3)(3.6,7.9)
  (3.30,7.0)(3.9,5.9)(4.70,4.91)(5.6,5.05)(6.85,5.55)
}

\begin{scope}[on background layer]
  \fill[surfblue!5]
    plot[smooth cycle,tension=.72] coordinates{\surfpts};
\end{scope}

\draw[surfblue,line width=.9pt]
  plot[smooth cycle,tension=.72] coordinates{\surfpts};

\node[surfblue,lbl,anchor=west] at (6,6)
  {$\mathcal S_\theta=\{x:f_\theta(x)=0\}$};

\node[
  font=\footnotesize\bfseries,
  anchor=north west,
  draw=black!70,
  fill=black!4,
  rounded corners=2pt,
  inner sep=3pt
] at (-0.3,9.8)
{Sensor-point pairs};

\begin{scope}[shift={(-2,-0.5)}]

\coordinate (cS)  at (6,11.6);

\begin{scope}[rotate around={8:(cS)}] 

\coordinate (pS1) at (5.6,9.7);
\coordinate (pS2) at (7.3,9.8);
\coordinate (pS3) at (9.2,10.4);

\draw[cray] (cS)--(pS1);
\draw[cray] (cS)--(pS2);
\draw[cray] (cS)--(pS3);

\node[sensor] at (cS){};
\node[lbl,above=1pt] at (cS){$c_a$};

\foreach \p in {(pS1),(pS2),(pS3)}
  \node[meas] at \p{};

\node[slbl,anchor=west] at ($(pS1)+(-.1,-.2)$) {$p_a^{(1)}$};
\node[slbl,anchor=west] at ($(pS2)+(-.15,-.3)$) {$p_a^{(2)}$};
\node[slbl] at ($(pS2)!0.5!(pS3)+(0,0)$) {$\cdots$};
\node[slbl,anchor=west] at ($(pS3)+(-.1,-.22)$) {$p_a^{n(a)}$};

\end{scope}

\end{scope}

\begin{scope}[shift={(0,-1.5)}]

\coordinate (pM) at (4.2,8.0);

\begin{scope}[rotate around={-25:(pM)}]
  \coordinate (cM1) at (1.2,8.8);
  \coordinate (cM2) at (1.1,7.1);
  \coordinate (cM3) at (1.8,5.5);

  \draw[cray] (cM1)--(pM);
  \draw[cray] (cM2)--(pM);
  \draw[cray] (cM3)--(pM);

  \foreach \c in {(cM1),(cM2),(cM3)}
    \node[sensor] at \c{};

  \node[lbl,left=2pt] at (cM1){$c_b^{(1)}$};
  \node[lbl,left=2pt] at (cM2){$c_b^{(2)}$};
  \node[lbl] at ($(cM2)!0.5!(cM3)$) {$\cdots$};
  \node[lbl,left=2pt] at (cM3){$c_b^{(m(b))}$};
\end{scope}

\node[meas] at (pM){};
\node[lbl,anchor=south west]
  at ($(pM)+(.06,.06)$)
  {$p_b$};

\end{scope}

\node[
  font=\footnotesize\bfseries,
  anchor=north west,
  draw=black!70,
  fill=black!4,
  rounded corners=2pt,
  inner sep=3pt
] at (8.8,12)
{Sphere tracing};

\coordinate (xhit) at (10.30,9.00);

\begin{scope}[rotate around={10:(xhit)}]

\coordinate (cold) at (15.25,11.55);

\coordinate (ci)  at ($(cold)!0.360!(xhit)$);
\coordinate (xs1) at ($(cold)!0.544!(xhit)$);
\coordinate (xs2) at ($(cold)!0.744!(xhit)$);
\coordinate (xs3) at ($(cold)!0.872!(xhit)$);
\coordinate (xs4) at ($(cold)!0.944!(xhit)$);

\coordinate (pi) at ($(cold)!1.16!(xhit)$);

\draw[ray] (ci)--(xhit);

\begin{scope}[on background layer]
  \fill[spherefill,opacity=.28] (ci)  circle(1.024);
  \fill[spherefill,opacity=.28] (xs1) circle(1.1136);
  \fill[spherefill,opacity=.28] (xs2) circle(0.7128);
  \fill[spherefill,opacity=.30] (xs3) circle(0.4008);
  \fill[spherefill,opacity=.34] (xs4) circle(0.312);
\end{scope}

\foreach \c/\r in {
  (ci)/1.024,
  (xs1)/1.1136,
  (xs2)/0.7128,
  (xs3)/0.4008,
  (xs4)/0.312
}{
  \draw[spherefill!80!surfblue,opacity=.6,line width=.4pt]
    \c circle(\r);
}

\node[lbl,anchor=west]
  at ($(ci)+(.12,.03)$)
  {$x_0=c_i$};

\draw[spherefill!80!surfblue,line width=.5pt,densely dotted]
  (ci)--($(ci)+(-0.2624,-0.9896)$);

\node[surfblue,slbl,anchor=north]
  at ($(ci)+(-0.2624,-1.04)$)
  {$|f_\theta(x_0)|$};

\node[sensor] at (ci){};

\node[samp] at (xs1){};

\node[surfblue,slbl,anchor=south west]
  at ($(xs1)+(-0.21,0.08)$)
  {$x_1$};

\node[samp] at (xs2){};

\node[surfblue,slbl,anchor=south west]
  at ($(xs2)+(-0.22,0.08)$)
  {$x_2$};

\node[samp] at (xs3){};
\node[samp] at (xs4){};

\node[surfblue,slbl]
  at ($(xs2)!0.5!(xs3)+(-0.02,0.18)$)
  {$\cdots$};

\draw[measred,line width=.9pt]
  (xhit)--(pi);

\node[meas] at (pi){};

\node[lbl,anchor=north east]
  at ($(pi)+(-.04,-.03)$)
  {$p_i$};

\def\ux{-0.889}
\def\uy{-0.458}
\def\px{0.458}
\def\py{-0.889}

\coordinate (eA)
  at ($(xhit)+(-0.22*\ux,-0.22*\uy)$);

\coordinate (eB)
  at ($(xhit)+(0.22*\ux,0.22*\uy)$);

\draw[epsviolet,line width=.8pt]
  ($(eA)+(0.10*\px,0.10*\py)$)
  --
  ($(eA)-(0.10*\px,0.10*\py)$);

\draw[epsviolet,line width=.8pt]
  ($(eB)+(0.10*\px,0.10*\py)$)
  --
  ($(eB)-(0.10*\px,0.10*\py)$);

\node[epsviolet,slbl,anchor=west]
  at ($(xhit)+(-0.02,0.32)$)
  {$\varepsilon$};

\node[hit,minimum size=2.8pt] at (xhit){};

\node[hitgreen,slbl,anchor=west]
  at ($(xhit)+(0.08,-0.1)$)
  {$x_{\theta,i}=c_i+t_i(\theta)d_i$};

\node[measred,slbl,anchor=east]
  at ($(xhit)+(-0.3,-0.05)$)
  {$\rho_\delta\!\left(w_{\theta,i}(t_i-t_i(\theta))\right)$};

\end{scope}

\node[
  font=\footnotesize\bfseries,
  anchor=north west,
  draw=black!70,
  fill=black!4,
  rounded corners=2pt,
  inner sep=3pt
] at (12,8.2)
{Incidence weighting};

\begin{scope}[shift={(13,4.7)},scale=1.28]

\def\ain{35}
\def\Lin{1.7}

\coordinate (iO) at (0,0);
\coordinate (iP) at ({\Lin*sin(\ain)},{\Lin*cos(\ain)});
\coordinate (iF) at ({\Lin*sin(\ain)},0);
\coordinate (iR0) at ({-.32*sin(\ain)},{-.32*cos(\ain)});
\coordinate (iR1) at ({(\Lin+.35)*sin(\ain)},{(\Lin+.35)*cos(\ain)});

\begin{scope}[on background layer]
  \fill[black!3,rounded corners=4pt]
    (-1.05,-.55) rectangle (2.35,2.35);
\end{scope}

\node[surfblue,slbl,anchor=south west]
  at (-1,1.8)
  {$w_{\theta,i}=|\cos\alpha_i|$};

\draw[surfblue,line width=1pt]
  (-.95,0)--(1.55,0);

\node[surfblue,font=\scriptsize,anchor=north east]
  at (1.55,-.02)
  {$T_{x_{\theta,i}}\mathcal S_\theta$};

\draw[sensordk,dashed,->]
  (iO)--(0,1.35)
  node[left,slbl]{$n_{\theta,i}$};

\draw[raygray,->,line width=.7pt]
  (iR0)--(iR1)
  node[right,slbl]{$d_i$};

\draw[sensordk,line width=1pt]
  (iO)--(iP);

\node[sensordk,slbl,anchor=south east]
  at ($(iO)!0.5!(iP)+(0.17,0)$)
  {$|t_i-t_i(\theta)|$};

\draw[measred,line width=1pt]
  (iP)--(iF);

\node[measred,slbl,anchor=west]
  at ($(iP)!0.5!(iF)$)
  {$w_{\theta,i}|t_i-t_i(\theta)|$};

\draw[black!60]
  (iF)++(-.13,0)--++(0,.13)--++(.13,0);

\draw[black!60]
  (0,.55) arc (90:{90-\ain}:.55);

\node[font=\scriptsize]
  at (.12,.4)
  {$\alpha_i$};

\node[hit] at (iO){};

\node[slbl,anchor=north east]
  at ($(iO)+(-.1,0)$)
  {$x_{\theta,i}$};

\node[meas] at (iP){};

\node[slbl,anchor=south west]
  at ($(iP)+(-.4,.03)$)
  {$p_i$};

\end{scope}

\end{tikzpicture}%
}
 \vspace{-0.5mm}
\caption{Method overview. Sensor-point pairs define acquisition rays;
sphere tracing locates the predicted first surface intersection, and incidence
weighting locally converts the depth residual into a surface-distance residual.}

\end{figure}

In practice, at each optimization step, we estimate Eq.~\eqref{eq:depth-loss} from batches and retain only rays for which sphere tracing returns a finite,
non-grazing intersection satisfying $w_{\theta,i}>\tau_g$, with $\tau_g>0$. We do not differentiate through this filtering operation. For such hits, assuming that $f_\theta$ is differentiable with respect to
$x$ and $\theta$ in a neighborhood of $x_{\theta,i}$, implicit differentiation of
$f_\theta(x_{\theta,i})=0$ gives~\citep{yariv2020multiview}
\begin{equation}
\nabla_\theta t_i(\theta)
=
-\frac{
1
}{
\nabla_x f_\theta(x_{\theta,i})^\top d_i
}\nabla_\theta f_\theta(x_{\theta,i}).
\label{eq:implicit-hit-gradient}
\end{equation}
Thus, we avoid backpropagating through all sphere tracing iterations, which would be expensive.

\subsection{1-Lipschitz Model}
\label{sec:lip-model}
To ensure convergence of sphere tracing, we parameterize the field
\(f_\theta:\mathbb{R}^3\to\mathbb{R}\) as a composition of
\(1\)-Lipschitz functions. Let \(D\) denote the network's hidden dimension,
which we take to be even. The model is defined as
\begin{equation}
f_\theta(x)
=
v^\top \Phi_\theta(Ax)+\beta,
\end{equation}
where
$A\in\mathbb{R}^{D\times 3}$ satisfies
$A^\top A=I_3$,
\(\Phi_\theta:\mathbb{R}^D\to\mathbb{R}^D\) is a \(1\)-Lipschitz neural
network,
$v
=
\widetilde{v} /
\|\widetilde{v}\|_2
\in\mathbb{R}^D$
is the final unit-norm weight, and \(\beta\in\mathbb{R}\) is a scalar bias. As a composition of such functions, \(f_\theta:\mathbb{R}^3\to\mathbb{R}\) is \(1\)-Lipschitz. Our first-hit formulation is agnostic to the 1-Lipschitz
architecture used. In our experiments, $\Phi_\theta$ interleaves Convex Potential Layers (CPL)
and MaxMin activations~\citep{anil2019sorting,meunier2022cpl}.
Architectural details are given in
Appendix~\ref{sec:implementation}.

\subsection{Practical variants}
\noindent\textbf{Rescaling.} The architectural constraint guarantees
$\operatorname{Lip}(f_\theta)\leq 1$, but can leave substantial slack, yielding small field magnitudes and conservative sphere tracing steps. Eikonal regularization~\citep{gropp2020igr} and the
hinge--Kantorovich--Rubinstein (hKR) loss~\citep{coiffier2024lipschitz}
encourage unit-norm gradients; however, both
degrade reconstruction accuracy in our experiments  (Table~\ref{tab:full-ablations}). For the convex reconstruction domain $\Omega\subset\mathbb{R}^3$ with nonempty interior, we define
\begin{equation}
G_\theta(\Omega)
=
\operatorname{Lip}(f_\theta|_\Omega)
=
\sup_{x\in\Omega:\,\nabla f_\theta(x)\ \mathrm{exists}}
\|\nabla_x f_\theta(x)\|_2 .
\end{equation}
Rescaling by \(G_\theta(\Omega)>0\) preserves the zero set and yields
\(\operatorname{Lip}(f_\theta/G_\theta(\Omega)|_\Omega)=1\).
However, computing \(G_\theta(\Omega)\)
exactly is NP-hard in general \citep{virmaux2018lipschitz}. We therefore
estimate it every 50 steps from 32,000 probes, sampled half uniformly in the
tracing ball and half near observed returns, and rescale
\(f_\theta\leftarrow f_\theta/\widehat G_\theta(\Omega)\).
Since \(\widehat G_\theta(\Omega)\le G_\theta(\Omega)\), this variant is no longer formally
certified. Empirically, sampled slopes remain below one (Figure~\ref{fig:grad-slope-curves}).

\noindent\textbf{Positional encoding.} To increase fitting capacity, we additionally consider Fourier positional
encoding, which mitigates the spectral bias of coordinate MLPs and improves
their ability to represent high-frequency
detail~\citep{mildenhall2020nerf,tancik2020fourier}. For \(L\geq1\),
\begin{equation}
\gamma_L(x)
=
\left[
x,
\left(
\sin(2^k x_j),\cos(2^k x_j)
\right)_{\substack{j=1,\ldots,3\\k=0,\ldots,L-1}}
\right].
\label{eq:positional-encoding}
\end{equation}
The implicit field becomes
$h_\theta :x\mapsto v^\top\Phi_\theta(A\gamma_L(x))+\beta$ with \(A\in\mathbb{R}^{D\times(6L+3)}\) and $\Lip(h_\theta)\leq \Lip(\gamma_L)
=
(1+(4^L-1) /3)^{1/2}$ for the \(\ell_2\) norm.
This variant is therefore no longer formally 1-Lipschitz. Dividing the sphere tracing steps by
$\Lip(\gamma_L)$ yields highly conservative steps. We therefore empirically rescale $h_\theta$ as described above. 

\section{Experiments}
\label{sec:experiments}

\subsection{Architecture and optimization.} Unless stated otherwise, we fit an 8-layer, width-256 CPL network with
MaxMin activations using Adam with cosine learning-rate decay and an initial learning rate of $10^{-4}$ for $L=0$ and $10^{-2}$ for $L>0$.
Ours ($L=k$) denotes the empirically rescaled variant with PE
level $k\geq 0$, where $L=0$ corresponds to no PE.
We use $\delta=0.05$ by default and $\delta=0.005$ when corruption
is detected from point-spacing statistics, both relative to the scene radius (Appendix~\ref{app:huber-delta}).
Optimization runs for $5\text{k}$ steps on ShapeNet and $25\text{k}$ elsewhere.
Sphere tracing uses $K=192$, $\varepsilon=5\times10^{-4}r_{\mathrm{init}}$ and  $\tau_g=0.1$, where $r_{\mathrm{init}}$ is the radius of the enclosing sphere.
The field is initialized as an approximate SDF of an enclosing sphere. Further
implementation details are given in Appendix~\ref{sec:implementation}.

\subsection{Synthetic datasets and evaluation}

\noindent\textbf{Surface Reconstruction Benchmark (SRB).} We use the five SRB shapes and synthetic range scans of
\citet{sulzer2025survey}, retaining each point's sensor origin (Figure~\ref{fig:srb-acquisition}). These scans were generated with a modified implementation of \citet{berger2013benchmark}'s optical laser-triangulation simulator.

\noindent\textbf{ShapeNet.} For synthetic experiments on
ShapeNet~\citep{chang2015shapenet}, we use the 13-category split of \citet{choy2016r2n2} adopted by \citet{sulzer2025survey}, with five shapes per
category. Following \citet{sulzer2025survey}, point clouds are generated by
casting rays from virtual sensors and retaining first surface intersections
(Figure~\ref{fig:shapenet-acquisition}).

\noindent\textbf{Baselines.} We compare against neural implicit methods SAL \citep{atzmon2020sal},
Neural-Pull \citep{ma2021neuralpull}, DiGS \citep{benshabat2022digs},
DiffCD \citep{diffcd}, and NDF \citep{coiffier2024lipschitz}, and classical methods
SPSR \citep{kazhdan2013screened} and visibility-aware Delaunay reconstruction
\citep{labatut2009robust}. We adapt NDF to visibility-derived labels and call it NDF-vis (Appendix~\ref{app:ndf-vis}). Neural Poisson \citep{dai2022neuralpoisson}
is omitted because no public implementation is available. All methods use the
same input points; SPSR additionally uses estimated oriented normals, while Labatut,
NDF-vis, and Ours use sensor origins. 

\begin{wrapfigure}[10]{r}{0.4\linewidth}
    \vspace{-1.4\baselineskip}
    \centering
    \includegraphics[width=\linewidth]{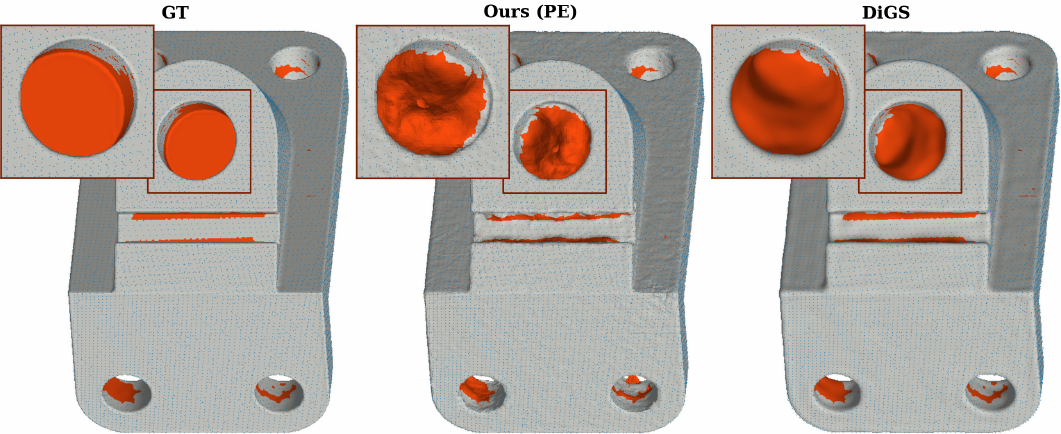}
    \caption{Observation-supported SRB evaluation. Orange denotes surface regions
excluded by the $2h_0$ support mask.}
    \label{fig:obs-mask}
\end{wrapfigure}

\noindent\textbf{Metrics.} Evaluation uses symmetric Chamfer distance ($\mathrm{CD}$),
normal consistency ($\mathrm{NC}$), and volumetric IoU. Following DiGS~\citep{benshabat2022digs}, SRB uses unsquared \(\mathrm{CD}_1\) in benchmark coordinates and ShapeNet squared \(\mathrm{CD}_2\) in normalized coordinates; NC follows \citet{sulzer2025survey}, and IoU follows the DiGS ShapeNet protocol.
However, because some surfaces contain regions unsupported by any acquisition,
these global metrics conflate reconstruction of observed geometry with extrapolation
in unobserved regions. We therefore also evaluate the observation-supported metrics
$\mathrm{CD}^{\mathrm{obs}}$ and $\mathrm{NC}^{\mathrm{obs}}$, retaining predicted
and reference surface samples within $2h_0$ of the corrupted and clean
acquisitions, respectively, where $h_0$ is the median nearest-neighbor spacing
\citep{marchand2021evaluating} (Appendix~\ref{sec:metrics}). 

\subsection{Robustness to measurement errors} 
\setlength{\columnsep}{8pt}
\setlength{\intextsep}{3pt}
\noindent\textbf{Scanner noise (SRB).}
We parameterize the scanner-noise level by $\lambda\in[0,1]$, with
$\lambda=0$ corresponding to the unperturbed scan and $\lambda=1$ to the
original noisy configuration of \citet{sulzer2025survey}. Intermediate
values linearly scale the simulator noise magnitude. $\lambda=1$ induces range errors with a standard deviation of about $1\%$ of the bounding-box diagonal. The $\lambda=1$ setting uses \citet{berger2013benchmark}'s simulator with noise
magnitude 0.5, near the upper end of its typical range $0$--$0.6$.

\begin{table}[!ht]
\centering
\caption{SRB results under scanner noise. Mean $\pm$ std. over three seeds for
stochastic methods; best/second-best neural implicit results are
bold/underlined. Per-shape results are in Appendix~\ref{app:extended-results}.}
\label{tab:srb-noise}
\vspace{-6pt}
\setlength{\tabcolsep}{2.2pt}
\renewcommand{\arraystretch}{1.03}

\resizebox{\linewidth}{!}{%
\begin{tabular}{lcccc@{\hspace{3mm}}cccc}
\toprule
&
\multicolumn{4}{c}{\textbf{Clean} ($\lambda=0$)}
&
\multicolumn{4}{c}{\textbf{Noise} ($\lambda=1$)}
\\
\cmidrule(lr){2-5}
\cmidrule(lr){6-9}
\textbf{Method}
& $\mathrm{CD}_1\downarrow$
& $\mathrm{NC}\uparrow$
& $\mathrm{CD}_1^{\mathrm{obs}}\downarrow$
& $\mathrm{NC}^{\mathrm{obs}}\uparrow$
& $\mathrm{CD}_1\downarrow$
& $\mathrm{NC}\uparrow$
& $\mathrm{CD}_1^{\mathrm{obs}}\downarrow$
& $\mathrm{NC}^{\mathrm{obs}}\uparrow$
\\
\midrule
\multicolumn{9}{l}{\textit{Neural implicit}} \\

Ours ($L=0$)
& 0.163 {\scriptsize $\pm$ 0.005}
& 0.958 {\scriptsize $\pm$ 0.001}
& 0.101 {\scriptsize $\pm$ 0.003}
& 0.970 {\scriptsize $\pm$ 0.001}
& \textbf{0.310} {\scriptsize $\pm$ 0.016}
& \textbf{0.906} {\scriptsize $\pm$ 0.006}
& \textbf{0.226} {\scriptsize $\pm$ 0.005}
& \textbf{0.921} {\scriptsize $\pm$ 0.004}
\\

Ours ($L=2$)
& \underline{0.127} {\scriptsize $\pm$ 0.002}
& 0.970 {\scriptsize $\pm$ 0.001}
& 0.073 {\scriptsize $\pm$ 0.001}
& 0.982 {\scriptsize $\pm$ 0.0002}
& \underline{0.381} {\scriptsize $\pm$ 0.002}
& 0.781 {\scriptsize $\pm$ 0.002}
& \underline{0.296} {\scriptsize $\pm$ 0.004}
& 0.794 {\scriptsize $\pm$ 0.002}
\\

Ours ($L=6$)
& 0.133 {\scriptsize $\pm$ 0.003}
& \underline{0.974} {\scriptsize $\pm$ 0.002}
& \textbf{0.064} {\scriptsize $\pm$ 0.002}
& \underline{0.987} {\scriptsize $\pm$ 0.003}
& 0.542 {\scriptsize $\pm$ 0.009}
& 0.543 {\scriptsize $\pm$ 0.008}
& 0.409 {\scriptsize $\pm$ 0.007}
& 0.544 {\scriptsize $\pm$ 0.006}
\\

NDF-vis
& 0.613 {\scriptsize $\pm$ 0.063}
& 0.893 {\scriptsize $\pm$ 0.003}
& 0.423 {\scriptsize $\pm$ 0.037}
& 0.920 {\scriptsize $\pm$ 0.003}
& 1.098 {\scriptsize $\pm$ 0.194}
& \underline{0.871} {\scriptsize $\pm$ 0.003}
& 0.834 {\scriptsize $\pm$ 0.149}
& \underline{0.901} {\scriptsize $\pm$ 0.0004}
\\

Neural-Pull
& 0.208 {\scriptsize $\pm$ 0.051}
& 0.945 {\scriptsize $\pm$ 0.004}
& 0.126 {\scriptsize $\pm$ 0.031}
& 0.959 {\scriptsize $\pm$ 0.005}
& 0.450 {\scriptsize $\pm$ 0.003}
& 0.717 {\scriptsize $\pm$ 0.003}
& 0.394 {\scriptsize $\pm$ 0.001}
& 0.718 {\scriptsize $\pm$ 0.003}
\\

DiffCD
& 0.128 {\scriptsize $\pm$ 0.001}
& 0.973 {\scriptsize $\pm$ 0.001}
& \underline{0.070} {\scriptsize $\pm$ 0.0003}
& 0.986 {\scriptsize $\pm$ 0.0003}
& 0.415 {\scriptsize $\pm$ 0.006}
& 0.781 {\scriptsize $\pm$ 0.002}
& 0.354 {\scriptsize $\pm$ 0.003}
& 0.789 {\scriptsize $\pm$ 0.001}
\\

SAL
& 0.731 {\scriptsize $\pm$ 0.148}
& 0.885 {\scriptsize $\pm$ 0.013}
& 0.567 {\scriptsize $\pm$ 0.122}
& 0.914 {\scriptsize $\pm$ 0.010}
& 1.125 {\scriptsize $\pm$ 0.322}
& 0.845 {\scriptsize $\pm$ 0.003}
& 0.970 {\scriptsize $\pm$ 0.319}
& 0.867 {\scriptsize $\pm$ 0.001}
\\

DiGS
& \textbf{0.108} {\scriptsize $\pm$ 0.004}
& \textbf{0.984} {\scriptsize $\pm$ 0.001}
& 0.071 {\scriptsize $\pm$ 0.001}
& \textbf{0.993} {\scriptsize $\pm$ 0.0002}
& 0.610 {\scriptsize $\pm$ 0.006}
& 0.578 {\scriptsize $\pm$ 0.004}
& 0.488 {\scriptsize $\pm$ 0.001}
& 0.575 {\scriptsize $\pm$ 0.004}
\\

\midrule
\multicolumn{9}{l}{\textit{Classical}} \\

SPSR
& 0.204 & 0.956 & 0.116 & 0.970
& 0.358 & 0.881 & 0.271 & 0.890
\\

Labatut
& 0.137 & 0.953 & 0.074 & 0.968
& 0.410 & 0.639 & 0.363 & 0.639
\\

\bottomrule
\end{tabular}%
}
\end{table}

On clean data, increasing $L$ improves fidelity to the observations. Ours
($L=6$) achieves lower observation-supported $\mathrm{CD}_1$ than DiGS
($0.064$ vs.\ $0.071$) and a smaller input-to-surface distance
(\(0.011\) vs.\ \(0.032\); Table~\ref{tab:data-fidelity}), despite a higher $\mathrm{CD}_1$
($0.133$ vs.\ $0.108$). The discrepancy is concentrated in weakly observed
regions, notably Daratech (Table~\ref{tab:srb-results-noise}), where stronger
geometric priors can improve global metrics by extrapolating beyond the
measurements.

Under scanner noise, Ours ($L=0$) outperforms
DiGS both globally ($0.310$ vs.\ $0.610$ $\mathrm{CD}_1$) and on
observation-supported geometry ($0.226$ vs.\ $0.488$ $\mathrm{CD}_1^\mathrm{obs}$), while larger $L$
degrades more rapidly. Because this degradation also occurs in observed regions,
larger $L$ increasingly fits corrupted measurements rather than merely
extrapolating poorly in weakly constrained regions. Figure~\ref{fig:srb-noise-qual} shows the corresponding qualitative
behavior.
\begin{figure}[!htbp]
    \centering
    \includegraphics[width=\linewidth]{noise_gargoyle_minimal_2rows.jpg}
    \captionsetup{skip=2pt}
    \caption{Qualitative Gargoyle reconstructions under scanner noise:
    $\lambda=0$ (top) and $\lambda=1$ (bottom).}
    \label{fig:srb-noise-qual}
\end{figure}
\vspace{2pt}
\begin{figure}[!htbp]
    \centering
\includegraphics[width=\textwidth]{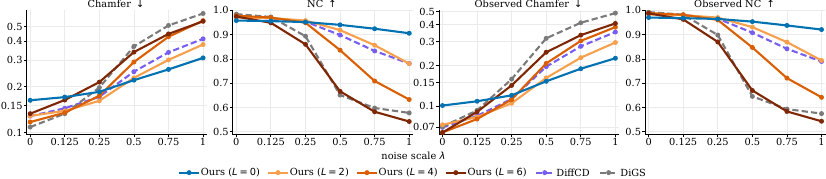}
\captionsetup{skip=2pt}
    \caption{SRB performance across scanner-noise levels.
Larger $L$ improves clean fidelity but reduces robustness. Ours ($L=2$) provides a good compromise.}
    \label{fig:srb_noise_tradeoff}
\end{figure}

\noindent\textbf{Gaussian noise (ShapeNet).} To test robustness beyond the SRB corruption model, we evaluate on ShapeNet
using 3K inputs. As in~\citet{peng2021sap}, we consider
isotropic Gaussian noise, and additionally ray-aligned range noise as a
first-order model of depth uncertainty in direct time-of-flight
sensors~\citep{koerner2021models}:
\begin{equation}
\left\{
\begin{aligned}
\tilde p_i &= p_i+\varepsilon_i,
& \varepsilon_i &\sim \mathcal N(0,\sigma_{\mathrm{iso}}^2 I_3),
&& \text{(isotropic)},\\
\tilde p_i &= c_i+(t_i+\eta_i)d_i,
& \eta_i &\sim \mathcal N(0,\sigma_{\mathrm{ray}}^2),
&& \text{(ray-aligned)}.
\end{aligned}
\right.
\end{equation}

In both settings, we set $\sigma_{\mathrm{iso}}^2=\sigma_{\mathrm{ray}}^2=2.5 \times 10^{-5}$ in the unit system of $\mathrm{CD}_2$, following
prior work~\citep{erler2020points2surf}. Figure \ref{fig:shapenet-gaussian} shows that the robustness gains persist under Gaussian noise.

\begin{table}[H]
\centering
\caption{ShapeNet reconstruction under Gaussian noise with 3K input points.
Mean over 65 test shapes, except SAL, which is averaged over successfully scored shapes. Best/second-best neural implicit results are shown in bold/underlined.}
\label{tab:shapenet-gaussian}
\vspace{-6pt}

{%
\setlength{\tabcolsep}{3pt}
\renewcommand{\arraystretch}{1.05}
\resizebox{\linewidth}{!}{%
\begin{tabular}{clcccccccc!{\vrule width 0.4pt}cc}
\toprule
& \textbf{Metric}
& Ours ($L=0$)
& Ours ($L=2$)
& Ours ($L=6$)
& NDF-vis
& DiffCD
& N.-Pull
& SAL
& DiGS
& SPSR
& Labat. \\
\midrule

\multirow{3}{*}{\textbf{Iso}}
& $\mathrm{CD}_2\,(10^{-4})\downarrow$
& 2.58 & \textbf{2.26} & \underline{2.47} & 5.51 & 2.53
& 10.1 & 112 & 5.89 & 46.1 & 3.04 \\
& $\mathrm{NC}\uparrow$
& \underline{0.91} & \textbf{0.91} & 0.90 & 0.89 & 0.84
& 0.76 & 0.85 & 0.68 & 0.84 & 0.79 \\
& $\mathrm{IoU}\uparrow$
& \underline{0.84} & \textbf{0.85} & 0.82 & 0.72 & 0.75
& 0.64 & 0.63 & 0.46 & 0.63 & 0.79 \\

\midrule

\multirow{3}{*}{\textbf{Ray}}
& $\mathrm{CD}_2\,(10^{-4})\downarrow$
& 2.23 & \textbf{2.12} & \underline{2.18} & 6.66 & 2.45
& 14.8 & 150 & 5.83 & 45.0 & 3.02 \\
& $\mathrm{NC}\uparrow$
& \underline{0.92} & \textbf{0.92} & 0.91 & 0.89 & 0.88
& 0.79 & 0.85 & 0.71 & 0.84 & 0.81 \\
& $\mathrm{IoU}\uparrow$
& \underline{0.86} & \textbf{0.86} & 0.85 & 0.70 & 0.78
& 0.63 & 0.62 & 0.58 & 0.64 & 0.79 \\

\bottomrule
\end{tabular}%
}
}
\end{table}

\begin{figure}[!ht]
\centering
\includegraphics[width=\linewidth]{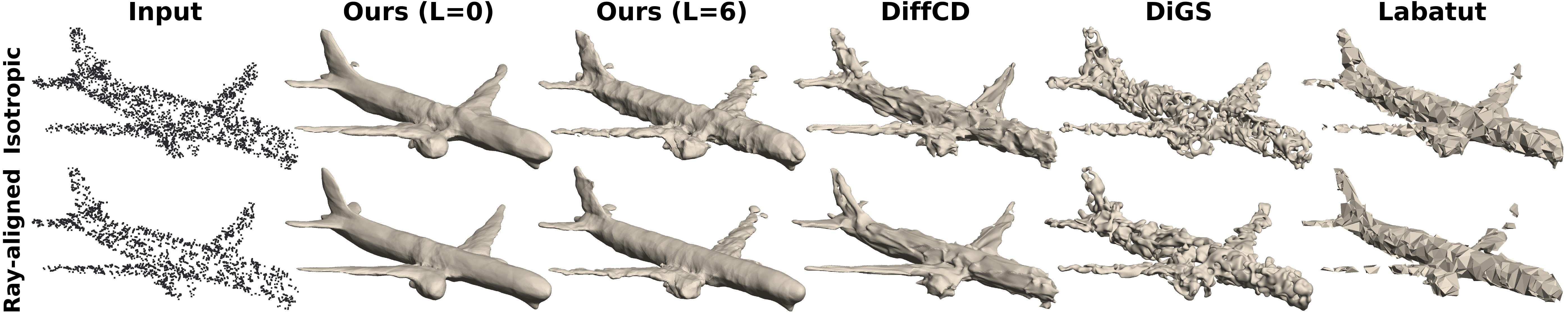}
\caption{Qualitative ShapeNet airplane results under
isotropic and ray-aligned Gaussian noise.}
\label{fig:shapenet-gaussian}
\end{figure}

\noindent\textbf{Outliers (SRB).} Following~\citet{sulzer2025survey}, we augment each clean SRB scan
with $N_{\mathrm{out}}=\lfloor\alpha N_{\mathrm{in}}\rfloor$ outliers,
sampled uniformly within the clean point cloud's bounding box
and uniformly assigned to a sensor. \(\alpha=0\) corresponds to the full clean scan, while \(\lambda=0\) uses the same set of rays as \(\lambda=1\). The two clean settings are therefore not the same, explaining the difference in the results. 

\hyphenpenalty=3000
\exhyphenpenalty=3000

\begin{table}[H]
\caption{SRB results under outliers. Mean $\pm$ std. over three seeds for stochastic
methods; best/second-best neural implicit results are bold/underlined.
Per-shape results are in Appendix~\ref{app:extended-results}.}
\label{tab:srb-outliers}
\centering
\vspace{-6pt}
\setlength{\tabcolsep}{2.2pt}
\renewcommand{\arraystretch}{1.03}

\resizebox{\linewidth}{!}{%
\begin{tabular}{lcccc@{\hspace{3mm}}cccc}
\toprule
&
\multicolumn{4}{c}{\textbf{Clean} ($\alpha=0$)}
&
\multicolumn{4}{c}{\textbf{Outliers} ($\alpha=0.2$)}
\\
\cmidrule(lr){2-5}
\cmidrule(lr){6-9}

\textbf{Method}
& $\mathrm{CD}_1\downarrow$
& $\mathrm{NC}\uparrow$
& $\mathrm{CD}_1^{\mathrm{obs}}\downarrow$
& $\mathrm{NC}^{\mathrm{obs}}\uparrow$
& $\mathrm{CD}_1\downarrow$
& $\mathrm{NC}\uparrow$
& $\mathrm{CD}_1^{\mathrm{obs}}\downarrow$
& $\mathrm{NC}^{\mathrm{obs}}\uparrow$
\\
\midrule

\multicolumn{9}{l}{\textit{Neural implicit}} \\

Ours ($L=0$)
& 0.155 {\scriptsize $\pm$ 0.003}
& 0.961 {\scriptsize $\pm$ 0.002}
& 0.098 {\scriptsize $\pm$ 0.002}
& 0.973 {\scriptsize $\pm$ 0.001}
& \underline{0.173} {\scriptsize $\pm$ 0.013}
& \underline{0.954} {\scriptsize $\pm$ 0.003}
& \underline{0.107} {\scriptsize $\pm$ 0.005}
& \underline{0.968} {\scriptsize $\pm$ 0.002}
\\

Ours ($L=2$)
& \underline{0.119} {\scriptsize $\pm$ 0.003}
& 0.974 {\scriptsize $\pm$ 0.001}
& 0.070 {\scriptsize $\pm$ 0.002}
& 0.985 {\scriptsize $\pm$ 0.001}
& \textbf{0.122} {\scriptsize $\pm$ 0.004}
& \textbf{0.968} {\scriptsize $\pm$ 0.001}
& \textbf{0.080} {\scriptsize $\pm$ 0.002}
& \textbf{0.981} {\scriptsize $\pm$ 0.001}
\\

Ours ($L=6$)
& 0.127 {\scriptsize $\pm$ 0.004}
& \underline{0.977} {\scriptsize $\pm$ 0.0009}
& \textbf{0.061} {\scriptsize $\pm$ 0.002}
& \underline{0.989} {\scriptsize $\pm$ 0.0006}
& 0.414 {\scriptsize $\pm$ 0.018}
& 0.818 {\scriptsize $\pm$ 0.0018}
& 0.151 {\scriptsize $\pm$ 0.007}
& 0.870 {\scriptsize $\pm$ 0.0003}
\\

NDF-vis
& 0.697 {\scriptsize $\pm$ 0.085}
& 0.885 {\scriptsize $\pm$ 0.003}
& 0.463 {\scriptsize $\pm$ 0.054}
& 0.916 {\scriptsize $\pm$ 0.003}
& 5.492 {\scriptsize $\pm$ 0.280}
& 0.723 {\scriptsize $\pm$ 0.010}
& 2.144 {\scriptsize $\pm$ 0.268}
& 0.832 {\scriptsize $\pm$ 0.009}
\\

DiffCD
& 0.123 {\scriptsize $\pm$ 0.002}
& 0.975 {\scriptsize $\pm$ 0.002}
& \underline{0.068} {\scriptsize $\pm$ 0.002}
& 0.988 {\scriptsize $\pm$ 0.001}
& 3.354 {\scriptsize $\pm$ 0.028}
& 0.806 {\scriptsize $\pm$ 0.001}
& 1.614 {\scriptsize $\pm$ 0.007}
& 0.903 {\scriptsize $\pm$ 0.001}
\\

Neural-Pull
& 0.169 {\scriptsize $\pm$ 0.025}
& 0.951 {\scriptsize $\pm$ 0.003}
& 0.108 {\scriptsize $\pm$ 0.017}
& 0.964 {\scriptsize $\pm$ 0.005}
& 3.42 {\scriptsize $\pm$ 0.15}
& 0.756 {\scriptsize $\pm$ 0.015}
& 1.096 {\scriptsize $\pm$ 0.128}
& 0.855 {\scriptsize $\pm$ 0.014}
\\

SAL
& 0.704 {\scriptsize $\pm$ 0.068}
& 0.889 {\scriptsize $\pm$ 0.009}
& 0.512 {\scriptsize $\pm$ 0.054}
& 0.919 {\scriptsize $\pm$ 0.007}
& 5.731 {\scriptsize $\pm$ 2.134}
& 0.652 {\scriptsize $\pm$ 0.031}
& 3.159 {\scriptsize $\pm$ 2.164}
& 0.756 {\scriptsize $\pm$ 0.034}
\\

DiGS
& \textbf{0.106} {\scriptsize $\pm$ 0.005}
& \textbf{0.985} {\scriptsize $\pm$ 0.001}
& 0.069 {\scriptsize $\pm$ 0.002}
& \textbf{0.994} {\scriptsize $\pm$ 0.0003}
& 2.927 {\scriptsize $\pm$ 0.063}
& 0.814 {\scriptsize $\pm$ 0.002}
& 1.650 {\scriptsize $\pm$ 0.017}
& 0.897 {\scriptsize $\pm$ 0.001}
\\

\midrule
\multicolumn{9}{l}{\textit{Classical}} \\

SPSR
& 0.187 & 0.960 & 0.104 & 0.974
& 4.917 & 0.855 & 0.322 & 0.970
\\

Labatut
& 0.128 & 0.960 & 0.068 & 0.975
& 0.144 & 0.946 & 0.079 & 0.963
\\

\bottomrule
\end{tabular}%
}
\end{table}

Neural implicit baselines deteriorate sharply with increasing outliers, whereas our variants remain stable. In particular, Ours (\(L=2\)) retains the same performance at \(\alpha=0.2\) as on clean data and achieves lower \(\mathrm{CD}_1\) than Labatut at every tested outlier level (Figure~\ref{fig:srb-outlier-sweep}). Ours (\(L=6\)) also remains robust, but degrades more strongly as \(\alpha\) increases, illustrating the fidelity--robustness trade-off induced by fitting capacity. This behavior follows from the combination of first-hit visibility and the Huber loss. Across our PE and no-PE variants, roughly \(60\%\) of outlier rays fall in Huber's linear regime and about \(40\%\) do not yield a valid traced hit, while over \(91\%\) of inlier rays remain in the quadratic regime and only $0.8$--$2.4\%$ are dropped (Table~\ref{tab:outlier_ray_stats}). First-hit supervision makes many inconsistent outlier measurements difficult to fit without introducing premature intersections along other rays, so these outliers cease to yield valid hits. For those that remain, the Huber loss limits their influence.

\subsection{Real data experiments}
\noindent\textbf{ARKitScenes (iPad LiDAR).} We evaluate on an ARKitScenes sofa scene, using raw 256 × 192 depth maps from the consumer-grade LiDAR of a 2020 iPad Pro with the corresponding sensor poses. Since the FARO reference is provided as view-aligned depth rather than a surface mesh, Chamfer evaluation would discard the original ray correspondences. We instead render each reconstruction from the FARO viewpoints and compare first-hit depths to the reference depth maps, reporting mean absolute depth error (MAE), hit rate, and the fraction of rays with error below 2 cm (I@2 cm) (Appendix~\ref{sec:arkitscenes-metrics}).

\begin{table}[!ht]
\centering
\caption{ARKitScenes sofa reconstruction from 2.34M iPad LiDAR points.
Evaluation over 36 FARO frames; mean $\pm$ standard deviation across frames.
Best/second-best results are bold/underlined.}
\label{tab:arkit-sofa}
\vspace{-6pt}
\scriptsize
\setlength{\tabcolsep}{2.5pt}
\renewcommand{\arraystretch}{1.05}

\resizebox{\linewidth}{!}{%
\begin{tabular}{lccccccccc}
\toprule
&
\multicolumn{7}{c}{\textit{Neural implicit}} &
\multicolumn{2}{c}{\textit{Classical}} \\
\cmidrule(lr){2-8}\cmidrule(lr){9-10}
&
Ours ($L=0$) &
Ours ($L=2$) &
NDF-vis &
DiffCD &
Neural-Pull &
SAL &
DiGS &
SPSR &
Labatut \\
\midrule

MAE (cm) $\downarrow$
& \underline{2.24 {\scriptsize $\pm$ 0.17}}
& \textbf{2.07 {\scriptsize $\pm$ 0.17}}
& 11.30 {\scriptsize $\pm$ 0.52}
& 4.84 {\scriptsize $\pm$ 0.32}
& 5.56 {\scriptsize $\pm$ 0.30}
& 8.34 {\scriptsize $\pm$ 0.39}
& 5.66 {\scriptsize $\pm$ 0.29}
& 2.84 {\scriptsize $\pm$ 0.25}
& 4.07 {\scriptsize $\pm$ 0.28}
\\

Hit (\%) $\uparrow$
& \textbf{99.7 {\scriptsize $\pm$ 0.13}}
& 99.4 {\scriptsize $\pm$ 0.24}
& 83.4 {\scriptsize $\pm$ 3.62}
& \textbf{99.7 {\scriptsize $\pm$ 0.15}}
& 90.3 {\scriptsize $\pm$ 2.72}
& 96.3 {\scriptsize $\pm$ 0.58}
& \underline{99.5 {\scriptsize $\pm$ 0.09}}
& \textbf{99.7 {\scriptsize $\pm$ 0.08}}
& 98.8 {\scriptsize $\pm$ 0.40}
\\

$I@2\,\mathrm{cm}$ (\%) $\uparrow$
& \underline{61.9 {\scriptsize $\pm$ 3.7}}
& \textbf{66.0 {\scriptsize $\pm$ 4.0}}
& 1.3 {\scriptsize $\pm$ 0.5}
& 33.1 {\scriptsize $\pm$ 3.3}
& 20.5 {\scriptsize $\pm$ 1.2}
& 6.5 {\scriptsize $\pm$ 0.8}
& 25.0 {\scriptsize $\pm$ 2.5}
& 58.5 {\scriptsize $\pm$ 3.8}
& 45.5 {\scriptsize $\pm$ 3.3}
\\

\bottomrule
\end{tabular}
}
\end{table}

\noindent\textbf{KITTI Raw (automotive LiDAR).} KITTI Raw~\citep{geiger2013vision} is a standard dataset providing real-world automotive LiDAR acquisitions with known sensor poses. We use one
urban driving sequence, pairing each return with its acquisition ray, and reconstruct a static urban crop. As KITTI provides no reference surface,
we report qualitative results only (Figure~\ref{fig:kitti}). Supplementary MVS
reconstructions are reported in Appendix~\ref{app:additional-evaluations}.

{\setlength{\textfloatsep}{8pt}
\begin{figure}[t]
    \centering
    \includegraphics[width=\linewidth]{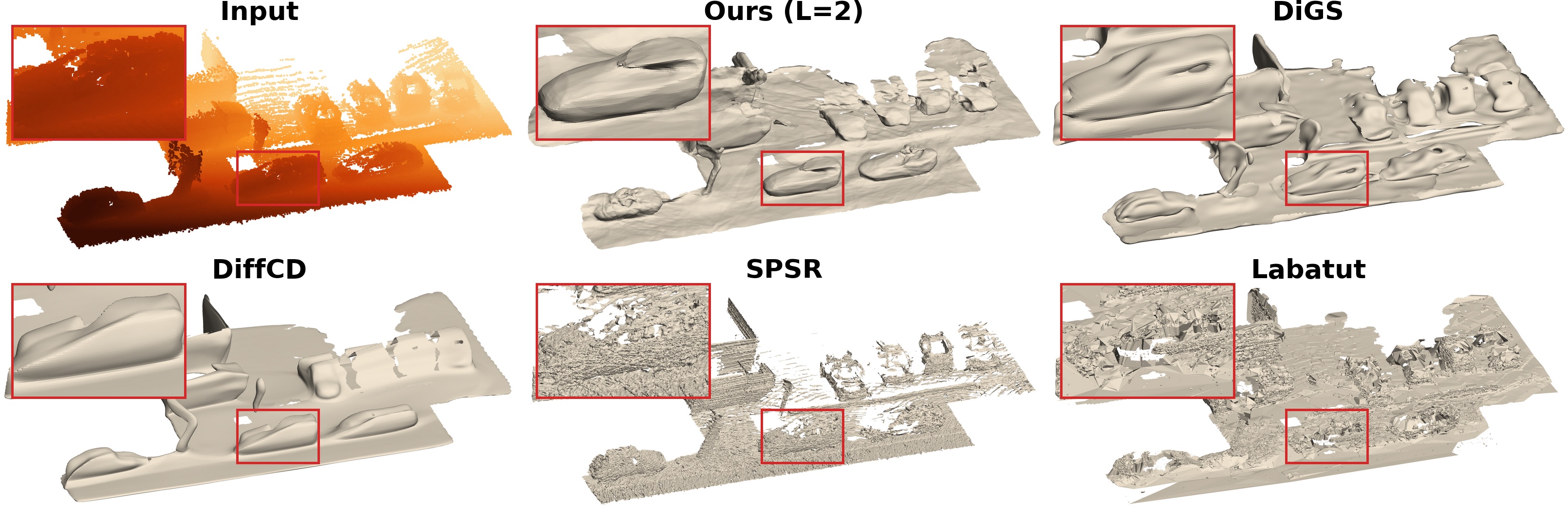}
    \caption{Qualitative KITTI reconstruction.}
    \label{fig:kitti}
\end{figure}

\section{Discussion}
\label{sec:discussion}

\noindent\textbf{Fidelity--robustness trade-off.} 
DiGS and DiffCD combine several weighted losses that must be balanced during
optimization and satisfying each objective is not guaranteed. For example, 
despite eikonal regularization, the eikonal condition is not enforced
in practice (Figure~\ref{fig:grad-slope-curves}).
In our case, $L$ provides a single parameter to study sensitivity to corruption: larger values fit clean observations more closely and smaller ones are more robust to corruption. Since the noise characteristics of a sensor are typically known, or can be
calibrated once on a reference object, $L$ can in principle be set accordingly.
Across our experiments, $L=2$
provides a good intermediate value.

\noindent\textbf{Role of visibility.} To test sensitivity to inaccurate sensor provenance, we perturb the origins $c_i$ while keeping the points $p_i$ fixed. 
We observe that moderate pose perturbations have little effect, whereas random sensor
reassignment makes most ray--point pairs inconsistent and causes
the loss of valid first hits (Appendix~\ref{sec:sensor-shuffle}), showing that acquisition rays carry geometric information. 

\noindent\textbf{Ablations.} Table~\ref{tab:ablations} summarizes the main ablations. Robust depth penalties
matter mainly under outliers: $\ell_1$ remains stable whereas $\ell_2$ degrades
strongly, and adding a miss penalty is harmful as it forces
the surface to pass through outliers. Incidence weighting primarily improves outlier robustness, while empirical rescaling improves performance across all three regimes.

\begin{table}[H]
\caption{SRB ablations, averaged over the five shapes; mean $\pm$ standard
deviation over three seeds.}
\label{tab:ablations}
\centering
\scriptsize
\vspace{-6pt}
\setlength{\tabcolsep}{4pt}
\renewcommand{\arraystretch}{1.2}

\resizebox{\linewidth}{!}{%
\begin{tabular}{lcccccc}
\toprule
\textbf{Variant}
& \multicolumn{2}{c}{\textbf{Clean ($\lambda=0$)}}
& \multicolumn{2}{c}{\textbf{Noise ($\lambda=1$)}}
& \multicolumn{2}{c}{\textbf{Outliers ($\alpha=0.2$)}} \\
\cmidrule(lr){2-3}
\cmidrule(lr){4-5}
\cmidrule(lr){6-7}
&
$\mathrm{CD}_1\downarrow$ & $\mathrm{NC}\uparrow$
& $\mathrm{CD}_1\downarrow$ & $\mathrm{NC}\uparrow$
& $\mathrm{CD}_1\downarrow$ & $\mathrm{NC}\uparrow$ \\
\midrule

Ours ($L=0$)
& $0.163_{\scriptscriptstyle \pm 0.005}$ & $0.958_{\scriptscriptstyle \pm 0.001}$
& $0.310_{\scriptscriptstyle \pm 0.016}$ & $0.906_{\scriptscriptstyle \pm 0.006}$
& $\mathbf{0.173}_{\scriptscriptstyle \pm 0.013}$ & $\mathbf{0.954}_{\scriptscriptstyle \pm 0.003}$ \\

$\ell_1$ loss
& $0.188_{\scriptscriptstyle \pm 0.0004}$ & $0.950_{\scriptscriptstyle \pm 0.0007}$
& $0.296_{\scriptscriptstyle \pm 0.001}$ & $0.911_{\scriptscriptstyle \pm 0.004}$
& $0.185_{\scriptscriptstyle \pm 0.004}$ & $0.950_{\scriptscriptstyle \pm 0.002}$ \\

$\ell_2$ loss
& $0.195_{\scriptscriptstyle \pm 0.002}$ & $0.951_{\scriptscriptstyle \pm 0.001}$
& $0.308_{\scriptscriptstyle \pm 0.002}$ & $\mathbf{0.921}_{\scriptscriptstyle \pm 0.0002}$
& $1.111_{\scriptscriptstyle \pm 0.010}$ & $0.708_{\scriptscriptstyle \pm 0.0015}$ \\

$-\,w_{\theta,i}$
& $0.177_{\scriptscriptstyle \pm 0.046}$ & $0.955_{\scriptscriptstyle \pm 0.016}$
& $0.302_{\scriptscriptstyle \pm 0.036}$ & $0.917_{\scriptscriptstyle \pm 0.013}$
& $0.237_{\scriptscriptstyle \pm 0.006}$ & $0.950_{\scriptscriptstyle \pm 0.0015}$ \\

$-$ rescaling
& $0.199_{\scriptscriptstyle \pm 0.004}$ & $0.947_{\scriptscriptstyle \pm 0.0007}$
& $0.339_{\scriptscriptstyle \pm 0.002}$ & $0.902_{\scriptscriptstyle \pm 0.0005}$
& $0.231_{\scriptscriptstyle \pm 0.009}$ & $0.929_{\scriptscriptstyle \pm 0.003}$ \\

$+$ miss penalty
& $\mathbf{0.158}_{\scriptscriptstyle \pm 0.0005}$ & $\mathbf{0.961}_{\scriptscriptstyle \pm 0.0004}$
& $\mathbf{0.284}_{\scriptscriptstyle \pm 0.007}$ & $0.917_{\scriptscriptstyle \pm 0.004}$
& $1.240_{\scriptscriptstyle \pm 0.102}$ & $0.834_{\scriptscriptstyle \pm 0.002}$ \\

\bottomrule
\end{tabular}%
}
\vspace{-4pt}
\end{table}

\noindent\textbf{Computational cost.}
Each sphere-tracing iteration requires a network evaluation, making the
forward pass costly, while implicit differentiation keeps the backward pass
efficient. Full-budget training is more expensive in wall-clock time
($\sim1$h$40$min, versus up to $40$min for the baselines;
Table~\ref{tab:compute}).
However, Ours ($L=6$) reaches within $2.8\%$ of its final $\mathrm{CD}_1$
after only $40\%$ of training ($\sim40$min).
Peak memory remains $3\times$ below DiGS and DiffCD.
\vspace{-0.4em}
\section{\vspace{-0.6em}Conclusion}

We formulate visibility-aware surface reconstruction as a robust first-hit depth
regression problem. Acquisition rays constrain both surface
position and preceding free space without auxiliary geometric supervision or regularization losses. Experiments show competitive
results on clean data and consistent gains under noise and outliers. This minimal formulation allows us to investigate how increasing $L$ improves fitting the input but also increases sensitivity to corruption.
Sphere tracing
also increases the optimization cost, although most of the accuracy is reached
early in the step budget (Table~\ref{tab:compute}). Natural extensions include jointly refining sensor poses, with respect to which the loss is differentiable, and exploring other 1-Lipschitz architectures. More fundamentally, finite
first-hit measurements do not uniquely determine the underlying surface.
Relating recoverable geometric detail to sampling density, surface regularity,
and model capacity is an important direction for future work.

\section*{Acknowledgments}
This work was supported in part by the French government under management of Agence Nationale de la Recherche as part of the “France 2030” program, PR[AI]RIE-PSAI project, reference ANR-23- IACL-0008. J.P. was supported in part by the Louis Vuitton/ENS chair in artificial intelligence, the Institute of Information \& Communications Technology Planning \& Evaluation (IITP) grant funded by the Korean Government (MSIT) (No. RS-2024-00457882, National AI Research Lab Project), and a Global Distinguished Professorship at the Courant Institute of Mathematical Sciences and the Center for Data Science at New York University.

\clearpage
\raggedbottom
\appendix

\section{Experimental Details}
\label{app:experimental-details}
\label{app:additional-experiments}

\subsection{Acquisition protocols}
\label{app:acquisition}

The acquisition geometries used for SRB and ShapeNet are illustrated in
Figures~\ref{fig:srb-acquisition} and~\ref{fig:shapenet-acquisition}.

\tikzset{
  sensor/.style={
    circle,
    fill=sensordk,
    inner sep=0pt,
    minimum size=4.4pt
  },
  meas/.style={
    circle,
    fill=measred,
    inner sep=0pt,
    minimum size=4.6pt
  },
  lbl/.style={font=\small},
  slbl/.style={font=\footnotesize},
  sphere/.style={
    acqraygray!68,
    line width=.65pt,
    densely dashed
  },
}


\begin{figure}[H]
\centering

\begin{tikzpicture}[x=1cm,y=1cm]

\def\shapeB{
  (10.58,2.74) (11.16,2.48) (11.82,2.62) (12.48,2.40)
  (12.92,2.86) (12.81,3.40) (13.08,3.93) (12.60,4.32)
  (12.12,4.20) (11.58,4.58) (10.96,4.42) (10.66,3.94)
  (10.22,3.55)
}

\coordinate (oB) at (11.58,3.50);
\def\rscan{3.18}

\draw[acqraygray!32,line width=.45pt] (oB) circle(\rscan);

\node[acqraygray,slbl,anchor=west]
  at ($(oB)+(2.45,2.00)$)
  {$\mathbb S(r)$};

\begin{scope}[on background layer]
  \fill[surfblue!7]
    plot[smooth cycle,tension=.70]
    coordinates{\shapeB};
\end{scope}

\draw[name path=surfB,surfblue,line width=1.15pt]
  plot[smooth cycle,tension=.70]
  coordinates{\shapeB};

\node[surfblue,slbl,anchor=west]
  at (12.34,4.52)
  {$\mathcal S^{\star}$};

\coordinate (cB1)  at ($(oB)+(162:\rscan)$);
\coordinate (cB2)  at ($(oB)+(126:\rscan)$);
\coordinate (cB3)  at ($(oB)+(90:\rscan)$);
\coordinate (cB4)  at ($(oB)+(54:\rscan)$);
\coordinate (cB5)  at ($(oB)+(18:\rscan)$);
\coordinate (cB6)  at ($(oB)+(-18:\rscan)$);
\coordinate (cB7)  at ($(oB)+(-54:\rscan)$);
\coordinate (cB8)  at ($(oB)+(-90:\rscan)$);
\coordinate (cB9)  at ($(oB)+(-126:\rscan)$);
\coordinate (cB10) at ($(oB)+(-162:\rscan)$);

\foreach \sensorpos in
  {cB1,cB2,cB3,cB4,cB5,cB6,cB7,cB8,cB9,cB10}{
  \node[sensor] at (\sensorpos){};
}

\node[slbl,anchor=east]  at ($(cB1)+(-0.10,0.02)$) {$c_1$};
\node[slbl,anchor=east]  at ($(cB2)+(-0.08,0.06)$) {$c_2$};
\node[slbl,anchor=south] at ($(cB3)+(0.00,0.10)$) {$c_3$};
\node[slbl,anchor=west]  at ($(cB4)+(0.08,0.06)$) {$c_4$};
\node[slbl,anchor=west]  at ($(cB5)+(0.10,0.02)$) {$c_5$};
\node[slbl,anchor=west]  at ($(cB6)+(0.10,-0.02)$) {$c_6$};
\node[slbl,anchor=west]  at ($(cB7)+(0.08,-0.06)$) {$c_7$};
\node[slbl,anchor=north] at ($(cB8)+(0.00,-0.10)$) {$c_8$};
\node[slbl,anchor=east]  at ($(cB9)+(-0.08,-0.06)$) {$c_9$};
\node[slbl] at ($(cB9)+(-1,0.7)$) {$\cdots$};
\node[slbl,anchor=east]
  at ($(cB10)+(-0.10,-0.02)$) {$c_M$};

\coordinate (g0) at ($(cB1)!0.36!(oB)$);

\coordinate (g1) at ($(g0)+(252:0.34)$);
\coordinate (g2) at ($(g0)+(252:0.17)$);
\coordinate (g3) at (g0);
\coordinate (g4) at ($(g0)+(72:0.17)$);
\coordinate (g5) at ($(g0)+(72:0.34)$);

\draw[acqraygray,line width=.75pt] (g1)--(g5);

\foreach \sample in {g1,g2,g3,g4,g5}{
  \fill[acqraygray] (\sample) circle(1.25pt);
}

\node[
  acqraygray,
  slbl,
  anchor=east,
  align=right
] at ($(g0)+(0.2,-0.7)$)
  {uniform\\line grid};

\path[name path=rayB1] (cB1)--($(cB1)!4!(g1)$);
\path[name path=rayB2] (cB1)--($(cB1)!4!(g2)$);
\path[name path=rayB3] (cB1)--($(cB1)!4!(g3)$);
\path[name path=rayB4] (cB1)--($(cB1)!4!(g4)$);
\path[name path=rayB5] (cB1)--($(cB1)!4!(g5)$);

\path[name intersections={
  of=rayB1 and surfB,
  sort by=rayB1,
  by={pB1,pB1far}
}];

\path[name intersections={
  of=rayB2 and surfB,
  sort by=rayB2,
  by={pB2,pB2far}
}];

\path[name intersections={
  of=rayB3 and surfB,
  sort by=rayB3,
  by={pB3,pB3far}
}];

\path[name intersections={
  of=rayB4 and surfB,
  sort by=rayB4,
  by={pB4,pB4far}
}];

\path[name intersections={
  of=rayB5 and surfB,
  sort by=rayB5,
  by={pB5,pB5far}
}];

\foreach \pt in {pB1,pB2,pB3,pB4,pB5}{
  \draw[acqraygray,line width=.55pt] (cB1)--(\pt);
}

\node[meas] at (pB1){};
\node[meas] at (pB2){};
\node[meas] at (pB3){};
\node[meas] at (pB4){};
\node[meas] at (pB5){};

\node[lbl,anchor=south west]
  at ($(pB3)+(0.15,0.6)$)
  {\tiny $p_1^{(1)}$};

\node[lbl,anchor=south west]
  at ($(pB3)+(0.25,0)$)
  {\tiny $p_1^{(2)}$};

\node[lbl,anchor=south west]
  at ($(pB3)+(0.02,-0.3)$)
  {\tiny $\cdots$};

\node[lbl,anchor=south west]
  at ($(pB3)+(-0.25,-0.85)$)
  {\tiny $p_1^{(n(1))}$};

\end{tikzpicture}

\caption{SRB acquisition protocol. Scanner poses lie on
\(\mathbb S(r)\); rays are cast through a uniform grid, and their first
surface intersections are retained.}
\label{fig:srb-acquisition}
\end{figure}


\begin{figure}[H]
\centering

\begin{tikzpicture}[x=1cm,y=1cm]

\def\shapeA{
  (3.02,2.74) (3.60,2.48) (4.26,2.62) (4.92,2.40)
  (5.36,2.86) (5.25,3.40) (5.52,3.93) (5.04,4.32)
  (4.56,4.20) (4.02,4.58) (3.40,4.42) (3.10,3.94)
  (2.66,3.55)
}

\coordinate (oA) at (4.02,3.50);
\def\rone{2.82}
\def\rtwo{3.18}
\def\raim{1.7}

\draw[acqraygray!42,line width=.45pt] (oA) circle(\rone);
\draw[acqraygray!28,line width=.45pt] (oA) circle(\rtwo);
\draw[name path=aimA,sphere] (oA) circle(\raim);

\node[acqraygray,slbl,anchor=west]
  at ($(oA)+(1.1,1.95)$)
  {$\mathbb{S}(r_1)$};

\node[acqraygray,slbl,anchor=west]
  at ($(oA)+(1.45,2.88)$)
  {$\mathbb{S}(r_2)$};

\node[acqraygray,slbl,anchor=west]
  at ($(2,4.8)$)
  {$\mathbb S_{\mathrm{aim}}$};

\begin{scope}[on background layer]
  \fill[surfblue!7]
    plot[smooth cycle,tension=.70]
    coordinates{\shapeA};
\end{scope}

\draw[surfblue,line width=1.15pt]
  plot[smooth cycle,tension=.70]
  coordinates{\shapeA};

\node[surfblue,slbl,anchor=west]
  at (4.78,4.52)
  {$\mathcal S^{\star}$};

\coordinate (cA1) at ($(oA)+(145:\rone)$);
\coordinate (cA2) at ($(oA)+(95:\rone)$);
\coordinate (cA3) at ($(oA)+(25:\rone)$);
\coordinate (cA4) at ($(oA)+(-45:\rone)$);
\coordinate (cA5) at ($(oA)+(-125:\rone)$);

\coordinate (cA6)  at ($(oA)+(165:\rtwo)$);
\coordinate (cA7)  at ($(oA)+(120:\rtwo)$);
\coordinate (cA8)  at ($(oA)+(40:\rtwo)$);
\coordinate (cA9)  at ($(oA)+(-70:\rtwo)$);
\coordinate (cA10) at ($(oA)+(-150:\rtwo)$);

\foreach \sensorpos in
  {cA1,cA2,cA3,cA4,cA5,cA6,cA7,cA8,cA9,cA10}{
  \node[sensor] at (\sensorpos){};
}

\node[slbl,anchor=east]  at ($(cA1)+(-0.08,0.02)$) {$c_1$};
\node[slbl,anchor=south] at ($(cA2)+(0,0.10)$) {$c_2$};
\node[slbl,anchor=west]  at ($(cA3)+(0.08,0.02)$) {$c_3$};
\node[slbl,anchor=west]  at ($(cA4)+(0.10,-0.02)$) {$c_4$};
\node[slbl,anchor=east]  at ($(cA5)+(-0.10,-0.02)$) {$c_5$};
\node[slbl,anchor=east]  at ($(cA6)+(-0.10,0.02)$) {$c_6$};
\node[slbl,anchor=south] at ($(cA7)+(0,0.10)$) {$c_7$};
\node[slbl,anchor=west]  at ($(cA8)+(0.08,0.02)$) {$c_8$};
\node[slbl,anchor=west]  at ($(cA9)+(0.10,-0.02)$) {$c_9$};
\node[slbl,anchor=east]  at ($(cA10)+(-0.10,-0.02)$) {$c_{10}$};

\coordinate (pA1) at (3.02,2.74);
\coordinate (pA2) at (2.66,3.55);
\coordinate (pA3) at (3.10,3.94);
\coordinate (pA4) at (3.40,4.42);

\foreach \pt in {pA1,pA2,pA3,pA4}{
  \draw[acqraygray,line width=.55pt] (cA6)--(\pt);
}

\node[meas] at (pA1){};
\node[meas] at (pA2){};
\node[meas] at (pA3){};
\node[meas] at (pA4){};

\path[name path=rayA1] (cA6)--(pA1);
\path[name path=rayA2] (cA6)--(pA2);
\path[name path=rayA3] (cA6)--(pA3);
\path[name path=rayA4] (cA6)--(pA4);

\path[name intersections={of=rayA1 and aimA, by=qA1}];
\path[name intersections={of=rayA2 and aimA, by=qA2}];
\path[name intersections={of=rayA3 and aimA, by=qA3}];
\path[name intersections={of=rayA4 and aimA, by=qA4}];

\foreach \qt in {qA1,qA2,qA3,qA4}{
  \fill[acqraygray] (\qt) circle(1.25pt);
}

\node[slbl,anchor=south west]
  at ($(pA3)+(0.25,0.1)$)
  {\tiny $p_6^{(1)}$};

\node[slbl,anchor=south west]
  at ($(pA3)+(0,-0.5)$)
  {\tiny $\cdots$};

\node[slbl,anchor=south west]
  at ($(pA3)+(-0.05,-1.4)$)
  {\tiny $p_6^{(n(6))}$};

\node[acqraygray,slbl,anchor=south west]
  at ($(qA3)+(0.2,0.24)$)
  {\tiny $q_6^{(1)}$};

\node[acqraygray,slbl,anchor=south west]
  at ($(qA3)+(-0.13,-0.4)$)
  {\tiny $\cdots$};

\node[acqraygray,slbl,anchor=south west]
  at ($(qA3)+(-1,-1.25)$)
  {\tiny $q_6^{(n(6))}$};

\node[
  draw=black!35,
  fill=black!4,
  rounded corners=.8pt,
  inner sep=.8pt,
  font=\tiny,
  anchor=west
] at (3,1.52)
  {$q_6^{(r)}\sim\mathrm{Unif}(\mathbb S_{\mathrm{aim}})$};

\end{tikzpicture}

\caption{ShapeNet acquisition protocol. Virtual sensors lie on two concentric
spheres; uniformly sampled targets on the enclosing sphere define ray
directions, and the first surface intersections are retained.}
\label{fig:shapenet-acquisition}
\end{figure}

\subsection{Additional real-world acquisitions}
\label{app:additional-evaluations}

\paragraph{MVS reconstruction.}
We further evaluate on four DTU scenes~\citep{aanaes2016large}, using point
clouds reconstructed by fusing depth maps estimated with
ACMMP~\citep{xu2023acmmp}.

\begin{figure}[H]
\centering
\includegraphics[width=\linewidth]{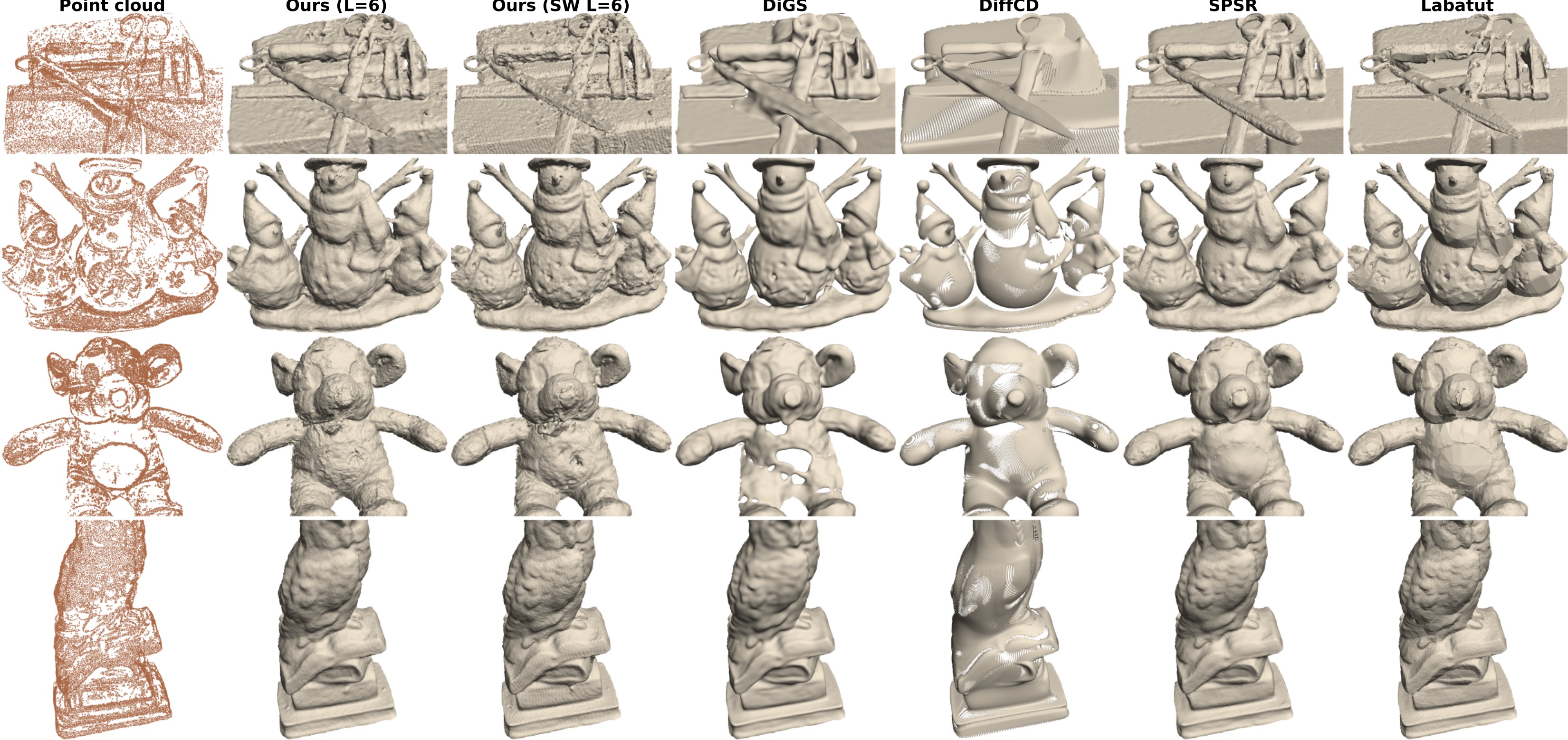}
\caption{Qualitative reconstruction on DTU MVS scenes 37, 69, 105 and 122. Ours (SW $L=6$) uses Sandwich layers (SW; Appendix~\ref{sec:sandwich-layer}).}
\label{fig:dtu}
\end{figure}

\subsection{Architecture and implementation}
\label{sec:implementation}

\paragraph{Input embedding.}
For our CPL implementation, following~\citet{meunier2022cpl}, we instantiate \(x\mapsto Ax\) as the fixed
zero-padding map
\begin{equation}
A=
\begin{bmatrix}
I_3\\
0
\end{bmatrix}.
\end{equation}
This embedding is isometric since
\(A^\top A=I_3\).

The backbone \(\Phi_\theta\) interleaves two types of \(1\)-Lipschitz layers.

\paragraph{MaxMin activation.}
For \(z\in\mathbb{R}^D\), we apply the MaxMin operator, which sorts coordinates
in pairs:
\begin{equation}
\operatorname{MaxMin}(z)
=
\bigl(
\max(z_{2k-1},z_{2k}),
\min(z_{2k-1},z_{2k})
\bigr)_{k=1}^{D/2}.
\end{equation}
MaxMin is norm-preserving and \(1\)-Lipschitz with respect to the
\(\ell_2\)-norm~\citep{anil2019sorting}.

\paragraph{Convex Potential Layers.}
CPL layers are parameterized as
\begin{equation}
\ell(z)
=
z-
\frac{2}{\|W\|_2^2}
W^\top\sigma(Wz+b),
\end{equation}
where \(\sigma=\operatorname{ReLU}\),
\(W\in\mathbb{R}^{m\times D}\), and \(\|W\|_2\) is the spectral norm.
By \citet{meunier2022cpl}, this residual mapping is \(1\)-Lipschitz.
We use CPL as a practical compromise between expressivity, memory, speed, and
certifiable Lipschitz control~\citep{prach2024layers}.

In practice, we approximate each spectral norm by power
iteration~\citep{meunier2022cpl}. Starting from a unit vector \(u\), we iterate
\begin{equation}
v
\leftarrow
\frac{Wu}{\|Wu\|_2},
\qquad
u
\leftarrow
\frac{W^\top v}{\|W^\top v\|_2},
\end{equation}
and estimate the dominant singular value as
\begin{equation}
\widehat{\sigma}(W)=v^\top Wu.
\end{equation}
With one power-iteration step per gradient step, $\widehat{\sigma}(W)$
matches the exact spectral norm $\|W\|_2$ to a median relative error of
$3\times10^{-13}\%$ (worst case $0.06\%$) across all CPL layers of 49
trained SRB models at their final checkpoint.

\paragraph{Implementation details.}
We use batches of \(|\mathcal D|\) rays for SRB and $16384$ for ShapeNet and real acquisitions and, following
DiGS and IGR~\citep{gropp2020igr,benshabat2022digs},
we extract meshes with marching cubes~\citep{lorensen1987marching} using a grid whose shortest axis
has 512 samples.

\paragraph{Sphere initialization.}

For a random initialization \(\theta_0\) of \(\Phi_\theta\), we sample \(M\) points \(x_j\sim\mathcal U([-ar_{\mathrm{init}},ar_{\mathrm{init}}]^3)\), with \(a>1\), and fit

$$ (\widetilde v,\widetilde\beta) \in\arg\min_{v,\beta}\sum_{j=1}^M \Bigl( v^\top\Phi_{\theta_0}(Ax_j)+\beta -(\|x_j\|_2-r_{\mathrm{init}}) \Bigr)^2. $$

We then set

$$ f_{\theta_0}(x) = \frac{ \widetilde v^\top\Phi_{\theta_0}(Ax)+\widetilde\beta }{ \|\widetilde v\|_2 }. $$

This preserves the fitted zero set and ensures \(\Lip(f_{\theta_0})\leq1\).

\subsection{Baseline details}
\label{app:baseline-details}

\paragraph{Visibility-supervised NDF.}
\label{app:ndf-vis}
We adapt \citet{coiffier2024lipschitz}, replacing its normal-derived winding-number supervision with visibility constraints. Points before $p_i$ are labeled outside,
\begin{equation}
x^{\mathrm{out}}_i
=
p_i-\varepsilon_i d_i,
\end{equation}
whereas points immediately behind it are labeled inside,
\begin{equation}
x^{\mathrm{in}}_i
=
p_i+\varepsilon_i d_i.
\end{equation}
Here,
\(\varepsilon_i=u_i s_i\),
\(u_i\sim\mathcal U(0.5,2)\), and \(s_i\) is the local nearest-neighbor
spacing. We additionally sample free-space points along
\(c_i+t(p_i-c_i)\), with \(t\in(0,1)\).
The original \(1\)-Lipschitz Semi-definite Programming Lipschitz Layer (SLL)
architecture and hKR objective are retained, with margin \(m=0.002\).

\subsection{Evaluation metrics}
\label{sec:metrics}

Let \(P\) and \(Q\) denote point sets sampled from the predicted and reference
surfaces, respectively.

\paragraph{Chamfer distance.}
We report the symmetric Chamfer distance
\begin{equation}
\label{eq:chamfer}
\mathrm{CD}_{q}(P,Q)
=
\frac{1}{2|P|}
\sum_{x\in P}
\min_{y\in Q}
\|x-y\|_2^{q}
+
\frac{1}{2|Q|}
\sum_{y\in Q}
\min_{x\in P}
\|y-x\|_2^{q}.
\end{equation}
Following DiGS~\citep{benshabat2022digs}, we use the unsquared distance
(\(q=1\)) on SRB and the squared distance (\(q=2\)) on ShapeNet.
We sample \(10^6\) points uniformly from each reconstructed mesh and
reference mesh. The reference surface is the reconbench mesh from which the
SRB scans were generated~\citep{berger2013benchmark}, and the watertight mesh
of \citet{mescheder2019occupancy} on ShapeNet.

\paragraph{Volumetric IoU.}
Volumetric intersection-over-union is estimated using a common set
\(\mathcal X\) of query points:
\begin{equation}
\label{eq:iou}
\mathrm{IoU}
=
\frac{
\sum_{z\in\mathcal X}
\mathbf{1}
\!\left[
z\in\Omega_{\mathrm{pred}}\cap\Omega_{\mathrm{gt}}
\right]
}{
\sum_{z\in\mathcal X}
\mathbf{1}
\!\left[
z\in\Omega_{\mathrm{pred}}\cup\Omega_{\mathrm{gt}}
\right]
}.
\end{equation}
We report IoU only on ShapeNet, following the DiGS occupancy protocol.

\paragraph{Normal consistency.}
Following~\citet{sulzer2025survey}, we sample \(10^5\)
oriented point--normal pairs from each surface and compute
\begin{equation}
\label{eq:normal-consistency}
\begin{aligned}
\mathrm{NC}(P,Q)
={}&
\frac{1}{2|P|}
\sum_{x\in P}
\left|
n_P(x)^\top
n_Q\!\left(\operatorname{NN}_{Q}(x)\right)
\right|
\\
&+
\frac{1}{2|Q|}
\sum_{y\in Q}
\left|
n_Q(y)^\top
n_P\!\left(\operatorname{NN}_{P}(y)\right)
\right|.
\end{aligned}
\end{equation}
Here, \(\operatorname{NN}_{Q}(x)\) denotes the nearest neighbor of \(x\) in
\(Q\), and analogously for \(\operatorname{NN}_{P}(y)\).

\paragraph{Observation-supported metrics.}
Inspired by the observation-supported evaluation of
\citet{marchand2021evaluating}, we additionally evaluate only surface samples
supported by the acquisition. Let \(A\) denote the clean acquisition point cloud and
\(\widetilde A\) the corresponding corrupted point cloud. We filter samples rather
than faces to avoid triangulation-dependent masking. For surface samples
\(P\) and \(Q\) from the predicted and reference meshes, respectively, we retain
\begin{equation}
P^{\mathrm{obs}}
=
\{x\in P:d(x,\widetilde A)<\beta h_0\},
\qquad
Q^{\mathrm{obs}}
=
\{y\in Q:d(y,A)<\beta h_0\},
\end{equation}
where
\begin{equation}
h_0
=
\operatorname{median}_{p_i\in A}
d\!\left(p_i,A\setminus\{p_i\}\right).
\end{equation}
We use \(\beta=2\) unless stated otherwise, and compute
\(\mathrm{CD}^{\mathrm{obs}}\) and
\(\mathrm{NC}^{\mathrm{obs}}\) by applying
Eqs.~\eqref{eq:chamfer} and~\eqref{eq:normal-consistency} to the retained
samples.

\begin{figure}[!ht]
\centering
\includegraphics[width=0.48\linewidth]{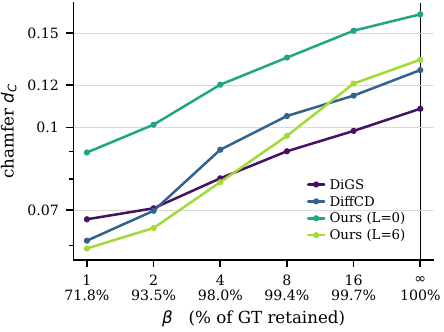}
\caption{Sensitivity of $\mathrm{CD}_1^{\mathrm{obs}}$ to the support
threshold \(\beta h_0\) on clean SRB (\(\lambda=0\)), averaged over the five
shapes; \(\beta=\infty\) recovers $\mathrm{CD}_1$.}
\label{fig:srb-beta-sweep}
\end{figure}

\begin{figure}[!ht]
\centering
\includegraphics[width=0.9\linewidth]{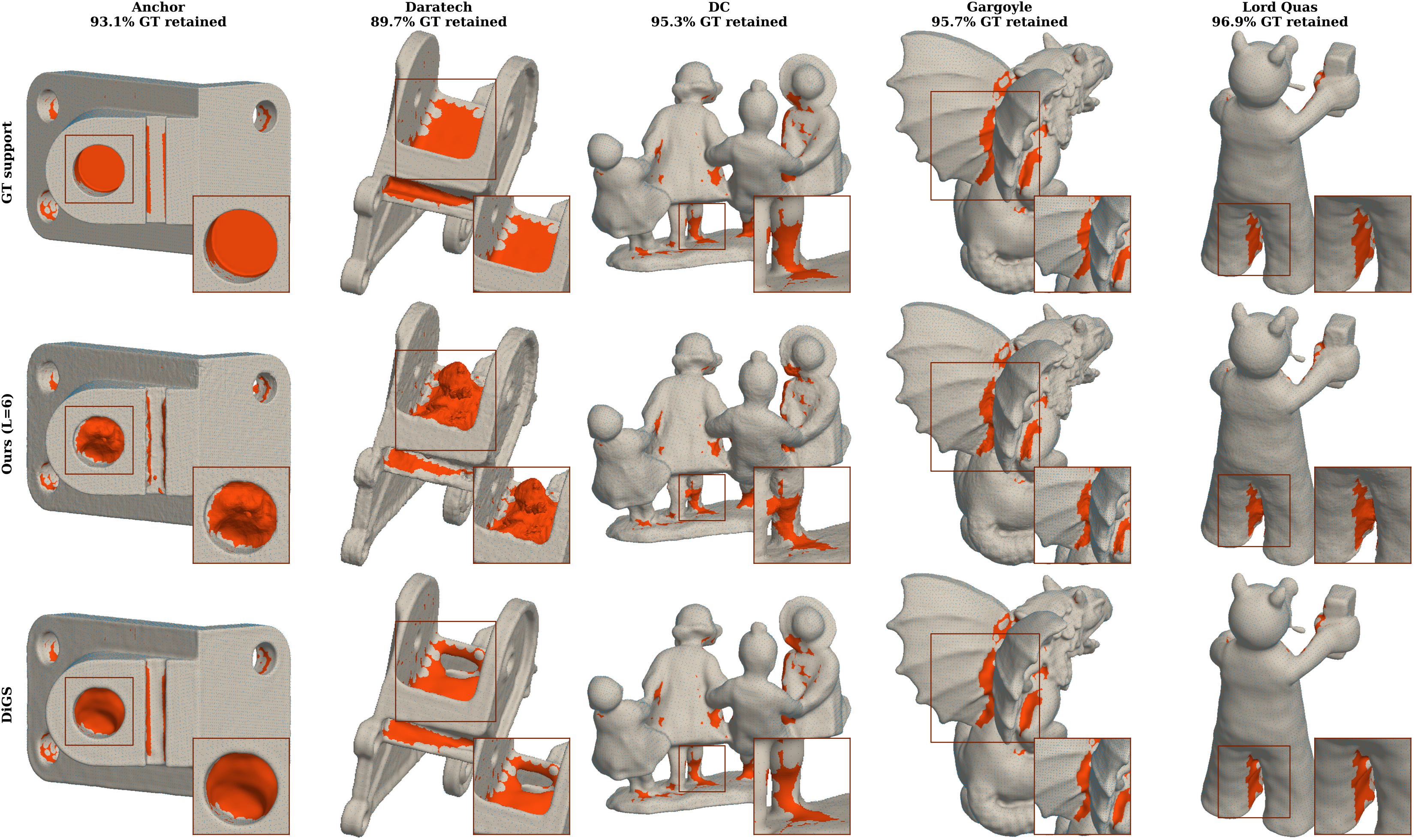}
\caption{Observation-supported SRB regions under the \(2h_0\) support mask;
orange denotes removed surface regions.}
\label{fig:observed-regions}
\end{figure}

\paragraph{ARKitScenes metrics.}
\label{sec:arkitscenes-metrics}
We evaluate rendered depth as in neural reconstruction~\citep{zhu2022niceslam,wang2023coslam}, ray-casting each predicted
mesh with Embree along FARO-derived reference rays.

For a ray \((o_i,d_i)\) with reference depth \(t_i^{\rm GT}\), when an intersection exists, let
\begin{equation}
t_i^{\rm pred}
=
\inf\{t\ge0:o_i+t d_i\in\mathcal S_{\rm pred}\},
\end{equation}
and let \(H\) denote the set of such rays.
We report
\begin{equation}
\mathrm{MAE}
=
\frac{1}{|H|}
\sum_{i\in H}
|t_i^{\rm pred}-t_i^{\rm GT}|,
\qquad
\mathrm{HitRate}
=
\frac{|H|}{N},
\end{equation}
and the all-ray depth inlier rate
\begin{equation}
I@\tau
=
\frac{1}{N}
\sum_{i=1}^{N}
\mathbf{1}
\!\left[
i\in H
\ \wedge\
|t_i^{\rm pred}-t_i^{\rm GT}|\le\tau
\right].
\end{equation}

\begin{figure}[H]
\centering
\includegraphics[width=0.62\linewidth]{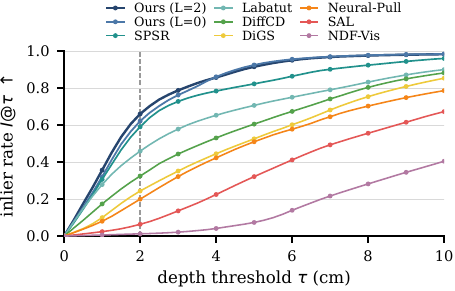}
\caption{ARKitScenes held-out depth inlier rate \(I@\tau\) as a function of
the threshold \(\tau\).}
\label{fig:arkit-inlier-curve}
\end{figure}

\section{Theory and Guarantees}
\label{app:theory}

\subsection{Lipschitz preliminaries}
\label{app:lipschitz-def}

For \(f:\mathbb{R}^d\to\mathbb{R}^m\), define
\begin{equation}
\Lip(f)
=
\sup_{x\neq x'}
\frac{\|f(x)-f(x')\|_2}{\|x-x'\|_2}.
\end{equation}
The function \(f\) is Lipschitz if \(\Lip(f)<\infty\), and
\(C\)-Lipschitz, for \(C\ge0\), if \(\Lip(f)\le C\).

\subsection{sphere tracing convergence}
\label{app:proof-sphere tracing}

We first show that \(t_k\le t_i(\theta)\). This holds for the base case
\(t_0=0\). If \(t_k<t_i(\theta)\), since \(\|d_i\|_2=1\) and
\(f_\theta(c_i+t_i(\theta)d_i)=0\),
\begin{equation}
\begin{aligned}
\left|f_\theta(c_i+t_kd_i)\right|
&=
\left|
f_\theta(c_i+t_kd_i)
-
f_\theta(c_i+t_i(\theta)d_i)
\right|
\\
&\le
t_i(\theta)-t_k.
\end{aligned}
\end{equation}
Hence
\begin{equation}
t_{k+1}
=
t_k+\left|f_\theta(c_i+t_kd_i)\right|
\le
t_i(\theta).
\end{equation}
Thus, \((t_k)\) is non-decreasing and bounded above, hence converges to a limit
\(\bar t\le t_i(\theta)\). Since
\begin{equation}
\left|f_\theta(c_i+t_kd_i)\right|
=
t_{k+1}-t_k
\longrightarrow 0,
\end{equation}
continuity gives
\begin{equation}
f_\theta(c_i+\bar t d_i)=0.
\end{equation}
By definition of the first root,
\(\bar t=t_i(\theta)\).

Furthermore, since \(t\mapsto f_\theta(c_i+t d_i)\) is differentiable at
\(t_i(\theta)\) and
\begin{equation}
\nabla_x
f_\theta(c_i+t_i(\theta)d_i)^\top d_i
\neq 0,
\end{equation}
there exist constants \(\kappa\in(0,1]\) and \(\gamma>0\) such that
\begin{equation}
\left|f_\theta(c_i+td_i)\right|
\ge
\kappa(t_i(\theta)-t)
\end{equation}
for all \(t\in(t_i(\theta)-\gamma,t_i(\theta))\). 
Therefore, once \(t_k\) enters this neighborhood, the error
\(e_k=t_i(\theta)-t_k\) satisfies
\begin{equation}
e_{k+1}
=
e_k-\left|f_\theta(c_i+t_kd_i)\right|
\le
(1-\kappa)e_k.
\end{equation}
Hence sphere tracing converges locally at a linear rate.
\hfill\(\square\)

\subsection{Incidence-weighted residual}
\label{app:incidence-correction}

\begin{proof}
After a rigid change of coordinates, assume
\(x_{\theta,i}=0\) and \(n_{\theta,i}=e_3\). Since
\(p_i-x_{\theta,i}=(t_i-t_i(\theta))d_i\) and \(\|d_i\|_2=1\), writing
\(p_i=(a_i,b_i)\in\mathbb R^2\times\mathbb R\) yields
\begin{equation}
\|(a_i,b_i)\|_2
=
|t_i-t_i(\theta)|,
\qquad
|b_i|
=
w_{\theta,i}|t_i-t_i(\theta)|.
\label{eq:incidence-proof-coordinates}
\end{equation}

Locally, \(\mathcal S_\theta\) is the graph
\(\{(a,g(a))\}\), where \(g(0)=0\) and \(\nabla g(0)=0\).
Let \((a_i^\star,g(a_i^\star))\) be a closest point to \(p_i\).
Since \(0\in\mathcal S_\theta\),
\begin{equation}
\|(a_i^\star,g(a_i^\star))\|_2
\le
\operatorname{dist}(p_i,\mathcal S_\theta)+\|p_i\|_2
\le
2\|p_i\|_2,
\label{eq:closest-point-local}
\end{equation}
so \(a_i^\star=O(|t_i-t_i(\theta)|)\), and the closest point lies in this
graph neighborhood for \(t_i\) sufficiently close to \(t_i(\theta)\).
Moreover,
\begin{equation}
|b_i|-|g(a_i^\star)|
\le
\operatorname{dist}(p_i,\mathcal S_\theta)
\le
|b_i-g(a_i)|
\le
|b_i|+|g(a_i)|.
\label{eq:incidence-proof-bounds}
\end{equation}

If \(f_\theta\) is CPWL and affine near \(x_{\theta,i}\), then \(g=0\)
locally and
\begin{equation}
\operatorname{dist}(p_i,\mathcal S_\theta)=|b_i|.
\end{equation}
Combining this with
Eqs.~\eqref{eq:incidence-proof-coordinates}--%
\eqref{eq:incidence-proof-bounds}
proves the claim.
\end{proof}

\begin{remark}
If \(\mathcal S_\theta\) is \(C^1\) near \(x_{\theta,i}\), then
\begin{equation}
\operatorname{dist}(p_i,\mathcal S_\theta)
=
w_{\theta,i}|t_i-t_i(\theta)|
+
o\!\left(|t_i-t_i(\theta)|\right),
\end{equation}
since \(g(a)=o(\|a\|_2)\). If \(\mathcal S_\theta\) is \(C^2\), Taylor's
theorem gives \(g(a)=O(\|a\|_2^2)\), hence
\begin{equation}
\operatorname{dist}(p_i,\mathcal S_\theta)
=
w_{\theta,i}|t_i-t_i(\theta)|
+
O\!\left(|t_i-t_i(\theta)|^2\right).
\end{equation}
\end{remark}

\subsection{Volumetric free space induced by ray supervision}
\label{app:certified-free-space}

Although certification is initially available only along the finite set of acquisition rays, the 1-Lipschitz constraint extends this guarantee to a three-dimensional neighborhood.

Let
$\Omega_\tau
=
B(0,\tau r_{\mathrm{init}})$
 with $\tau\in(0,1]$
denote a concentric sub-ball of the initialization ball \(\Omega_1\).

For each \(i\) such that \(t_i(\theta)<\infty\), let
\(x_{i,0},\ldots,x_{i,k(i)}\) denote its sphere tracing iterates.

\begin{proposition}[Certified free space]
\label{prop:certified-free-space}
Assume \(f_\theta\) is \(1\)-Lipschitz. Then
\begin{equation}
B^\circ\!\left(
x_{i,j},
|f_\theta(x_{i,j})|
\right)
\cap
\mathcal S_\theta
=
\varnothing.
\label{eq:certified-single-ball}
\end{equation}
Consequently,
\begin{equation}
\mathcal C_\theta(\mathcal D;\Omega_\tau)
=
\Omega_\tau
\cap
\bigcup_{i:t_i(\theta)<\infty}
\bigcup_{j=0}^{k(i)-1}
B^\circ\!\left(
x_{i,j},
|f_\theta(x_{i,j})|
\right)
\label{eq:certified-free-space}
\end{equation}
satisfies
\begin{equation}
\mathcal C_\theta(\mathcal D;\Omega_\tau)
\cap
\mathcal S_\theta
=
\varnothing.
\end{equation}
\end{proposition}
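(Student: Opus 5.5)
The plan is a direct argument from the definition of the Lipschitz constant recalled in Appendix~\ref{app:lipschitz-def}. No property of sphere tracing itself is needed beyond the fact that the iterates $x_{i,j}$ are points of $\mathbb R^3$ at which $f_\theta$ is evaluated.

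\textbf{Step 1: a single ball is disjoint from the surface.} Fix $i$ and $j$, and put $r=|f_\theta(x_{i,j})|$. If $r=0$, the open ball is empty and there is nothing to prove. Otherwise, suppose for contradiction that some $y\in B^\circ(x_{i,j},r)$ satisfies $f_\theta(y)=0$. The $1$-Lipschitz assumption then gives
\begin{equation*}
r=|f_\theta(x_{i,j})-f_\theta(y)|\le\|x_{i,j}-y\|_2<r,
\end{equation*}
which is impossible. This proves Eq.~\eqref{eq:certified-single-ball}. I will phrase it as the contrapositive: every $y$ in the open ball satisfies $|f_\theta(y)|\ge r-\|y-x_{i,j}\|_2>0$, so $y\notin\mathcal S_\theta$.

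\textbf{Step 2: pass to the union.} The set $\mathcal C_\theta(\mathcal D;\Omega_\tau)$ is the intersection of $\Omega_\tau$ with a union of balls, each of which misses $\mathcal S_\theta$ by Step~1. A union of sets disjoint from $\mathcal S_\theta$ is disjoint from $\mathcal S_\theta$, and intersecting with $\Omega_\tau$ only shrinks the set. The index ranges play no role here: the restriction to rays with $t_i(\theta)<\infty$ and to $j\le k(i)-1$ only selects which balls appear. I will still remark that the last iterate $x_{i,k(i)}$ is excluded because it is the returned hit, where $|f_\theta|\le\varepsilon$, and its ball carries essentially no information.

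\textbf{Main obstacle.} The mathematics is elementary, so the only real care point is the standing hypothesis. The claim requires $f_\theta$ to be genuinely $1$-Lipschitz on the region covered by the balls. That holds for the certified architecture of Section~\ref{sec:lip-model}, but it fails for the empirically rescaled and positional-encoding variants. I will therefore state explicitly that the proposition applies to the certified model, or to $f_\theta/G_\theta(\Omega)$ if the balls lie in the convex set $\Omega$. Convexity of $\Omega$ is what lets the restricted Lipschitz bound apply along the segment from $x_{i,j}$ to $y$.
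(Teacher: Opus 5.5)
Your proof is correct and follows the paper's argument: for every $y$ in the open ball, the reverse-triangle bound gives $|f_\theta(y)|\ge|f_\theta(x_{i,j})|-\|y-x_{i,j}\|_2>0$, and you then take the union over pre-hit iterates and intersect with $\Omega_\tau$. Your extra remarks, on the degenerate case $r=0$ and on the hypothesis failing for the empirically rescaled or positional-encoding variants, are sound but not needed for the statement as posed.
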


\begin{proof}
For any
\(
y\in
B^\circ(x_{i,j},|f_\theta(x_{i,j})|)
\),
the \(1\)-Lipschitz property gives
\begin{equation}
|f_\theta(y)|
\ge
|f_\theta(x_{i,j})|
-
\|y-x_{i,j}\|_2
>
0.
\end{equation}
Taking the union over all pre-hit iterates and intersecting with
\(\Omega_\tau\) proves the result.
\end{proof}

Let \(E^\star\subset\mathbb R^3\) denote the ground-truth solid.
We report the certified free-space volume normalized by the amount of
ground-truth free space contained in \(\Omega_\tau\):
\begin{equation}
\operatorname{CFV}_{\tau}(\theta;\mathcal D)
=
\frac{
\operatorname{Vol}\!\left(
\mathcal C_\theta(\mathcal D;\Omega_\tau)
\cap(\mathbb R^3\setminus E^\star)
\right)
}{
\operatorname{Vol}\!\left(
\Omega_\tau\cap(\mathbb R^3\setminus E^\star)
\right)
}.
\label{eq:cfv}
\end{equation}

\begin{table}[H]
\caption{Certified free-space volume
\(\operatorname{CFV}_\tau\) (\%) for
Ours ($L = 0$, no rescale) within concentric sub-balls of the
initialization domain; \(\tau=1\) denotes the full initialization ball.}
\label{tab:certified-free-space}
\centering
\begin{tabular}{ccccc}
\toprule
Relative radius \(\tau\)
& \(\lambda=0\)
& \(\lambda=1\)
& \(\alpha=0\)
& \(\alpha=0.2\) \\
\midrule
1.0 & 76.7 & 72.2 & 81.5 & 88.1 \\
0.9 & 80.9 & 75.9 & 85.3 & 86.3 \\
0.8 & 83.1 & 77.6 & 86.8 & 82.5 \\
0.7 & 83.9 & 77.6 & 87.0 & 78.2 \\
0.6 & 82.4 & 75.3 & 85.1 & 72.9 \\
\bottomrule
\end{tabular}
\end{table}

\begin{figure}[t]
\centering
\includegraphics[width=0.6\linewidth]{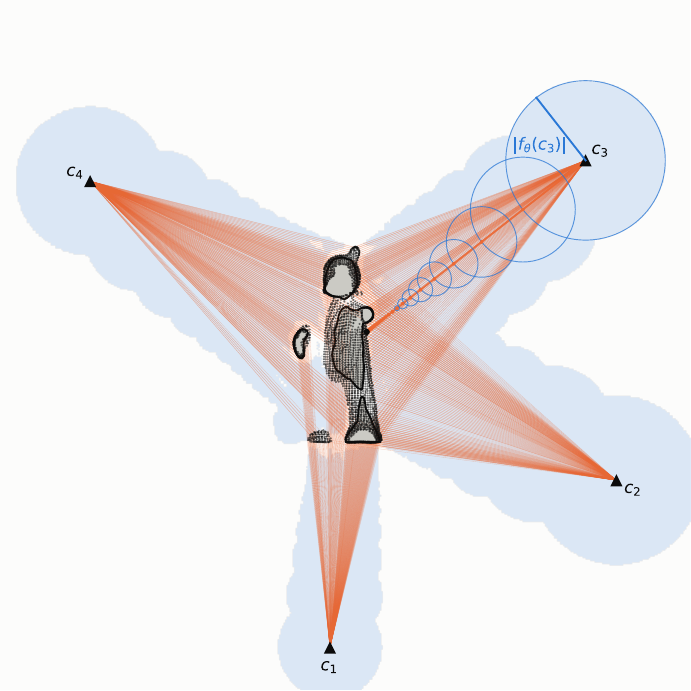}
\caption{Certified volumetric free-space regions on a single SRB shape.}
\label{fig:certified-slice}
\end{figure}

\section{Ablations and Diagnostics}
\label{app:ablations-diagnostics}

\subsection{Objective ablations}

\begin{table}[H]
\caption{Additional SRB objective ablations. Results use a single seed and are
averaged over the five SRB shapes. ReLU + eikonal uses the same first-hit loss and differs from Ours only by its architecture.}
\label{tab:full-ablations}
\centering
\scriptsize
\setlength{\tabcolsep}{4.5pt}
\renewcommand{\arraystretch}{1.08}

\begin{tabular}{lcccccc}
\toprule
\textbf{Variant}
& \multicolumn{2}{c}{\textbf{SRB \(\lambda=0\)}}
& \multicolumn{2}{c}{\textbf{SRB \(\lambda=1\)}}
& \multicolumn{2}{c}{\textbf{SRB \(\alpha=0.2\)}} \\
\cmidrule(lr){2-3}
\cmidrule(lr){4-5}
\cmidrule(lr){6-7}
&
\(\boldsymbol{\mathrm{CD}}_1\downarrow\)
& \(\mathbf{NC}\uparrow\)
& \(\boldsymbol{\mathrm{CD}}_1\downarrow\)
& \(\mathbf{NC}\uparrow\)
& \(\boldsymbol{\mathrm{CD}}_1\downarrow\)
& \(\mathbf{NC}\uparrow\) \\
\midrule

ReLU + eikonal
& 0.366 & 0.906
& 0.421 & 0.891
& 1.249 & 0.786 \\

CPL + eikonal
& 3.013 & 0.685
& 2.946 & 0.689
& 4.963 & 0.628 \\

CPL + hKR
& 0.576 & 0.940
& 0.719 & 0.940
& 1.226 & 0.915 \\

\midrule

\textbf{Ours ($L=0$)}
& \textbf{0.158} & \textbf{0.959}
& \textbf{0.296} & \textbf{0.911}
& \textbf{0.165} & \textbf{0.956} \\

\bottomrule
\end{tabular}
\end{table}
The ReLU + eikonal control consistently underperforms Ours ($L=0$), suggesting that eikonal regularization alone does not recover the benefits of the architectural Lipschitz constraint.
\\
For the miss-penalty ablation, rays for which sphere tracing returns no
intersection incur an additional penalty \(f_\theta(p_i)^2\). The objective is
\begin{equation}
\mathcal L(\theta)
+
\lambda_{\mathrm{miss}}
\mathcal L_{\mathrm{miss}}(\theta),
\end{equation}
with \(\lambda_{\mathrm{miss}}=5\).

\subsection{Huber-scale sensitivity}
\label{app:huber-delta}

We set
\begin{equation}
\delta
=
\begin{cases}
0.005,
&
N^{-1}\sum_i
\mathbf{1}[s_i>4\widetilde s]
>
0.5\%,
\\
0.05,
&
\text{otherwise},
\end{cases}
\end{equation}
where \(s_i\) is the nearest-neighbor distance of \(p_i\) and
\(\widetilde s\) its median.

\begin{figure}[H]
\centering
\includegraphics[width=0.48\linewidth]{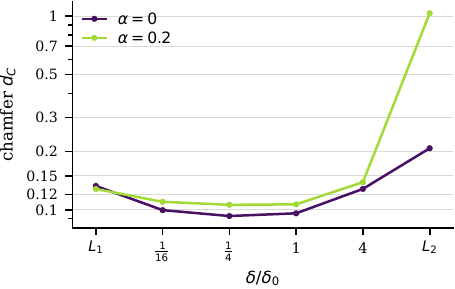}
\caption{Sensitivity of Ours ($L=0$) to the Huber scale \(\delta\) on a single SRB
shape, with \(\delta\) normalized by its default value \(\delta_0\). We use
\(\delta_0=0.05\) on clean data and \(\delta_0=0.005\) with outliers
(\(\alpha=0.2\)); \(\ell_1\) and \(\ell_2\) losses are shown for reference.}
\label{fig:huber-delta}
\end{figure}

\subsection{Backbone architecture}
\label{sec:sandwich-layer}
Since our formulation is agnostic to the particular $1$-Lipschitz architecture, we evaluate replacing the CPL backbone
\(\Phi_\theta\) with Sandwich layers~\citep{wang2023direct},
\begin{equation}
z
\mapsto
\sqrt{2}\,
U^\top\Psi\,
\sigma\!\left(
\sqrt{2}\Psi^{-1}Vz+b
\right),
\qquad
UU^\top+VV^\top=I,
\end{equation}
while keeping the training protocol unchanged.
Table~\ref{tab:cpl-sandwich} shows that the first-hit formulation remains
effective with this alternative \(1\)-Lipschitz backbone.

\begin{table}[!ht]
\caption{Preliminary single seed comparison of CPL and Sandwich backbones on SRB. }
\label{tab:cpl-sandwich}
\centering
\scriptsize
\setlength{\tabcolsep}{4pt}
\renewcommand{\arraystretch}{1.05}

\begin{tabular}{llcccc}
\toprule
\textbf{Setting} & \textbf{Backbone}
&$\mathrm{CD}_1$  $\downarrow$
& $\mathrm{NC}\uparrow$
& $\mathrm{CD}_1^{\mathrm{obs}}\downarrow$
& $\mathrm{NC}^{\mathrm{obs}}\uparrow$
\\
\midrule

$\lambda=0$
& CPL
& 0.1584 & 0.9597 & 0.0984 & 0.9698 \\
& Sandwich
& \textbf{0.1307} & \textbf{0.9728}
& \textbf{0.0801} & \textbf{0.9820} \\

\midrule

$\lambda=1$
& CPL
& 0.2950 & \textbf{0.9139} & \textbf{0.2205} & \textbf{0.9271} \\
& Sandwich
& \textbf{0.2818} & 0.9084 & 0.2229 & 0.9204 \\

\midrule

$\alpha=0.2$
& CPL
& 0.1650 & 0.9571 & 0.1062 & 0.9697 \\
& Sandwich
& \textbf{0.1311} & \textbf{0.9687}
& \textbf{0.0882} & \textbf{0.9791} \\

\bottomrule
\end{tabular}
\end{table}

\subsection{Depth residual and projection distance}
\label{app:projection-distance}

\begin{figure}[H]
\centering
\includegraphics[width=\linewidth]{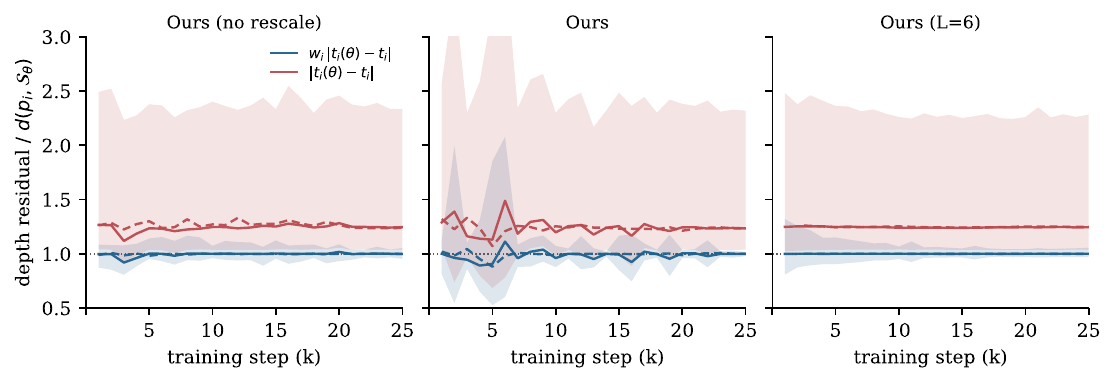}
\caption{Ratio of the depth residual to the true projection distance
\(d(p_i,\mathcal S_\theta)\) during a 25$k$-step fit on a single SRB shape with $\lambda=0$
(median and \(p_{10}\)--\(p_{90}\) over 4$k$ rays). Solid curves use incidence weighting; dashed curves
remove it.}
\label{fig:projection-ratio}
\end{figure}

\subsection{Gradient norms}
\label{app:grad-norm}

\begin{figure}[H]
\centering
\includegraphics[width=\linewidth]{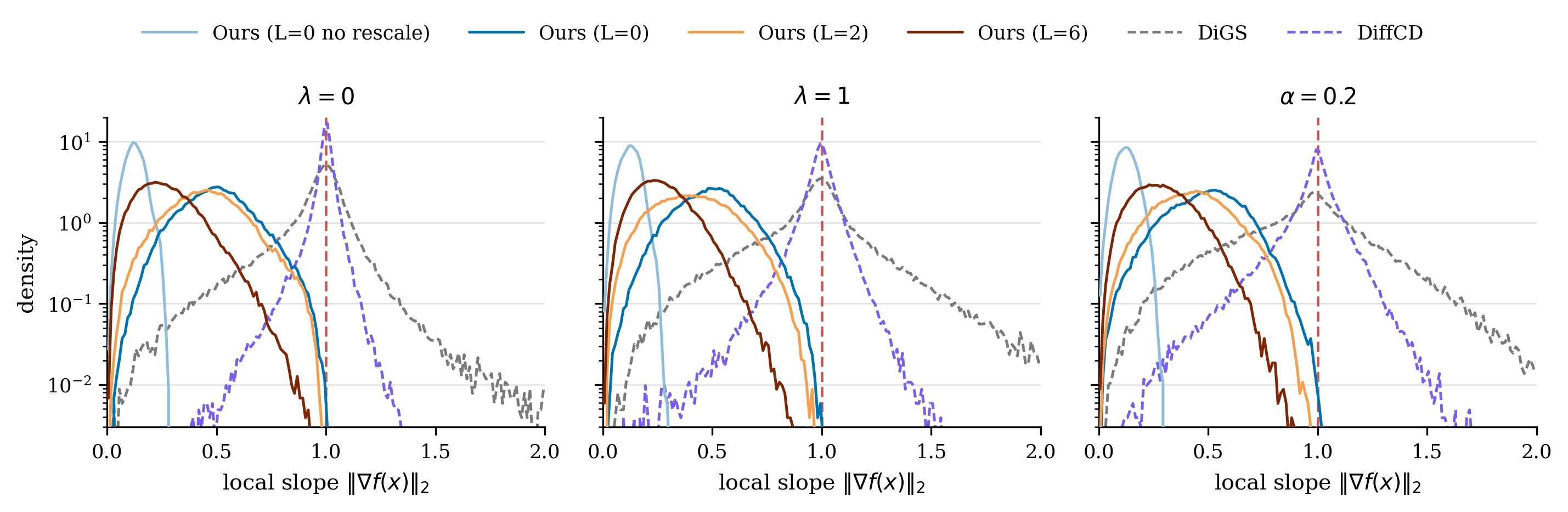}
\caption{Density of
\(\|\nabla_x f_\theta(x)\|_2\) at points sampled along the traced rays on the five SRB
shapes under \(\lambda=0\), \(\lambda=1\), and \(\alpha=0.2\)
($82k$ samples per curve and $16,384$ per shape). The axis is
clipped at \(2\).
}
\label{fig:grad-slope-curves}
\end{figure}

\begin{table}[!ht]
\centering
\caption{Effect of empirical rescaling for the $L=2$ and $L=6$ variants under noise and outliers. Values are $\Delta$(rescale $-$ no rescale), averaged over the five SRB shapes for a single seed.}
\label{tab:pe-rescaling}
\small
\begin{tabular}{llccc}
\toprule
$L$ & Setting & No rescale & Rescaled & $\Delta$ \\
\midrule
2 & $\lambda=0$  & 0.124 & 0.128 & +0.004 {\scriptsize $(+3.2\%)$} \\
2 & $\lambda=1$  & 0.425 & 0.381 & -0.044 {\scriptsize $(-10.4\%)$} \\
2 & $\alpha=0.2$ & 0.125 & 0.122 & -0.003 {\scriptsize $(-2.4\%)$} \\
6 & $\lambda=0$  & 0.134 & 0.133 & -0.001 {\scriptsize $(-0.7\%)$} \\
6 & $\lambda=1$  & 0.543 & 0.542 & -0.001 {\scriptsize $(-0.2\%)$} \\
6 & $\alpha=0.2$ & 0.489 & 0.414 & -0.075 {\scriptsize $(-15.3\%)$} \\
\bottomrule
\end{tabular}
\end{table}

\subsection{Outlier-ray diagnostics}

\begin{table}[H]
\caption{Ray statistics at convergence on SRB with outliers
(\(\alpha=0.2\)), averaged over the five shapes. Dropped rays are those for which sphere tracing finds no valid hit. Non-grazing rays are partitioned by the Huber regime of their
weighted residual.}
\label{tab:outlier_ray_stats}
\centering
\small
\setlength{\tabcolsep}{6pt}
\renewcommand{\arraystretch}{1.05}

\begin{tabular}{llccc}
\toprule
\textbf{Variant} & \textbf{Rays}
& \textbf{Dropped}
& \textbf{Huber linear}
& \textbf{Huber quadratic} \\
\midrule

Ours ($L=6$)
& Inlier  & 2.4\%  & 5.9\%  & 91.7\% \\
& Outlier & 41.0\% & 57.0\% & 2.1\% \\

Ours ($L=0$)
& Inlier  & 0.8\%  & 8.0\%  & 91.2\% \\
& Outlier & 39.3\% & 60.1\% & 0.6\% \\

\bottomrule
\end{tabular}
\end{table}

\subsection{Sensitivity to sensor provenance}
\label{sec:sensor-shuffle}
The
perturbation scale \(\sigma\) of the sensor position $c_i$ is expressed in multiples of the point cloud's
median nearest-neighbor spacing \(h_{\mathrm{NN}}\); the corresponding mean
position error ranges from \(0.9\%\) to \(7.4\%\) of the scanner standoff $\operatorname{median}_i \|p_i-c_i\|_2$.

\begin{table}[H]
\caption{Sensitivity to sensor-position perturbations on a single SRB shape at $\lambda=0$.}
\label{tab:sensor-position-perturbation}
\centering
\small
\setlength{\tabcolsep}{6pt}

\begin{tabular}{lccc}
\toprule
\(\sigma/h_{\mathrm{NN}}\)
& Position error / standoff
& CD\(\downarrow\)
& NC\(\uparrow\) \\
\midrule
0  & 0.0\% & 0.071 & 0.976 \\
2  & 0.9\% & 0.071 & 0.976 \\
4  & 1.8\% & 0.077 & 0.973 \\
8  & 3.7\% & 0.079 & 0.975 \\
16 & 7.4\% & 0.098 & 0.971 \\
\bottomrule
\end{tabular}
\end{table}

Performance is nearly unchanged up to \(3.7\%\) position error relative to the
scanner standoff, and reconstruction still succeeds at \(7.4\%\). These
perturbations are deliberately substantial and are not intended as a precise
model of scanner calibration errors.

As an extreme control, we randomly permute sensor origins, replacing each pair
\((c_i,p_i)\) by \((c_{\pi(i)},p_i)\), where \(\pi\) is a permutation of
\(\{1,\ldots,N\}\) with \(c_{\pi(i)}\neq c_i\) for all \(i\). All 15 fits (five SRB shapes under
\(\lambda=0\), \(\lambda=1\), and \(\alpha=0.2\)) reach a state with no valid
traced hits after a few hundred optimization steps. Under the shuffled
associations, \(62\)--\(65\%\) of rays are occluded before their measured
endpoint, showing that most shuffled ray--point pairs are incompatible with
the observed visibility.

\subsection{Fidelity--robustness analysis}

\paragraph{Input fidelity.}
\label{sec:data_fidelity}

Let \(P\) denote the input point cloud and \(\widehat{\mathcal S}\) the
reconstructed mesh. We measure fidelity to the observations by the mean exact
input-to-surface distance
\begin{equation}
d_{\mathrm C}^{\rightarrow}(P,\widehat{\mathcal S})
=
\frac{1}{|P|}
\sum_{p\in P}
\min_{x\in\widehat{\mathcal S}}
\|p-x\|_2.
\end{equation}

\begin{table}[H]
\caption{\textbf{Clean-SRB data fidelity.}
Mean input-to-surface distance; best/second-best neural implicit results are
bold/underlined. Labatut is omitted since its Delaunay surface is constructed
from the inputs.}
\label{tab:data-fidelity}
\centering
\scriptsize
\setlength{\tabcolsep}{2.2pt}
\renewcommand{\arraystretch}{0.95}

\resizebox{\linewidth}{!}{%
\begin{tabular}{lccccccccc|c}
\toprule
& \multicolumn{9}{c|}{\textit{Neural implicit}}
& \textit{Classical} \\
\cmidrule(lr){2-10}\cmidrule(l){11-11}
\textbf{Shape}
& \textbf{$L=0$}
& \textbf{$L=2$}
& \textbf{$L=4$}
& \textbf{$L=6$}
& \textbf{DiffCD}
& \textbf{N.-Pull}
& \textbf{DiGS}
& \textbf{NDF-vis}
& \textbf{SAL}
& \textbf{SPSR} \\
\midrule

Anchor
& 0.0309 & 0.0186 & 0.0124 & \underline{0.0118}
& \textbf{0.0095} & 0.0248 & 0.0389 & 0.3024 & 0.4796 & 0.0413 \\

Daratech
& 0.1156 & 0.0366 & \underline{0.0160} & \textbf{0.0134}
& 0.0337 & 0.3983 & 0.0256 & 0.4017 & 0.4271 & 0.1027 \\

DC
& 0.0407 & 0.0238 & 0.0119 & \textbf{0.0085}
& \underline{0.0114} & 0.3917 & 0.0278 & 0.5725 & 0.4795 & 0.0580 \\

Gargoyle
& 0.0825 & 0.0441 & 0.0249 & \textbf{0.0136}
& \underline{0.0198} & 0.0324 & 0.0390 & 0.6962 & 0.6387 & 0.0639 \\

Lord Quas
& 0.0291 & 0.0148 & \underline{0.0102} & \textbf{0.0064}
& 0.0132 & 0.0143 & 0.0306 & 0.2317 & 0.4496 & 0.0339 \\

\midrule
\textbf{Mean}
& 0.0597 & 0.0276 & \underline{0.0151} & \textbf{0.0107}
& 0.0175 & 0.1723 & 0.0324 & 0.4409 & 0.4949 & 0.0599 \\

\bottomrule
\end{tabular}%
}
\end{table}

\begin{figure}[!t]
    \centering
    \includegraphics[width=\linewidth]{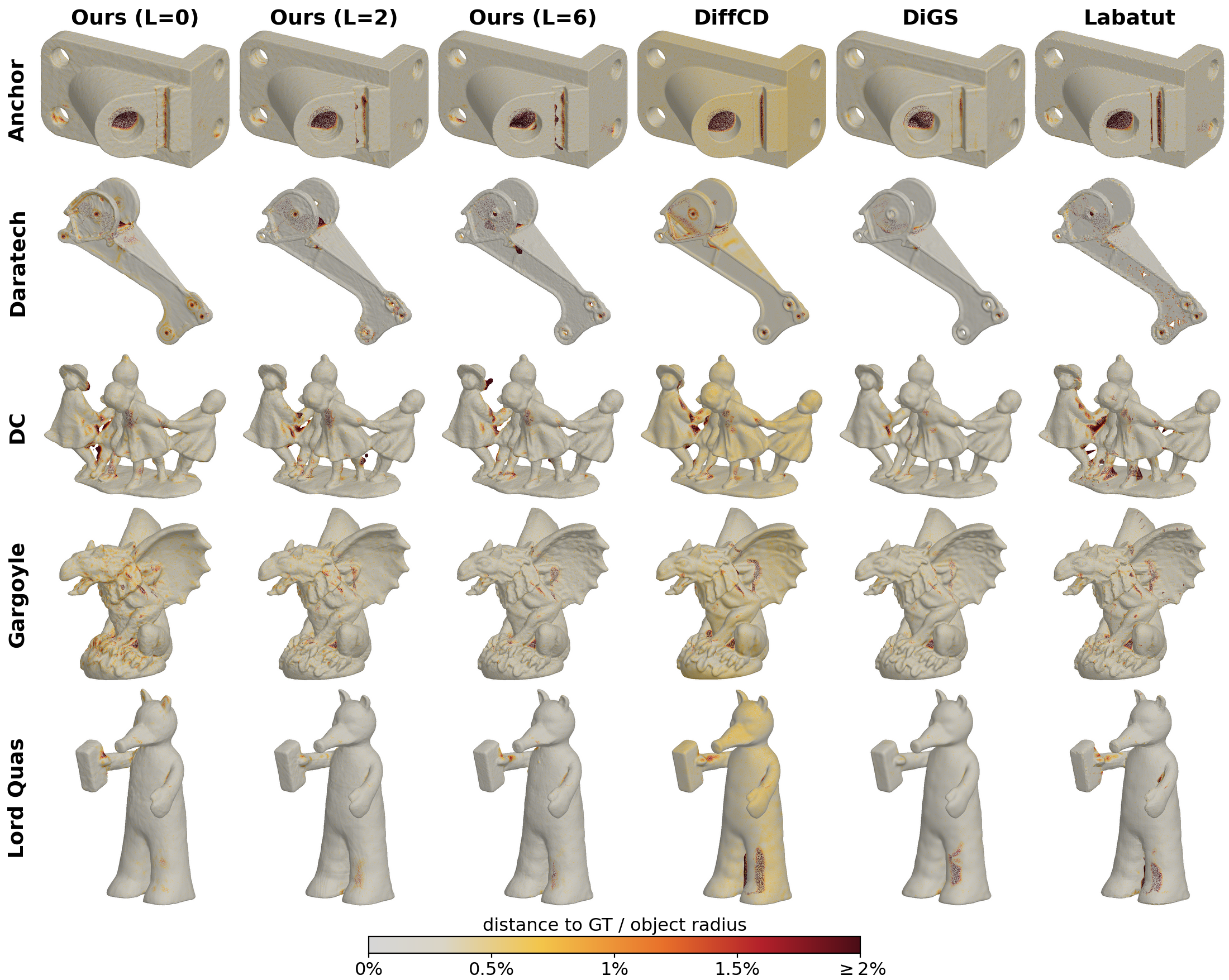}
    \caption{Clean-SRB surface error maps across the five shapes.
    Colors show distance to the ground-truth surface normalized by object radius.}
    \label{fig:clean-srb-error}
\end{figure}

\begin{figure}[H]
\centering
\includegraphics[width=0.49\linewidth]{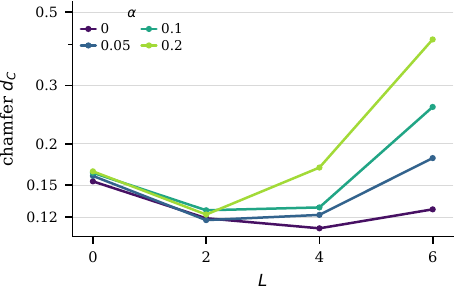}
\hfill
\includegraphics[width=0.49\linewidth]{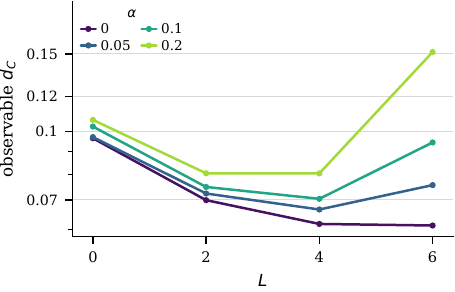}
\caption{Mean $\mathrm{CD}_1$ (left) and $\mathrm{CD}_1^\mathrm{obs}$ (right)
 as a function of \(L\) under
outlier fractions \(\alpha\) on SRB.}
\label{fig:peL-robustness_outlier}
\end{figure}

\begin{figure}[H]
\vspace{-3pt}
\centering

\includegraphics[width=0.49\linewidth]{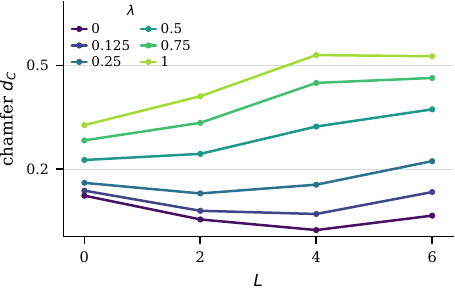}
\hfill
\includegraphics[width=0.49\linewidth]{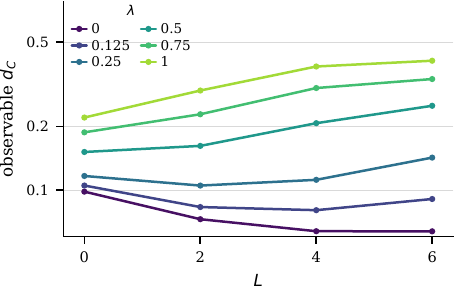}

\caption{Mean $\mathrm{CD}_1$ (left) and $\mathrm{CD}_1^\mathrm{obs}$ (right) as a function of $L$ under
scanner noise $\lambda$ on SRB, on a single seed.}
\label{fig:peL-robustness_noise}
\end{figure}

\begin{figure}[H]
    \centering
    \includegraphics[width=0.8\linewidth]{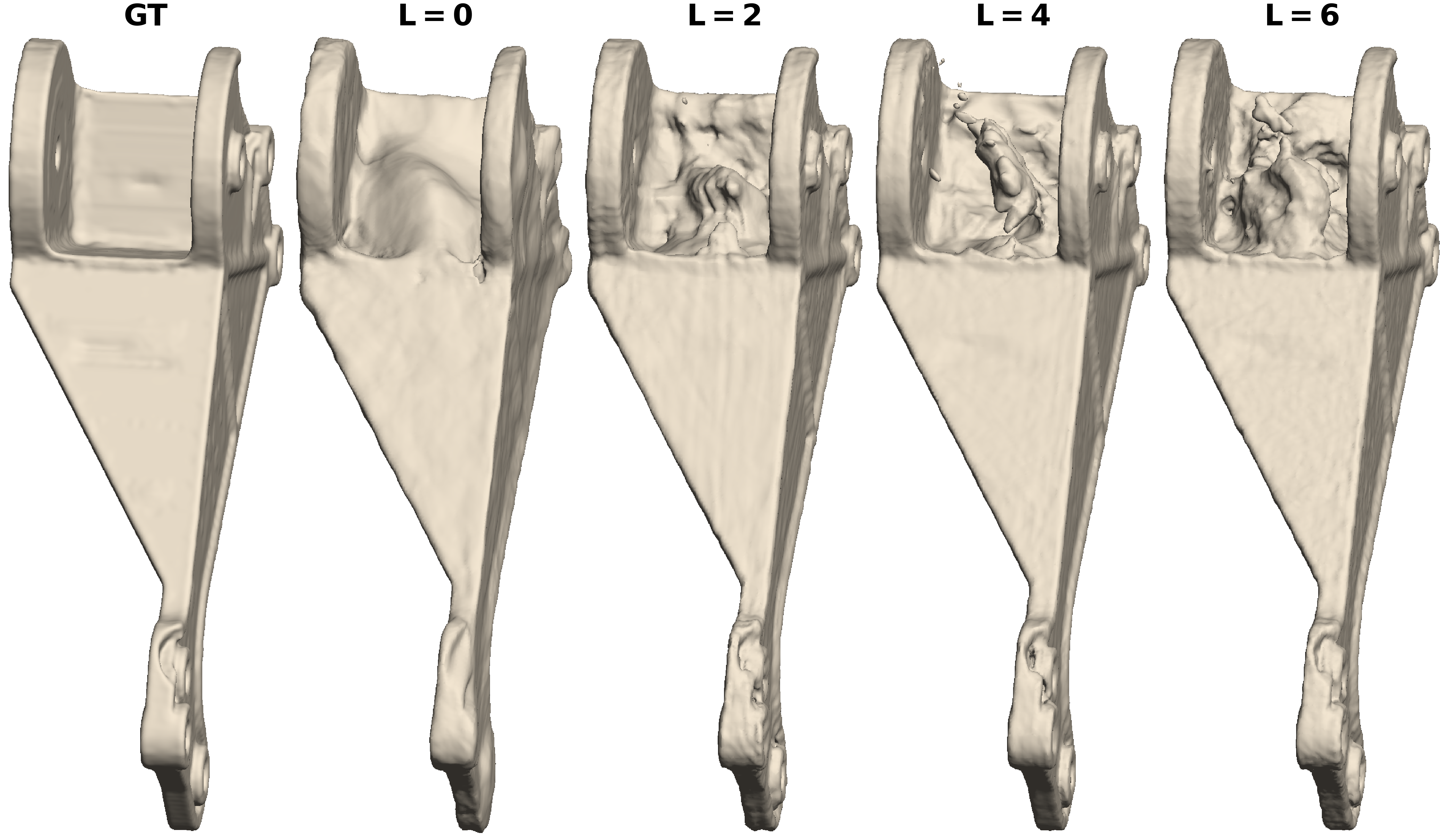}
    \caption{Effect of \(L\) in a weakly observed concave region.}
    \label{fig:pe-unobserved}
\end{figure}

\begin{figure}[!ht]
\centering
\includegraphics[width=\linewidth]{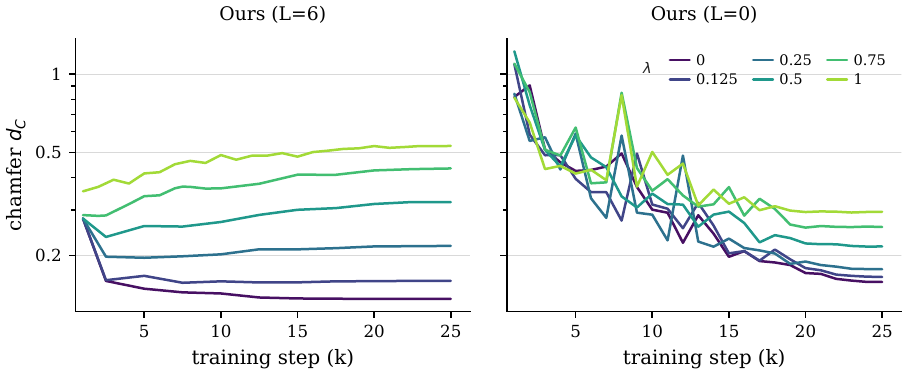}

\includegraphics[width=\linewidth]{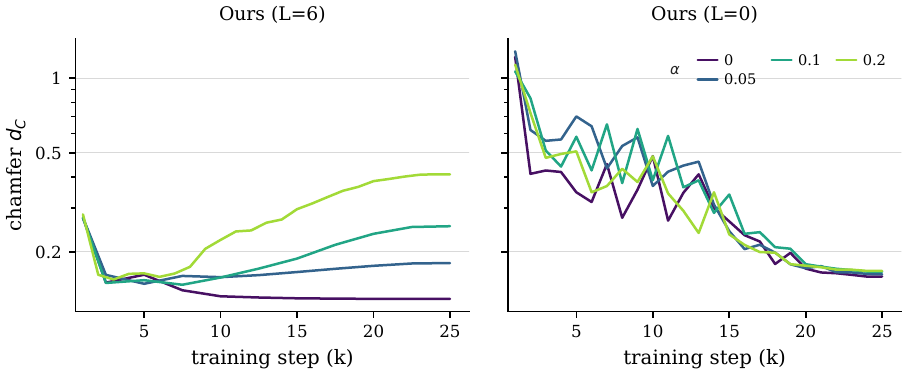}

\caption{Mean SRB \(\mathrm{CD}_1\) with one seed during training under scanner
noise (top) and outlier fractions (bottom).}
\label{fig:checkpoint_robustness}
\end{figure}

\subsection{Largest-component filtering}

Following prior observations that surface reconstruction methods may produce
spurious disconnected components~\citep{berger2013benchmark}, including SPSR
under strong corruption~\citep{jensen2021surface}, we additionally evaluate all
methods after retaining only their largest connected component.

\begin{table}[H]
\caption{Effect of largest-connected-component filtering on SRB with outliers
($\alpha=0.2$).}
\label{tab:lcc}
\centering
\small

\begin{tabular}{lcccc}
\toprule
Method
& Raw $\mathrm{CD}_1\downarrow$
& LCC $\mathrm{CD}_1\downarrow$
& Raw NC$\uparrow$
& LCC NC$\uparrow$ \\
\midrule
Ours ($L=0$) & 0.173 & 0.172 & 0.954 & 0.955 \\
Ours ($L=2$) & 0.122 & 0.122 & 0.968 & 0.969 \\
Ours ($L=6$) & 0.414 & 0.397 & 0.818 & 0.820 \\
NDF-vis      & 5.492 & 5.472 & 0.723 & 0.723 \\
Neural-Pull  & 3.420 & 6.680 & 0.756 & 0.714 \\
DiffCD       & 3.354 & 3.319 & 0.806 & 0.809 \\
SAL          & 5.731 & 5.850 & 0.652 & 0.650 \\
DiGS         & 2.927 & 1.127 & 0.814 & 0.898 \\
SPSR         & 4.917 & 0.158 & 0.855 & 0.962 \\
Labatut      & 0.144 & 0.141 & 0.946 & 0.947 \\
\bottomrule
\end{tabular}
\end{table}

\subsection{Computational cost}
\label{sec:compute}
The CPL network has $\underbrace{8(256^2+256)}_{\Phi_\theta:\,8\ \mathrm{CPL\ layers}\ (W,b)}
+\underbrace{256}_{\widetilde v}
+\underbrace{1}_{\beta}
=526{,}593$ trainable parameters. 

We report wall-clock training time and peak memory usage for a single SRB
shape.

\begin{table}[H]
\caption{
\textbf{Left:} wall-clock training time and peak GPU memory
on one SRB shape using an NVIDIA RTX 6000.
\textbf{Right:} $\mathrm{CD}_1$ averaged over the five SRB shapes at intermediate mesh resolution $256^3$ for Ours ($L=6$) every $5k$ training steps.}
\label{tab:compute}
\centering
\scriptsize
\setlength{\tabcolsep}{4pt}
\renewcommand{\arraystretch}{1.05}

\begin{tabular}{@{}lcc@{\hspace{1.8em}}rccc@{}}
\multicolumn{3}{c}{\textbf{(a) Method comparison}}
&
\multicolumn{4}{c}{\textbf{(b) Step budget}}
\\
\cmidrule(r){1-3}
\cmidrule(l){4-7}

Method
& Time $\downarrow$
& Peak GPU (GB) $\downarrow$
& Steps
& Budget
& CD$_1\downarrow$
& $\Delta$ \\
\midrule

SAL          & $\sim$1 min   & 0.73 & 5k  & 20\%  & 0.171 & +21.3\% \\
Neural-Pull  & $\sim$5 min   & 0.42 & 10k & 40\%  & 0.145 & +2.8\% \\
NDF-vis      & $\sim$10 min  & 0.63 & 15k & 60\%  & 0.144 & +2.1\% \\
DiGS         & $\sim$20 min  & 2.94 & 20k & 80\%  & 0.141 & +0.0\% \\
DiffCD       & $\sim$40 min  & 2.52 & 25k & 100\% & 0.141 & --- \\
Ours ($L=0$) & $\sim$1h30min & 0.84 &     &       &       &     \\
Ours ($L=6$) & $\sim$1h40min      & 0.84 &     &       &       &     \\

\bottomrule
\end{tabular}
\end{table}

For each ray \(i\), let \(k_i\le K\) denote the number of sphere tracing
iterations actually executed. We use active-ray compaction: terminated rays
are removed from subsequent network evaluations. The average tracing workload
is therefore characterized by \(\mathbb E[k]/K\).

\begin{table}[H]
\caption{sphere tracing workload over SRB training rays with \(K=192\).
Convergence rates are min/mean/max over the five SRB shapes.}
\label{tab:tracing_workload}
\centering
\small
\setlength{\tabcolsep}{5pt}

\begin{tabular}{lcccc}
\toprule
\textbf{Model}
& $\mathbb E[k]/K$
& \textbf{Mean $k$}
& \textbf{Reaching $K$}
& \textbf{Converged (\%)} \\
\midrule
Ours ($L=0$, no rescale)
& 40.8\% & 78.4 & 2.13\%
& $99.92 / \mathbf{99.96} / 100.00$ \\

Ours ($L=0$)
& 9.4\% & 18.1 & 0.002\%
& $99.91 / \mathbf{99.96} / 99.99$ \\

Ours ($L=6$)
& 22.2\% & 42.6 & 0.08\%
& $98.73 / \mathbf{99.37} / 99.92$ \\
\bottomrule
\end{tabular}
\end{table}

\section{Extended Results}
\label{app:extended-results}

\subsection{SRB scanner noise}
\label{app:srb-noise-sweep}

\begin{figure}[!ht]
    \centering
    \includegraphics[width=\linewidth]{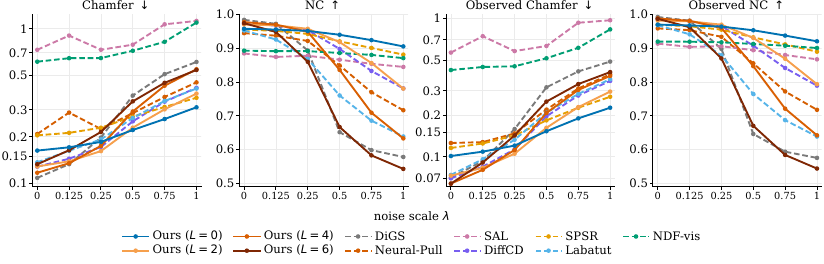}
    \caption{Mean SRB \(\mathrm{CD}_1\) and NC across increasing scanner-noise levels \(\lambda\).}
    \label{fig:srb-noise-sweep}
\end{figure}

\begin{table}[H]
\caption{Per-shape SRB reconstruction under clean input (\(\lambda=0\)) and
scanner noise (\(\lambda=1\)). Stochastic results use three seeds; best
neural implicit results are shown in bold.}
\label{tab:srb-results-noise}
\centering
\scriptsize
\setlength{\tabcolsep}{3.0pt}
\renewcommand{\arraystretch}{1.05}

\resizebox{\textwidth}{!}{%
\begin{tabular}{llcccccc@{\hspace{5mm}}cccccc}
\toprule
&
&
\multicolumn{6}{c}{\textbf{Clean} (\(\lambda=0\))}
&
\multicolumn{6}{c}{\textbf{Noise} (\(\lambda=1\))}
\\
\cmidrule(lr){3-8}
\cmidrule(lr){9-14}

\textbf{Method}
& \textbf{Metric}
& Anchor
& Daratech
& DC
& Gargoyle
& L. Quas
& \textbf{Mean}
& Anchor
& Daratech
& DC
& Gargoyle
& L. Quas
& \textbf{Mean}
\\
\midrule

\multicolumn{14}{l}{\textit{Neural implicit}} \\

\multirow{2}{*}{Ours ($L=0$)}
&$\mathrm{CD}_1$  $\downarrow$
& 0.195 & 0.286 & 0.107 & 0.149 & 0.077
& 0.163 {\scriptsize $\pm$ 0.005}
& \textbf{0.337} & \textbf{0.412} & \textbf{0.261} & \textbf{0.291} & \textbf{0.252}
& \textbf{0.310} {\scriptsize $\pm$ 0.016}
\\
& $\mathrm{NC}\uparrow$
& 0.978 & 0.903 & 0.973 & 0.952 & 0.983
& 0.958 {\scriptsize $\pm$ 0.0011}
& \textbf{0.939} & \textbf{0.851} & \textbf{0.915} & \textbf{0.902} & 0.920
& \textbf{0.906} {\scriptsize $\pm$ 0.0060}
\\[1mm]

\multirow{2}{*}{Ours ($L=2$)}
&$\mathrm{CD}_1$  $\downarrow$
& 0.186 & 0.200 & 0.088 & 0.108 & 0.055
& 0.127 {\scriptsize $\pm$ 0.002}
& 0.403 & 0.445 & 0.368 & 0.376 & 0.312
& 0.381 {\scriptsize $\pm$ 0.002}
\\
& $\mathrm{NC}\uparrow$
& 0.979 & 0.934 & 0.979 & 0.969 & 0.991
& 0.970 {\scriptsize $\pm$ 0.001}
& 0.865 & 0.676 & 0.757 & 0.818 & 0.788
& 0.781 {\scriptsize $\pm$ 0.002}
\\[1mm]

\multirow{2}{*}{Ours ($L=6$)}
&$\mathrm{CD}_1$  $\downarrow$
& 0.201 & 0.236 & 0.080 & \textbf{0.092} & \textbf{0.054}
& 0.133 {\scriptsize $\pm$ 0.003}
& 0.664 & 0.488 & 0.516 & 0.549 & 0.490
& 0.542 {\scriptsize $\pm$ 0.009}
\\
& $\mathrm{NC}\uparrow$
& 0.973 & 0.949 & 0.983 & 0.975 & 0.991
& 0.974 {\scriptsize $\pm$ 0.002}
& 0.542 & 0.534 & 0.550 & 0.544 & 0.542
& 0.543 {\scriptsize $\pm$ 0.008}
\\[1mm]

\multirow{2}{*}{NDF-vis}
&$\mathrm{CD}_1$  $\downarrow$
& 0.582 & 0.600 & 0.777 & 0.815 & 0.290
& 0.613 {\scriptsize $\pm$ 0.063}
& 1.069 & 1.395 & 1.076 & 1.162 & 0.788
& 1.098 {\scriptsize $\pm$ 0.194}
\\
& $\mathrm{NC}\uparrow$
& 0.909 & 0.861 & 0.879 & 0.863 & 0.953
& 0.893 {\scriptsize $\pm$ 0.0029}
& 0.896 & 0.820 & 0.862 & 0.851 & 0.925
& 0.871 {\scriptsize $\pm$ 0.0030}
\\[1mm]

\multirow{2}{*}{Neural-Pull}
&$\mathrm{CD}_1$  $\downarrow$
& \textbf{0.163} & 0.508 & 0.188 & 0.113 & 0.070
& 0.208 {\scriptsize $\pm$ 0.051}
& 0.522 & 0.464 & 0.441 & 0.449 & 0.375
& 0.450 {\scriptsize $\pm$ 0.003}
\\
& $\mathrm{NC}\uparrow$
& 0.977 & 0.848 & 0.946 & 0.969 & 0.985
& 0.945 {\scriptsize $\pm$ 0.004}
& 0.767 & 0.667 & 0.701 & 0.719 & 0.729
& 0.717 {\scriptsize $\pm$ 0.003}
\\[1mm]

\multirow{2}{*}{DiffCD}
&$\mathrm{CD}_1$  $\downarrow$
& 0.203 & 0.175 & 0.079 & 0.106 & 0.076
& 0.128 {\scriptsize $\pm$ 0.001}
& 0.456 & 0.458 & 0.408 & 0.397 & 0.355
& 0.415 {\scriptsize $\pm$ 0.006}
\\
& $\mathrm{NC}\uparrow$
& 0.979 & 0.947 & 0.984 & 0.973 & 0.982
& 0.973 {\scriptsize $\pm$ 0.0010}
& 0.824 & 0.808 & 0.792 & 0.753 & 0.728
& 0.781 {\scriptsize $\pm$ 0.0016}
\\[1mm]

\multirow{2}{*}{SAL}
&$\mathrm{CD}_1$  $\downarrow$
& 0.786 & 0.976 & 0.551 & 0.840 & 0.504
& 0.731 {\scriptsize $\pm$ 0.148}
& 1.169 & 2.059 & 1.070 & 0.726 & 0.601
& 1.125 {\scriptsize $\pm$ 0.322}
\\
& $\mathrm{NC}\uparrow$
& 0.901 & 0.788 & 0.890 & 0.894 & 0.952
& 0.885 {\scriptsize $\pm$ 0.0129}
& 0.888 & 0.688 & 0.846 & 0.865 & \textbf{0.937}
& 0.845 {\scriptsize $\pm$ 0.0026}
\\[1mm]

\multirow{2}{*}{DiGS}
&$\mathrm{CD}_1$  $\downarrow$
& 0.179 & \textbf{0.125} & \textbf{0.076} & 0.098 & 0.064
& \textbf{0.108} {\scriptsize $\pm$ 0.004}
& 0.712 & 0.505 & 0.577 & 0.608 & 0.650
& 0.610 {\scriptsize $\pm$ 0.006}
\\
& $\mathrm{NC}\uparrow$
& \textbf{0.982} & \textbf{0.970} & \textbf{0.992} & \textbf{0.983} & \textbf{0.994}
& \textbf{0.984} {\scriptsize $\pm$ 0.0006}
& 0.594 & 0.556 & 0.578 & 0.575 & 0.587
& 0.578 {\scriptsize $\pm$ 0.0037}
\\

\midrule
\multicolumn{14}{l}{\textit{Classical}} \\

\multirow{2}{*}{SPSR}
&$\mathrm{CD}_1$  $\downarrow$
& 0.269 & 0.333 & 0.165 & 0.160 & 0.095
& 0.204
& 0.412 & 0.512 & 0.325 & 0.316 & 0.223
& 0.358
\\
& $\mathrm{NC}\uparrow$
& 0.968 & 0.913 & 0.964 & 0.954 & 0.980
& 0.956
& 0.897 & 0.836 & 0.891 & 0.869 & 0.914
& 0.881
\\[1mm]

\multirow{2}{*}{Labatut}
&$\mathrm{CD}_1$  $\downarrow$
& 0.202 & 0.198 & 0.101 & 0.109 & 0.073
& 0.137
& 0.467 & 0.454 & 0.390 & 0.410 & 0.331
& 0.410
\\
& $\mathrm{NC}\uparrow$
& 0.971 & 0.895 & 0.962 & 0.959 & 0.977
& 0.953
& 0.656 & 0.596 & 0.651 & 0.643 & 0.647
& 0.639
\\

\bottomrule
\end{tabular}%
}
\end{table}


\begin{table}[H]
\caption{Per-shape observation-supported SRB results under clean input
(\(\lambda=0\)) and scanner noise (\(\lambda=1\)), using the \(2h_0\)
support threshold. Stochastic results use three seeds; best neural implicit
results are shown in bold.}
\label{tab:srb-results-noise-observed}
\centering
\scriptsize
\setlength{\tabcolsep}{3.0pt}
\renewcommand{\arraystretch}{1.05}

\resizebox{\textwidth}{!}{%
\begin{tabular}{llcccccc@{\hspace{5mm}}cccccc}
\toprule
&
&
\multicolumn{6}{c}{\textbf{Clean} (\(\lambda=0\))}
&
\multicolumn{6}{c}{\textbf{Noise} (\(\lambda=1\))}
\\
\cmidrule(lr){3-8}
\cmidrule(lr){9-14}

\textbf{Method}
& \textbf{Metric}
& Anchor
& Daratech
& DC
& Gargoyle
& L. Quas
& \textbf{Mean}
& Anchor
& Daratech
& DC
& Gargoyle
& L. Quas
& \textbf{Mean}
\\
\midrule

\multicolumn{14}{l}{\textit{Neural implicit}} \\

\multirow{2}{*}{Ours ($L=0$)}
& $\mathrm{CD}_1^{\mathrm{obs}}\downarrow$
& 0.078 & 0.146 & 0.090 & 0.126 & 0.066
& 0.101 {\scriptsize $\pm$ 0.003}
& \textbf{0.194} & \textbf{0.262} & \textbf{0.234} & \textbf{0.245} & \textbf{0.196}
& \textbf{0.226} {\scriptsize $\pm$ 0.005}
\\
& $\mathrm{NC}^{\mathrm{obs}}\uparrow$
& 0.991 & 0.932 & 0.980 & 0.961 & 0.987
& 0.970 {\scriptsize $\pm$ 0.0007}
& \textbf{0.956} & \textbf{0.875} & \textbf{0.926} & \textbf{0.916} & 0.931
& \textbf{0.921} {\scriptsize $\pm$ 0.004}
\\[1mm]

\multirow{2}{*}{Ours ($L=2$)}
& $\mathrm{CD}_1^{\mathrm{obs}}\downarrow$
& 0.070 & 0.077 & 0.073 & 0.094 & 0.050
& 0.073 {\scriptsize $\pm$ 0.001}
& 0.248 & 0.325 & 0.327 & 0.303 & 0.277
& 0.296 {\scriptsize $\pm$ 0.004}
\\
& $\mathrm{NC}^{\mathrm{obs}}\uparrow$
& 0.993 & 0.962 & 0.987 & 0.976 & 0.994
& 0.982 {\scriptsize $\pm$ 0.0002}
& 0.889 & 0.686 & 0.768 & 0.833 & 0.794
& 0.794 {\scriptsize $\pm$ 0.002}
\\[1mm]

\multirow{2}{*}{Ours ($L=6$)}
& $\mathrm{CD}_1^{\mathrm{obs}}\downarrow$
& 0.068 & \textbf{0.063} & 0.065 & \textbf{0.076} & \textbf{0.047}
& \textbf{0.064} {\scriptsize $\pm$ 0.002}
& 0.441 & 0.354 & 0.422 & 0.446 & 0.382
& 0.409 {\scriptsize $\pm$ 0.007}
\\
& $\mathrm{NC}^{\mathrm{obs}}\uparrow$
& 0.993 & 0.977 & 0.989 & 0.983 & 0.995
& 0.987 {\scriptsize $\pm$ 0.003}
& 0.545 & 0.534 & 0.551 & 0.546 & 0.544
& 0.544 {\scriptsize $\pm$ 0.006}
\\[1mm]

\multirow{2}{*}{NDF-vis}
& $\mathrm{CD}_1^{\mathrm{obs}}\downarrow$
& 0.318 & 0.356 & 0.578 & 0.620 & 0.245
& 0.423 {\scriptsize $\pm$ 0.037}
& 0.760 & 0.960 & 0.851 & 0.947 & 0.650
& 0.834 {\scriptsize $\pm$ 0.149}
\\
& $\mathrm{NC}^{\mathrm{obs}}\uparrow$
& 0.943 & 0.896 & 0.907 & 0.890 & 0.963
& 0.920 {\scriptsize $\pm$ 0.003}
& 0.926 & 0.867 & 0.888 & 0.876 & 0.947
& 0.901 {\scriptsize $\pm$ 0.0004}
\\[1mm]

\multirow{2}{*}{Neural-Pull}
& $\mathrm{CD}_1^{\mathrm{obs}}\downarrow$
& 0.074 & 0.256 & 0.157 & 0.089 & 0.052
& 0.126 {\scriptsize $\pm$ 0.031}
& 0.383 & 0.401 & 0.414 & 0.421 & 0.352
& 0.394 {\scriptsize $\pm$ 0.001}
\\
& $\mathrm{NC}^{\mathrm{obs}}\uparrow$
& 0.992 & 0.875 & 0.957 & 0.979 & 0.991
& 0.959 {\scriptsize $\pm$ 0.005}
& 0.771 & 0.667 & 0.702 & 0.722 & 0.727
& 0.718 {\scriptsize $\pm$ 0.003}
\\[1mm]

\multirow{2}{*}{DiffCD}
& $\mathrm{CD}_1^{\mathrm{obs}}\downarrow$
& \textbf{0.065} & 0.092 & \textbf{0.063} & 0.078 & 0.051
& 0.070 {\scriptsize $\pm$ 0.0003}
& 0.307 & 0.375 & 0.381 & 0.371 & 0.334
& 0.354 {\scriptsize $\pm$ 0.003}
\\
& $\mathrm{NC}^{\mathrm{obs}}\uparrow$
& \textbf{0.996} & 0.966 & 0.992 & 0.985 & 0.993
& 0.986 {\scriptsize $\pm$ 0.0003}
& 0.840 & 0.821 & 0.795 & 0.759 & 0.731
& 0.789 {\scriptsize $\pm$ 0.001}
\\[1mm]

\multirow{2}{*}{SAL}
& $\mathrm{CD}_1^{\mathrm{obs}}\downarrow$
& 0.522 & 0.676 & 0.454 & 0.723 & 0.459
& 0.567 {\scriptsize $\pm$ 0.122}
& 0.889 & 1.803 & 0.952 & 0.663 & 0.546
& 0.970 {\scriptsize $\pm$ 0.319}
\\
& $\mathrm{NC}^{\mathrm{obs}}\uparrow$
& 0.932 & 0.849 & 0.914 & 0.910 & 0.964
& 0.914 {\scriptsize $\pm$ 0.010}
& 0.914 & 0.727 & 0.868 & 0.879 & \textbf{0.949}
& 0.867 {\scriptsize $\pm$ 0.001}
\\[1mm]

\multirow{2}{*}{DiGS}
& $\mathrm{CD}_1^{\mathrm{obs}}\downarrow$
& 0.081 & 0.064 & 0.068 & 0.083 & 0.056
& 0.071 {\scriptsize $\pm$ 0.001}
& 0.515 & 0.434 & 0.496 & 0.526 & 0.467
& 0.488 {\scriptsize $\pm$ 0.001}
\\
& $\mathrm{NC}^{\mathrm{obs}}\uparrow$
& 0.994 & \textbf{0.986} & \textbf{0.996} & \textbf{0.991} & \textbf{0.997}
& \textbf{0.993} {\scriptsize $\pm$ 0.0002}
& 0.592 & 0.549 & 0.575 & 0.572 & 0.589
& 0.575 {\scriptsize $\pm$ 0.004}
\\

\midrule
\multicolumn{14}{l}{\textit{Classical}} \\

\multirow{2}{*}{SPSR}
& $\mathrm{CD}_1^{\mathrm{obs}}\downarrow$
& 0.104 & 0.155 & 0.120 & 0.127 & 0.073
& 0.116
& 0.239 & 0.349 & 0.283 & 0.283 & 0.203
& 0.271
\\
& $\mathrm{NC}^{\mathrm{obs}}\uparrow$
& 0.983 & 0.940 & 0.975 & 0.965 & 0.987
& 0.970
& 0.907 & 0.851 & 0.897 & 0.876 & 0.920
& 0.890
\\[1mm]

\multirow{2}{*}{Labatut}
& $\mathrm{CD}_1^{\mathrm{obs}}\downarrow$
& 0.070 & 0.086 & 0.075 & 0.085 & 0.054
& 0.074
& 0.379 & 0.362 & 0.367 & 0.392 & 0.316
& 0.363
\\
& $\mathrm{NC}^{\mathrm{obs}}\uparrow$
& 0.986 & 0.926 & 0.974 & 0.970 & 0.986
& 0.968
& 0.654 & 0.600 & 0.649 & 0.642 & 0.649
& 0.639
\\

\bottomrule
\end{tabular}%
}
\end{table}

\subsection{SRB outliers}
\label{app:srb-outlier-sweep}

\begin{figure}[H]
    \centering
    \includegraphics[width=\linewidth]{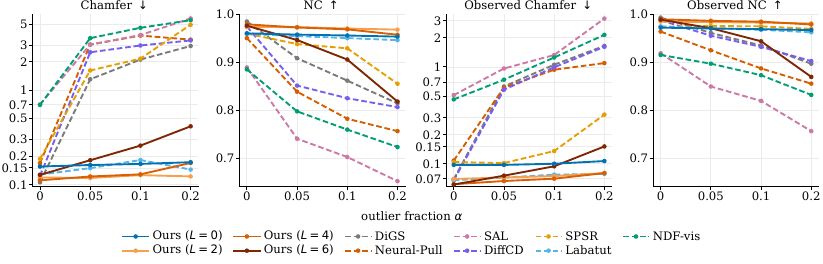}
    \caption{Mean SRB \(\mathrm{CD}_1\) and NC across increasing outlier levels \(\alpha\).}
    \label{fig:srb-outlier-sweep}
\end{figure}

\begin{figure}[H]
    \centering
    \includegraphics[width=\linewidth]
    {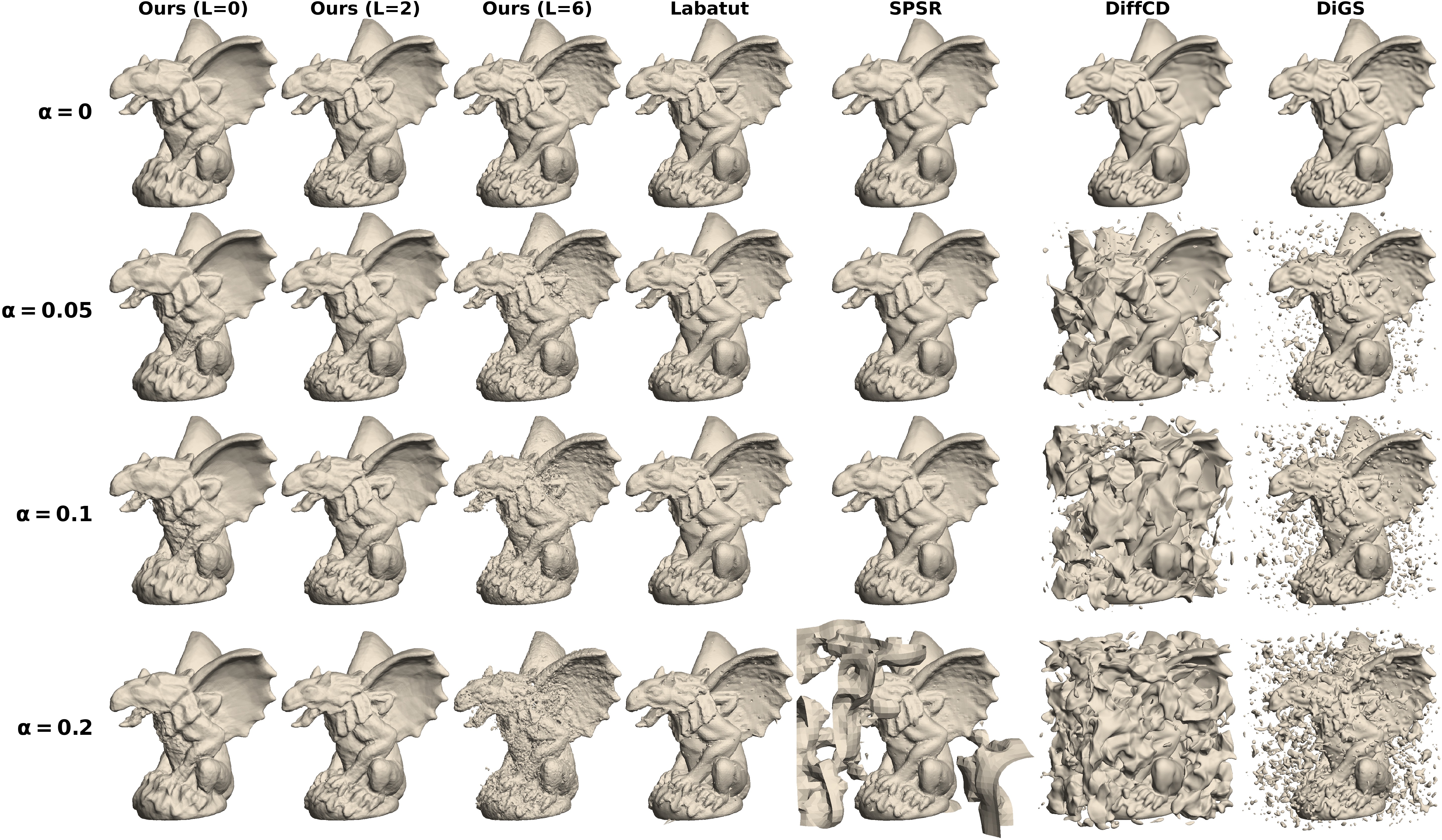}
    \caption{Qualitative Gargoyle reconstructions across increasing outlier levels \(\alpha\).}
    \label{fig:gargoyle-outlier-sweep}
\end{figure}


\begin{table}[H]
\caption{Per-shape SRB reconstruction without outliers (\(\alpha=0\)) and
with outliers (\(\alpha=0.2\)). Stochastic results use three seeds; best
neural implicit results are shown in bold.}
\label{tab:srb-results-outliers-full}
\centering
\scriptsize
\setlength{\tabcolsep}{3.0pt}
\renewcommand{\arraystretch}{1.05}

\resizebox{\textwidth}{!}{%
\begin{tabular}{llcccccc@{\hspace{5mm}}cccccc}
\toprule
&
&
\multicolumn{6}{c}{\textbf{Clean} ($\alpha=0$)}
&
\multicolumn{6}{c}{\textbf{Outliers} ($\alpha=0.2$)}
\\
\cmidrule(lr){3-8}
\cmidrule(lr){9-14}

\textbf{Method}
& \textbf{Metric}
& Anchor
& Daratech
& DC
& Gargoyle
& L. Quas
& \textbf{Mean}
& Anchor
& Daratech
& DC
& Gargoyle
& L. Quas
& \textbf{Mean}
\\
\midrule

\multicolumn{14}{l}{\textit{Neural implicit}} \\

\multirow{2}{*}{Ours ($L=0$)}
&$\mathrm{CD}_1$  $\downarrow$
& 0.196 & 0.274 & 0.098 & 0.141 & 0.067
& 0.155 {\scriptsize $\pm$ 0.003}
& 0.202 & 0.275 & 0.118 & 0.194 & 0.073
& 0.173 {\scriptsize $\pm$ 0.013}
\\
& $\mathrm{NC}\uparrow$
& 0.977 & 0.907 & 0.976 & 0.956 & 0.987
& 0.961 {\scriptsize $\pm$ 0.0015}
& 0.971 & 0.910 & 0.966 & 0.937 & 0.986
& 0.954 {\scriptsize $\pm$ 0.0032}
\\[1mm]

\multirow{2}{*}{Ours ($L=2$)}
&$\mathrm{CD}_1$  $\downarrow$
& 0.195 & 0.153 & 0.084 & 0.109 & 0.055
& 0.119 {\scriptsize $\pm$ 0.003}
& \textbf{0.149} & \textbf{0.154} & \textbf{0.100} & \textbf{0.142} & \textbf{0.065}
& \textbf{0.122} {\scriptsize $\pm$ 0.004}
\\
& $\mathrm{NC}\uparrow$
& 0.977 & 0.951 & 0.982 & 0.969 & 0.992
& 0.974 {\scriptsize $\pm$ 0.001}
& \textbf{0.982} & \textbf{0.939} & \textbf{0.974} & \textbf{0.958} & \textbf{0.990}
& \textbf{0.968} {\scriptsize $\pm$ 0.001}
\\[1mm]

\multirow{2}{*}{Ours ($L=6$)}
& $\mathrm{CD}_1\downarrow$
& 0.195 & 0.223 & 0.073 & \textbf{0.089} & \textbf{0.053}
& 0.127 {\scriptsize $\pm$ 0.004}
& 0.520 & 0.271 & 0.453 & 0.523 & 0.302
& 0.414 {\scriptsize $\pm$ 0.018}
\\
& $\mathrm{NC}\uparrow$
& 0.975 & 0.955 & 0.985 & 0.976 & 0.992
& 0.977 {\scriptsize $\pm$ 0.0009}
& 0.838 & 0.814 & 0.771 & 0.819 & 0.849
& 0.818 {\scriptsize $\pm$ 0.0018}
\\[1mm]

\multirow{2}{*}{NDF-vis}
&$\mathrm{CD}_1$  $\downarrow$
& 0.670 & 0.568 & 0.927 & 1.038 & 0.284
& 0.697 {\scriptsize $\pm$ 0.085}
& 5.835 & 6.375 & 5.355 & 6.042 & 3.851
& 5.492 {\scriptsize $\pm$ 0.280}
\\
& $\mathrm{NC}\uparrow$
& 0.907 & 0.850 & 0.858 & 0.856 & 0.956
& 0.885 {\scriptsize $\pm$ 0.0029}
& 0.733 & 0.656 & 0.709 & 0.747 & 0.769
& 0.723 {\scriptsize $\pm$ 0.0097}
\\[1mm]

\multirow{2}{*}{Neural-Pull}
&$\mathrm{CD}_1$  $\downarrow$
& \textbf{0.142} & 0.403 & 0.119 & 0.109 & 0.070
& 0.169 {\scriptsize $\pm$ 0.025}
& 2.132 & 5.526 & 2.790 & 3.663 & 3.002
& 3.42 {\scriptsize $\pm$ 0.15}
\\
& $\mathrm{NC}\uparrow$
& \textbf{0.982} & 0.853 & 0.966 & 0.970 & 0.983
& 0.951 {\scriptsize $\pm$ 0.003}
& 0.814 & 0.708 & 0.754 & 0.720 & 0.785
& 0.756 {\scriptsize $\pm$ 0.015}
\\[1mm]

\multirow{2}{*}{DiffCD}
&$\mathrm{CD}_1$  $\downarrow$
& 0.206 & 0.164 & \textbf{0.073} & 0.102 & 0.072
& 0.123 {\scriptsize $\pm$ 0.0024}
& 3.625 & 4.565 & 2.752 & 3.448 & 2.380
& 3.354 {\scriptsize $\pm$ 0.028}
\\
& $\mathrm{NC}\uparrow$
& 0.979 & 0.949 & 0.987 & 0.976 & 0.984
& 0.975 {\scriptsize $\pm$ 0.0017}
& 0.818 & 0.768 & 0.808 & 0.805 & 0.833
& 0.806 {\scriptsize $\pm$ 0.0011}
\\[1mm]

\multirow{2}{*}{SAL}
&$\mathrm{CD}_1$  $\downarrow$
& 0.895 & 0.884 & 0.666 & 0.693 & 0.382
& 0.704 {\scriptsize $\pm$ 0.068}
& 7.685 & 5.960 & 4.708 & 6.813 & 3.491
& 5.731 {\scriptsize $\pm$ 2.134}
\\
& $\mathrm{NC}\uparrow$
& 0.898 & 0.825 & 0.881 & 0.891 & 0.953
& 0.889 {\scriptsize $\pm$ 0.0094}
& 0.652 & 0.595 & 0.655 & 0.661 & 0.694
& 0.652 {\scriptsize $\pm$ 0.0314}
\\[1mm]

\multirow{2}{*}{DiGS}
&$\mathrm{CD}_1$  $\downarrow$
& 0.178 & \textbf{0.111} & 0.073 & 0.095 & 0.071
& \textbf{0.106} {\scriptsize $\pm$ 0.005}
& 3.369 & 4.319 & 2.159 & 2.388 & 2.402
& 2.927 {\scriptsize $\pm$ 0.063}
\\
& $\mathrm{NC}\uparrow$
& \textbf{0.982} & \textbf{0.976} & \textbf{0.993} & \textbf{0.984} & \textbf{0.993}
& \textbf{0.985} {\scriptsize $\pm$ 0.0006}
& 0.798 & 0.793 & 0.831 & 0.832 & 0.816
& 0.814 {\scriptsize $\pm$ 0.0020}
\\

\midrule
\multicolumn{14}{l}{\textit{Classical}} \\

\multirow{2}{*}{SPSR}
&$\mathrm{CD}_1$  $\downarrow$
& 0.253 & 0.314 & 0.136 & 0.149 & 0.086
& 0.187
& 4.661 & 8.320 & 4.747 & 4.841 & 2.017
& 4.917
\\
& $\mathrm{NC}\uparrow$
& 0.970 & 0.919 & 0.970 & 0.959 & 0.982
& 0.960
& 0.878 & 0.764 & 0.853 & 0.852 & 0.929
& 0.855
\\[1mm]

\multirow{2}{*}{Labatut}
&$\mathrm{CD}_1$  $\downarrow$
& 0.196 & 0.176 & 0.090 & 0.104 & 0.072
& 0.128
& 0.228 & 0.210 & 0.092 & 0.112 & 0.079
& 0.144
\\
& $\mathrm{NC}\uparrow$
& 0.972 & 0.919 & 0.969 & 0.962 & 0.977
& 0.960
& 0.958 & 0.901 & 0.958 & 0.950 & 0.965
& 0.946
\\

\bottomrule
\end{tabular}%
}
\end{table}


\begin{table}[H]
\caption{Per-shape observation-supported SRB results without outliers
(\(\alpha=0\)) and with outliers (\(\alpha=0.2\)), using the \(2h_0\)
support threshold. Stochastic results use three seeds; best neural implicit
results are shown in bold.}
\label{tab:srb-results-outliers-observed}
\centering
\scriptsize
\setlength{\tabcolsep}{3.0pt}
\renewcommand{\arraystretch}{1.05}

\resizebox{\textwidth}{!}{%
\begin{tabular}{llcccccc@{\hspace{5mm}}cccccc}
\toprule
&
&
\multicolumn{6}{c}{\textbf{Clean} ($\alpha=0$)}
&
\multicolumn{6}{c}{\textbf{Outliers} ($\alpha=0.2$)}
\\
\cmidrule(lr){3-8}
\cmidrule(lr){9-14}

\textbf{Method}
& \textbf{Metric}
& Anchor
& Daratech
& DC
& Gargoyle
& L. Quas
& \textbf{Mean}
& Anchor
& Daratech
& DC
& Gargoyle
& L. Quas
& \textbf{Mean}
\\
\midrule

\multicolumn{14}{l}{\textit{Neural implicit}} \\

\multirow{2}{*}{Ours ($L=0$)}
& $\mathrm{CD}_1^{\mathrm{obs}}\downarrow$
& 0.078 & 0.141 & 0.086 & 0.123 & 0.060
& 0.098 {\scriptsize $\pm$ 0.002}
& 0.096 & 0.135 & 0.099 & 0.137 & 0.068
& 0.107 {\scriptsize $\pm$ 0.005}
\\
& $\mathrm{NC}^{\mathrm{obs}}\uparrow$
& 0.991 & 0.936 & 0.982 & 0.963 & 0.990
& 0.973 {\scriptsize $\pm$ 0.0009}
& 0.984 & 0.939 & 0.974 & 0.952 & 0.988
& 0.968 {\scriptsize $\pm$ 0.002}
\\[1mm]

\multirow{2}{*}{Ours ($L=2$)}
& $\mathrm{CD}_1^{\mathrm{obs}}\downarrow$
& 0.069 & 0.067 & 0.070 & 0.094 & 0.050
& 0.070 {\scriptsize $\pm$ 0.002}
& \textbf{0.078} & \textbf{0.078} & \textbf{0.081} & \textbf{0.105} & \textbf{0.060}
& \textbf{0.080} {\scriptsize $\pm$ 0.002}
\\
& $\mathrm{NC}^{\mathrm{obs}}\uparrow$
& 0.993 & 0.974 & 0.988 & 0.977 & 0.994
& 0.985 {\scriptsize $\pm$ 0.001}
& \textbf{0.993} & \textbf{0.968} & \textbf{0.983} & \textbf{0.971} & \textbf{0.992}
& \textbf{0.981} {\scriptsize $\pm$ 0.001}
\\[1mm]

\multirow{2}{*}{Ours ($L=6$)}
& $\mathrm{CD}_1^{\mathrm{obs}}\downarrow$
& 0.067 & \textbf{0.058} & 0.062 & \textbf{0.074} & \textbf{0.046}
& \textbf{0.061} {\scriptsize $\pm$ 0.002}
& 0.171 & 0.131 & 0.174 & 0.160 & 0.120
& 0.151 {\scriptsize $\pm$ 0.007}
\\
& $\mathrm{NC}^{\mathrm{obs}}\uparrow$
& 0.994 & 0.981 & 0.991 & 0.985 & 0.995
& 0.989 {\scriptsize $\pm$ 0.0006}
& 0.906 & 0.858 & 0.824 & 0.879 & 0.881
& 0.870 {\scriptsize $\pm$ 0.0003}
\\[1mm]

\multirow{2}{*}{NDF-vis}
& $\mathrm{CD}_1^{\mathrm{obs}}\downarrow$
& 0.380 & 0.334 & 0.618 & 0.743 & 0.238
& 0.463 {\scriptsize $\pm$ 0.054}
& 2.027 & 1.259 & 2.629 & 3.222 & 1.586
& 2.144 {\scriptsize $\pm$ 0.268}
\\
& $\mathrm{NC}^{\mathrm{obs}}\uparrow$
& 0.943 & 0.886 & 0.893 & 0.890 & 0.966
& 0.916 {\scriptsize $\pm$ 0.0027}
& 0.851 & 0.818 & 0.808 & 0.815 & 0.868
& 0.832 {\scriptsize $\pm$ 0.0090}
\\[1mm]

\multirow{2}{*}{Neural-Pull}
& $\mathrm{CD}_1^{\mathrm{obs}}\downarrow$
& 0.071 & 0.246 & 0.088 & 0.085 & 0.051
& 0.108 {\scriptsize $\pm$ 0.017}
& 0.802 & 1.865 & 0.685 & 0.970 & 1.160
& 1.096 {\scriptsize $\pm$ 0.128}
\\
& $\mathrm{NC}^{\mathrm{obs}}\uparrow$
& 0.994 & 0.880 & 0.976 & 0.981 & 0.992
& 0.964 {\scriptsize $\pm$ 0.005}
& 0.896 & 0.841 & 0.865 & 0.811 & 0.865
& 0.855 {\scriptsize $\pm$ 0.014}
\\[1mm]

\multirow{2}{*}{DiffCD}
& $\mathrm{CD}_1^{\mathrm{obs}}\downarrow$
& \textbf{0.065} & 0.089 & \textbf{0.060} & 0.075 & 0.049
& 0.068 {\scriptsize $\pm$ 0.0016}
& 1.649 & 2.211 & 1.332 & 1.682 & 1.199
& 1.614 {\scriptsize $\pm$ 0.0074}
\\
& $\mathrm{NC}^{\mathrm{obs}}\uparrow$
& \textbf{0.997} & 0.968 & 0.994 & 0.987 & 0.994
& 0.988 {\scriptsize $\pm$ 0.0013}
& 0.924 & 0.871 & 0.902 & 0.901 & 0.915
& 0.903 {\scriptsize $\pm$ 0.0014}
\\[1mm]

\multirow{2}{*}{SAL}
& $\mathrm{CD}_1^{\mathrm{obs}}\downarrow$
& 0.553 & 0.576 & 0.547 & 0.564 & 0.320
& 0.512 {\scriptsize $\pm$ 0.054}
& 5.855 & 1.760 & 2.417 & 3.983 & 1.780
& 3.159 {\scriptsize $\pm$ 2.164}
\\
& $\mathrm{NC}^{\mathrm{obs}}\uparrow$
& 0.931 & 0.883 & 0.910 & 0.907 & 0.964
& 0.919 {\scriptsize $\pm$ 0.0073}
& 0.747 & 0.745 & 0.759 & 0.723 & 0.806
& 0.756 {\scriptsize $\pm$ 0.0343}
\\[1mm]

\multirow{2}{*}{DiGS}
& $\mathrm{CD}_1^{\mathrm{obs}}\downarrow$
& 0.080 & 0.062 & 0.067 & 0.082 & 0.057
& 0.069 {\scriptsize $\pm$ 0.0019}
& 1.762 & 2.316 & 1.322 & 1.596 & 1.257
& 1.650 {\scriptsize $\pm$ 0.0175}
\\
& $\mathrm{NC}^{\mathrm{obs}}\uparrow$
& 0.995 & \textbf{0.989} & \textbf{0.996} & \textbf{0.991} & \textbf{0.997}
& \textbf{0.994} {\scriptsize $\pm$ 0.0003}
& 0.902 & 0.890 & 0.897 & 0.893 & 0.904
& 0.897 {\scriptsize $\pm$ 0.0014}
\\

\midrule
\multicolumn{14}{l}{\textit{Classical}} \\

\multirow{2}{*}{SPSR}
& $\mathrm{CD}_1^{\mathrm{obs}}\downarrow$
& 0.096 & 0.141 & 0.101 & 0.117 & 0.067
& 0.104
& 0.262 & 0.682 & 0.234 & 0.324 & 0.109
& 0.322
\\
& $\mathrm{NC}^{\mathrm{obs}}\uparrow$
& 0.986 & 0.945 & 0.980 & 0.970 & 0.989
& 0.974
& 0.983 & 0.934 & 0.977 & 0.968 & 0.988
& 0.970
\\[1mm]

\multirow{2}{*}{Labatut}
& $\mathrm{CD}_1^{\mathrm{obs}}\downarrow$
& 0.068 & 0.072 & 0.069 & 0.080 & 0.050
& 0.068
& 0.081 & 0.094 & 0.075 & 0.087 & 0.057
& 0.079
\\
& $\mathrm{NC}^{\mathrm{obs}}\uparrow$
& 0.988 & 0.948 & 0.980 & 0.974 & 0.987
& 0.975
& 0.975 & 0.931 & 0.969 & 0.963 & 0.976
& 0.963
\\

\bottomrule
\end{tabular}%
}
\end{table}
\subsection{Ray-aligned outliers}
\label{sec:ray-aligned-outliers}

We additionally corrupt the measured range while preserving each acquisition
ray direction. Table~\ref{tab:srb_ro0p1} reports results for $10\%$ corrupted rays.

\begin{table}[H]
\caption{\textbf{SRB with ray-aligned outliers.}
Left: reconstruction metrics with $10\%$ of rays having corrupted range.
Right: qualitative results on Gargoyle.}
\label{tab:srb_ro0p1}
\centering

\begin{minipage}[c]{0.67\linewidth}
\centering
\scriptsize
\setlength{\tabcolsep}{2.2pt}
\renewcommand{\arraystretch}{1.03}

\resizebox{\linewidth}{!}{%
\begin{tabular}{lcccccc}
\toprule
\textbf{Method}
& \textbf{Anchor}
& \textbf{Daratech}
& \textbf{DC}
& \textbf{Gargoyle}
& \textbf{Lord Quas}
& \textbf{Mean} \\
\midrule

\multicolumn{7}{l}{\textit{$\mathrm{CD}_1\downarrow$}} \\
Labatut
& 1.166 & 0.213 & 0.107 & 0.353 & 0.082 & 0.384 \\
Ours ($L=0$)
& 0.264 & 0.271 & 0.109 & 0.161 & 0.078 & 0.177 \\
Ours ($L=2$)
& \textbf{0.206} & \textbf{0.177} & \textbf{0.099}
& \textbf{0.134} & \textbf{0.068} & \textbf{0.137} \\

\midrule
\multicolumn{7}{l}{\textit{$\mathrm{NC}\uparrow$}} \\
Labatut
& 0.948 & 0.894 & 0.955 & 0.948 & 0.966 & 0.942 \\
Ours ($L=0$)
& 0.966 & 0.911 & 0.972 & 0.943 & 0.981 & 0.954 \\
Ours ($L=2$)
& \textbf{0.979} & \textbf{0.931} & \textbf{0.974}
& \textbf{0.956} & \textbf{0.988} & \textbf{0.965} \\

\bottomrule
\end{tabular}%
}
\end{minipage}
\hfill
\begin{minipage}[c]{0.30\linewidth}
\centering
\includegraphics[width=\linewidth]{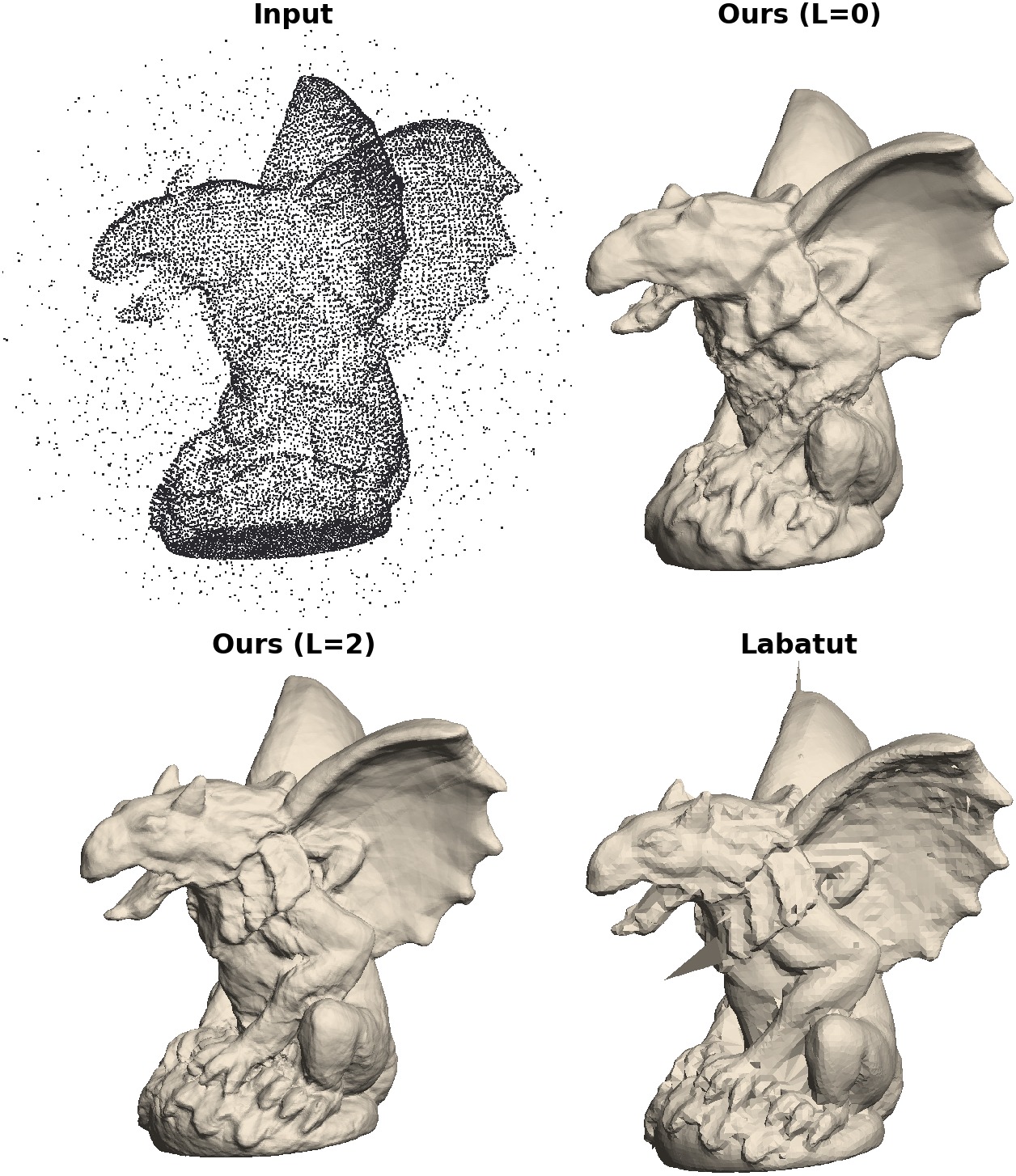}
\end{minipage}
\end{table}

\subsection{ShapeNet extended results}
\label{app:shapenet-extended}


\begin{table}[H]
\caption{ShapeNet category-wise \(\mathrm{CD}_2\)  under isotropic and
ray-aligned Gaussian noise with 3K input points. Best neural implicit results
are shown in bold.}
\label{tab:shapenet-categories-cd2}
\centering
\footnotesize
\setlength{\tabcolsep}{3pt}
\renewcommand{\arraystretch}{1.05}

\resizebox{\textwidth}{!}{%
\begin{tabular}{llccccccccc}
\toprule
\textbf{Category} & \textbf{Noise}
& \textbf{Ours ($L=0$)} & \textbf{Ours ($L=6$)}
& \textbf{NDF-vis} & \textbf{DiffCD}
& \textbf{N.-P} & \textbf{SAL} & \textbf{DiGS}
& \textbf{SPSR} & \textbf{Labat.} \\
\midrule

Airplane & Iso.
& $\boldsymbol{3.72\mathrm{e}^{-5}}$ & $5.16\mathrm{e}^{-5}$
& $4.32\mathrm{e}^{-4}$ & $5.57\mathrm{e}^{-5}$
& $1.75\mathrm{e}^{-4}$ & $7.70\mathrm{e}^{-3}$ & $1.07\mathrm{e}^{-4}$
& $9.66\mathrm{e}^{-5}$ & $8.93\mathrm{e}^{-5}$ \\
& Ray
& $\boldsymbol{2.91\mathrm{e}^{-5}}$ & $3.78\mathrm{e}^{-5}$
& $1.18\mathrm{e}^{-3}$ & $4.30\mathrm{e}^{-5}$
& $1.12\mathrm{e}^{-3}$ & $6.81\mathrm{e}^{-3}$ & $1.24\mathrm{e}^{-4}$
& $9.36\mathrm{e}^{-5}$ & $1.25\mathrm{e}^{-4}$ \\
\addlinespace

Bench & Iso.
& $1.39\mathrm{e}^{-4}$ & $\boldsymbol{1.33\mathrm{e}^{-4}}$
& $3.09\mathrm{e}^{-4}$ & $1.74\mathrm{e}^{-4}$
& $1.75\mathrm{e}^{-3}$ & $1.93\mathrm{e}^{-3}$ & $2.52\mathrm{e}^{-4}$
& $1.08\mathrm{e}^{-2}$ & $1.73\mathrm{e}^{-4}$ \\
& Ray
& $1.59\mathrm{e}^{-4}$ & $\boldsymbol{1.08\mathrm{e}^{-4}}$
& $3.43\mathrm{e}^{-4}$ & $1.65\mathrm{e}^{-4}$
& $5.29\mathrm{e}^{-4}$ & $1.15\mathrm{e}^{-2}$ & $2.13\mathrm{e}^{-4}$
& $1.11\mathrm{e}^{-2}$ & $2.11\mathrm{e}^{-4}$ \\
\addlinespace

Cabinet & Iso.
& $9.73\mathrm{e}^{-4}$ & $\boldsymbol{8.52\mathrm{e}^{-4}}$
& $1.27\mathrm{e}^{-3}$ & $1.12\mathrm{e}^{-3}$
& $1.79\mathrm{e}^{-3}$ & $3.69\mathrm{e}^{-3}$ & $1.41\mathrm{e}^{-3}$
& $3.22\mathrm{e}^{-3}$ & $9.58\mathrm{e}^{-4}$ \\
& Ray
& $9.40\mathrm{e}^{-4}$ & $\boldsymbol{7.55\mathrm{e}^{-4}}$
& $1.51\mathrm{e}^{-3}$ & $1.12\mathrm{e}^{-3}$
& $1.82\mathrm{e}^{-3}$ & $4.06\mathrm{e}^{-3}$ & $1.39\mathrm{e}^{-3}$
& $3.10\mathrm{e}^{-3}$ & $9.85\mathrm{e}^{-4}$ \\
\addlinespace

Car & Iso.
& $3.75\mathrm{e}^{-4}$ & $3.72\mathrm{e}^{-4}$
& $7.06\mathrm{e}^{-4}$ & $\boldsymbol{3.11\mathrm{e}^{-4}}$
& $5.38\mathrm{e}^{-4}$ & $7.37\mathrm{e}^{-4}$ & $3.85\mathrm{e}^{-4}$
& $5.56\mathrm{e}^{-4}$ & $3.34\mathrm{e}^{-4}$ \\
& Ray
& $3.69\mathrm{e}^{-4}$ & $3.89\mathrm{e}^{-4}$
& $9.39\mathrm{e}^{-4}$ & $\boldsymbol{3.05\mathrm{e}^{-4}}$
& $7.29\mathrm{e}^{-4}$ & $7.85\mathrm{e}^{-4}$ & $3.16\mathrm{e}^{-4}$
& $5.54\mathrm{e}^{-4}$ & $3.57\mathrm{e}^{-4}$ \\
\addlinespace

Chair & Iso.
& $3.30\mathrm{e}^{-4}$ & $\boldsymbol{2.55\mathrm{e}^{-4}}$
& $8.58\mathrm{e}^{-4}$ & $3.73\mathrm{e}^{-4}$
& $3.13\mathrm{e}^{-3}$ & $9.42\mathrm{e}^{-3}$ & $2.50\mathrm{e}^{-3}$
& $9.73\mathrm{e}^{-3}$ & $8.97\mathrm{e}^{-4}$ \\
& Ray
& $3.48\mathrm{e}^{-4}$ & $\boldsymbol{2.18\mathrm{e}^{-4}}$
& $7.67\mathrm{e}^{-4}$ & $3.50\mathrm{e}^{-4}$
& $2.86\mathrm{e}^{-3}$ & $8.81\mathrm{e}^{-3}$ & $3.00\mathrm{e}^{-3}$
& $9.27\mathrm{e}^{-3}$ & $6.79\mathrm{e}^{-4}$ \\
\addlinespace

Display & Iso.
& $6.76\mathrm{e}^{-5}$ & $\boldsymbol{5.10\mathrm{e}^{-5}}$
& $1.78\mathrm{e}^{-4}$ & $8.09\mathrm{e}^{-5}$
& $5.23\mathrm{e}^{-4}$ & $2.79\mathrm{e}^{-3}$ & $2.44\mathrm{e}^{-4}$
& $1.16\mathrm{e}^{-2}$ & $7.96\mathrm{e}^{-5}$ \\
& Ray
& $\boldsymbol{4.43\mathrm{e}^{-5}}$ & $4.71\mathrm{e}^{-5}$
& $2.06\mathrm{e}^{-4}$ & $7.65\mathrm{e}^{-5}$
& $5.70\mathrm{e}^{-4}$ & $2.27\mathrm{e}^{-3}$ & $1.51\mathrm{e}^{-4}$
& $1.17\mathrm{e}^{-2}$ & $8.62\mathrm{e}^{-5}$ \\
\addlinespace

Lamp & Iso.
& $\boldsymbol{1.45\mathrm{e}^{-4}}$ & $1.94\mathrm{e}^{-4}$
& $6.01\mathrm{e}^{-4}$ & $1.73\mathrm{e}^{-4}$
& $3.04\mathrm{e}^{-3}$ & $3.90\mathrm{e}^{-2}$ & $6.60\mathrm{e}^{-4}$
& $1.86\mathrm{e}^{-3}$ & $2.11\mathrm{e}^{-4}$ \\
& Ray
& $\boldsymbol{1.30\mathrm{e}^{-4}}$ & $2.02\mathrm{e}^{-4}$
& $4.85\mathrm{e}^{-4}$ & $1.99\mathrm{e}^{-4}$
& $7.13\mathrm{e}^{-3}$ & $1.29\mathrm{e}^{-2}$ & $5.35\mathrm{e}^{-4}$
& $1.74\mathrm{e}^{-3}$ & $2.26\mathrm{e}^{-4}$ \\
\addlinespace

Loudspeaker & Iso.
& $4.13\mathrm{e}^{-4}$ & $\boldsymbol{4.09\mathrm{e}^{-4}}$
& $5.22\mathrm{e}^{-4}$ & $4.11\mathrm{e}^{-4}$
& $5.09\mathrm{e}^{-4}$ & $6.34\mathrm{e}^{-4}$ & $6.54\mathrm{e}^{-4}$
& $6.55\mathrm{e}^{-4}$ & $4.35\mathrm{e}^{-4}$ \\
& Ray
& $4.06\mathrm{e}^{-4}$ & $\boldsymbol{4.02\mathrm{e}^{-4}}$
& $5.66\mathrm{e}^{-4}$ & $4.05\mathrm{e}^{-4}$
& $6.03\mathrm{e}^{-4}$ & $5.50\mathrm{e}^{-4}$ & $5.41\mathrm{e}^{-4}$
& $6.36\mathrm{e}^{-4}$ & $3.89\mathrm{e}^{-4}$ \\
\addlinespace

Rifle & Iso.
& $5.12\mathrm{e}^{-5}$ & $1.17\mathrm{e}^{-4}$
& $4.15\mathrm{e}^{-4}$ & $\boldsymbol{4.13\mathrm{e}^{-5}}$
& $7.23\mathrm{e}^{-5}$ & $1.62\mathrm{e}^{-3}$ & $9.39\mathrm{e}^{-5}$
& $4.62\mathrm{e}^{-5}$ & $5.56\mathrm{e}^{-5}$ \\
& Ray
& $4.08\mathrm{e}^{-5}$ & $9.71\mathrm{e}^{-5}$
& $7.06\mathrm{e}^{-4}$ & $\boldsymbol{3.50\mathrm{e}^{-5}}$
& $1.35\mathrm{e}^{-4}$ & $7.21\mathrm{e}^{-3}$ & $9.50\mathrm{e}^{-5}$
& $4.67\mathrm{e}^{-5}$ & $5.15\mathrm{e}^{-5}$ \\
\addlinespace

Sofa & Iso.
& $\boldsymbol{2.38\mathrm{e}^{-4}}$ & $3.46\mathrm{e}^{-4}$
& $5.47\mathrm{e}^{-4}$ & $3.43\mathrm{e}^{-4}$
& $6.38\mathrm{e}^{-4}$ & $2.36\mathrm{e}^{-3}$ & $3.97\mathrm{e}^{-4}$
& $1.29\mathrm{e}^{-3}$ & $4.01\mathrm{e}^{-4}$ \\
& Ray
& $\boldsymbol{2.28\mathrm{e}^{-4}}$ & $3.15\mathrm{e}^{-4}$
& $4.64\mathrm{e}^{-4}$ & $3.20\mathrm{e}^{-4}$
& $1.75\mathrm{e}^{-3}$ & $2.09\mathrm{e}^{-3}$ & $3.72\mathrm{e}^{-4}$
& $1.33\mathrm{e}^{-3}$ & $4.24\mathrm{e}^{-4}$ \\
\addlinespace

Table & Iso.
& $7.82\mathrm{e}^{-5}$ & $\boldsymbol{4.73\mathrm{e}^{-5}}$
& $6.11\mathrm{e}^{-4}$ & $6.91\mathrm{e}^{-5}$
& $6.05\mathrm{e}^{-4}$ & $1.54\mathrm{e}^{-2}$ & $2.74\mathrm{e}^{-4}$
& $8.29\mathrm{e}^{-3}$ & $1.55\mathrm{e}^{-4}$ \\
& Ray
& $5.48\mathrm{e}^{-5}$ & $\boldsymbol{3.09\mathrm{e}^{-5}}$
& $5.68\mathrm{e}^{-4}$ & $4.27\mathrm{e}^{-5}$
& $1.14\mathrm{e}^{-3}$ & $1.18\mathrm{e}^{-1}$ & $2.08\mathrm{e}^{-4}$
& $8.31\mathrm{e}^{-3}$ & $2.21\mathrm{e}^{-4}$ \\
\addlinespace

Telephone & Iso.
& $\boldsymbol{2.74\mathrm{e}^{-5}}$ & $4.48\mathrm{e}^{-5}$
& $1.30\mathrm{e}^{-4}$ & $6.54\mathrm{e}^{-5}$
& $1.52\mathrm{e}^{-4}$ & $7.97\mathrm{e}^{-2}$ & $2.04\mathrm{e}^{-4}$
& $1.10\mathrm{e}^{-2}$ & $4.10\mathrm{e}^{-5}$ \\
& Ray
& $\boldsymbol{1.66\mathrm{e}^{-5}}$ & $2.47\mathrm{e}^{-5}$
& $6.52\mathrm{e}^{-5}$ & $4.89\mathrm{e}^{-5}$
& $3.00\mathrm{e}^{-4}$ & $1.75\mathrm{e}^{-2}$ & $1.65\mathrm{e}^{-4}$
& $9.84\mathrm{e}^{-3}$ & $4.21\mathrm{e}^{-5}$ \\
\addlinespace

Watercraft & Iso.
& $4.78\mathrm{e}^{-4}$ & $3.34\mathrm{e}^{-4}$
& $5.81\mathrm{e}^{-4}$ & $\boldsymbol{7.21\mathrm{e}^{-5}}$
& $2.60\mathrm{e}^{-4}$ & $2.34\mathrm{e}^{-3}$ & $2.62\mathrm{e}^{-4}$
& $7.80\mathrm{e}^{-4}$ & $1.27\mathrm{e}^{-4}$ \\
& Ray
& $1.33\mathrm{e}^{-4}$ & $2.11\mathrm{e}^{-4}$
& $8.56\mathrm{e}^{-4}$ & $\boldsymbol{6.62\mathrm{e}^{-5}}$
& $5.86\mathrm{e}^{-4}$ & $9.79\mathrm{e}^{-4}$ & $2.15\mathrm{e}^{-4}$
& $7.41\mathrm{e}^{-4}$ & $1.25\mathrm{e}^{-4}$ \\

\bottomrule
\end{tabular}%
}
\end{table}


\begin{table}[H]
\caption{ShapeNet category-wise normal consistency under isotropic and
ray-aligned Gaussian noise with 3K input points. Best neural implicit results
are shown in bold.}
\label{tab:shapenet-categories-nc}

\centering
\footnotesize
\setlength{\tabcolsep}{3pt}
\renewcommand{\arraystretch}{1.05}

\resizebox{\textwidth}{!}{%
\begin{tabular}{llccccccccc}
\toprule
\textbf{Category} & \textbf{Noise}
& \textbf{Ours ($L=0$)} & \textbf{Ours ($L=6$)}
& \textbf{NDF-vis} & \textbf{DiffCD}
& \textbf{N.-P} & \textbf{SAL} & \textbf{DiGS}
& \textbf{SPSR} & \textbf{Labat.} \\
\midrule

Airplane & Iso.
& \textbf{0.90} & 0.87 & 0.86 & 0.82 & 0.73 & 0.79 & 0.62 & 0.85 & 0.73 \\
& Ray
& \textbf{0.92} & 0.90 & 0.84 & 0.87 & 0.72 & 0.79 & 0.66 & 0.84 & 0.75 \\
\addlinespace

Bench & Iso.
& 0.88 & 0.88 & 0.88 & 0.83 & 0.75 & \textbf{0.89} & 0.65 & 0.76 & 0.77 \\
& Ray
& \textbf{0.88} & 0.88 & 0.88 & 0.86 & 0.78 & 0.78 & 0.68 & 0.76 & 0.79 \\
\addlinespace

Cabinet & Iso.
& 0.86 & \textbf{0.87} & 0.83 & 0.84 & 0.76 & 0.83 & 0.66 & 0.81 & 0.79 \\
& Ray
& 0.87 & \textbf{0.87} & 0.83 & 0.85 & 0.78 & 0.83 & 0.68 & 0.81 & 0.80 \\
\addlinespace

Car & Iso.
& \textbf{0.89} & 0.88 & 0.89 & 0.81 & 0.80 & 0.87 & 0.66 & 0.86 & 0.80 \\
& Ray
& \textbf{0.90} & 0.89 & 0.88 & 0.84 & 0.83 & 0.87 & 0.72 & 0.87 & 0.82 \\
\addlinespace

Chair & Iso.
& \textbf{0.90} & 0.90 & 0.89 & 0.89 & 0.69 & 0.83 & 0.70 & 0.79 & 0.81 \\
& Ray
& 0.91 & \textbf{0.91} & 0.89 & 0.91 & 0.75 & 0.84 & 0.73 & 0.79 & 0.82 \\
\addlinespace

Display & Iso.
& 0.95 & \textbf{0.96} & 0.95 & 0.89 & 0.84 & 0.94 & 0.72 & 0.86 & 0.86 \\
& Ray
& \textbf{0.96} & 0.96 & 0.96 & 0.92 & 0.86 & 0.94 & 0.77 & 0.86 & 0.87 \\
\addlinespace

Lamp & Iso.
& \textbf{0.89} & 0.87 & 0.88 & 0.83 & 0.73 & 0.84 & 0.67 & 0.85 & 0.77 \\
& Ray
& \textbf{0.91} & 0.90 & 0.89 & 0.87 & 0.76 & 0.87 & 0.68 & 0.86 & 0.78 \\
\addlinespace

Loudspeaker & Iso.
& 0.95 & 0.94 & \textbf{0.95} & 0.88 & 0.87 & 0.94 & 0.71 & 0.92 & 0.87 \\
& Ray
& \textbf{0.96} & 0.95 & 0.96 & 0.91 & 0.89 & 0.95 & 0.76 & 0.92 & 0.90 \\
\addlinespace

Rifle & Iso.
& \textbf{0.88} & 0.85 & 0.80 & 0.76 & 0.74 & 0.85 & 0.68 & 0.86 & 0.71 \\
& Ray
& \textbf{0.89} & 0.88 & 0.79 & 0.81 & 0.74 & 0.83 & 0.65 & 0.87 & 0.74 \\
\addlinespace

Sofa & Iso.
& 0.91 & 0.91 & \textbf{0.92} & 0.86 & 0.80 & 0.87 & 0.72 & 0.86 & 0.81 \\
& Ray
& 0.92 & 0.91 & \textbf{0.93} & 0.90 & 0.80 & 0.88 & 0.75 & 0.87 & 0.83 \\
\addlinespace

Table & Iso.
& 0.91 & \textbf{0.93} & 0.90 & 0.89 & 0.69 & 0.69 & 0.67 & 0.79 & 0.78 \\
& Ray
& 0.92 & \textbf{0.95} & 0.90 & 0.93 & 0.75 & 0.73 & 0.73 & 0.79 & 0.80 \\
\addlinespace

Telephone & Iso.
& \textbf{0.97} & 0.96 & 0.96 & 0.90 & 0.89 & 0.85 & 0.68 & 0.85 & 0.86 \\
& Ray
& \textbf{0.97} & 0.97 & 0.97 & 0.93 & 0.89 & 0.87 & 0.72 & 0.85 & 0.87 \\
\addlinespace

Watercraft & Iso.
& \textbf{0.90} & 0.89 & 0.90 & 0.83 & 0.77 & 0.85 & 0.67 & 0.86 & 0.77 \\
& Ray
& \textbf{0.92} & 0.90 & 0.89 & 0.88 & 0.82 & 0.89 & 0.68 & 0.87 & 0.80 \\

\bottomrule
\end{tabular}%
}
\end{table}


\begin{table}[H]
\caption{ShapeNet category-wise IoU under isotropic and ray-aligned Gaussian
noise with 3K input points. Best neural implicit results are shown in bold.}
\label{tab:shapenet-categories-iou}

\centering
\footnotesize
\setlength{\tabcolsep}{3pt}
\renewcommand{\arraystretch}{1.05}

\resizebox{\textwidth}{!}{%
\begin{tabular}{llccccccccc}
\toprule
\textbf{Category} & \textbf{Noise}
& \textbf{Ours ($L=0$)} & \textbf{Ours ($L=6$)}
& \textbf{NDF-vis} & \textbf{DiffCD}
& \textbf{N.-P} & \textbf{SAL} & \textbf{DiGS}
& \textbf{SPSR} & \textbf{Labat.} \\
\midrule

Airplane & Iso.
& \textbf{0.82} & 0.79 & 0.57 & 0.75 & 0.63 & 0.57 & 0.23 & 0.74 & 0.73 \\
& Ray
& \textbf{0.86} & 0.83 & 0.43 & 0.79 & 0.44 & 0.60 & 0.52 & 0.75 & 0.69 \\
\addlinespace

Bench & Iso.
& \textbf{0.82} & 0.80 & 0.70 & 0.62 & 0.58 & 0.75 & 0.38 & 0.40 & 0.76 \\
& Ray
& \textbf{0.83} & 0.82 & 0.70 & 0.69 & 0.63 & 0.47 & 0.50 & 0.41 & 0.77 \\
\addlinespace

Cabinet & Iso.
& 0.82 & \textbf{0.85} & 0.76 & 0.74 & 0.71 & 0.71 & 0.58 & 0.66 & 0.81 \\
& Ray
& 0.84 & \textbf{0.85} & 0.74 & 0.74 & 0.71 & 0.69 & 0.62 & 0.66 & 0.81 \\
\addlinespace

Car & Iso.
& \textbf{0.89} & 0.88 & 0.81 & 0.85 & 0.81 & 0.82 & 0.57 & 0.86 & 0.87 \\
& Ray
& \textbf{0.90} & 0.89 & 0.82 & 0.87 & 0.83 & 0.83 & 0.78 & 0.86 & 0.89 \\
\addlinespace

Chair & Iso.
& 0.76 & \textbf{0.77} & 0.61 & 0.57 & 0.35 & 0.45 & 0.44 & 0.37 & 0.70 \\
& Ray
& 0.77 & \textbf{0.80} & 0.62 & 0.59 & 0.48 & 0.47 & 0.47 & 0.38 & 0.71 \\
\addlinespace

Display & Iso.
& \textbf{0.91} & 0.91 & 0.86 & 0.86 & 0.68 & 0.81 & 0.71 & 0.61 & 0.88 \\
& Ray
& \textbf{0.94} & 0.92 & 0.87 & 0.86 & 0.77 & 0.85 & 0.81 & 0.60 & 0.90 \\
\addlinespace

Lamp & Iso.
& \textbf{0.79} & 0.74 & 0.60 & 0.72 & 0.43 & 0.48 & 0.33 & 0.61 & 0.74 \\
& Ray
& \textbf{0.82} & 0.78 & 0.61 & 0.68 & 0.36 & 0.54 & 0.39 & 0.62 & 0.75 \\
\addlinespace

Loudspeaker & Iso.
& \textbf{0.94} & 0.93 & 0.92 & 0.92 & 0.91 & 0.89 & 0.73 & 0.91 & 0.93 \\
& Ray
& \textbf{0.95} & 0.94 & 0.92 & 0.93 & 0.92 & 0.91 & 0.81 & 0.91 & 0.94 \\
\addlinespace

Rifle & Iso.
& \textbf{0.82} & 0.75 & 0.57 & 0.75 & 0.71 & 0.65 & 0.23 & 0.79 & 0.76 \\
& Ray
& \textbf{0.84} & 0.80 & 0.48 & 0.78 & 0.59 & 0.56 & 0.31 & 0.79 & 0.74 \\
\addlinespace

Sofa & Iso.
& \textbf{0.84} & 0.83 & 0.77 & 0.73 & 0.69 & 0.68 & 0.65 & 0.68 & 0.78 \\
& Ray
& \textbf{0.85} & 0.84 & 0.79 & 0.78 & 0.59 & 0.70 & 0.69 & 0.68 & 0.80 \\
\addlinespace

Table & Iso.
& 0.75 & \textbf{0.79} & 0.60 & 0.64 & 0.35 & 0.19 & 0.40 & 0.40 & 0.63 \\
& Ray
& 0.79 & \textbf{0.85} & 0.56 & 0.73 & 0.36 & 0.21 & 0.52 & 0.40 & 0.64 \\
\addlinespace

Telephone & Iso.
& \textbf{0.92} & 0.89 & 0.83 & 0.84 & 0.79 & 0.64 & 0.54 & 0.49 & 0.86 \\
& Ray
& \textbf{0.93} & 0.92 & 0.88 & 0.86 & 0.76 & 0.64 & 0.66 & 0.50 & 0.87 \\
\addlinespace

Watercraft & Iso.
& \textbf{0.79} & 0.78 & 0.72 & 0.76 & 0.73 & 0.59 & 0.29 & 0.71 & 0.79 \\
& Ray
& \textbf{0.83} & 0.82 & 0.70 & 0.81 & 0.74 & 0.63 & 0.53 & 0.73 & 0.81 \\

\bottomrule
\end{tabular}%
}
\end{table}

\end{document}